\theoremstyle{plain}
\newtheorem{theorem}{Theorem}[section]
\newtheorem{lemma}[theorem]{Lemma}
\theoremstyle{definition}
\newtheorem{definition}[theorem]{Definition}
\theoremstyle{remark}
\newtheorem{remark}[theorem]{Remark}
\title{Forgetting, Ignorance or Myopia: Revisiting Key Challenges in Online Continual Learning}
\author{
Xinrui Wang$^{1,2}$ \quad Chuanxing Geng$^{1,2}$ \quad Wenhai Wan$^{3}$ \quad Shao-yuan Li$^{1,2}$\quad Songcan Chen$^{1,2}$\footnotemark[2]\\
\small $^{1}$College of Computer Science and Technology, Nanjing University of Aeronautics and Astronautics\\
\small $^{2}$MIIT Key Laboratory of Pattern Analysis and Machine Intelligence\\
\small $^{3}$ School of Computer Science and Technology, Huazhong University of Science and Technology
}
\begin{document}
\bibliographystyle{abbrv}

\maketitle
\renewcommand{\thefootnote}{\fnsymbol{footnote}}
\footnotetext[2]{Corresponding author. Code link: \href{https://github.com/wxr99/Forgetting-Ignorance-or-Myopia-Revisiting-Key-Challenges-in-Online-Continual-Learning}{https://github.com/wxr99/Forgetting-Ignorance-or-Myopia-Revisiting-Key-Challenges-in-Online-Continual-Learning}}
\renewcommand{\thefootnote}{\arabic{footnote}}


\begin{abstract}
  Online continual learning (OCL) requires the models to learn from constant, endless streams of data. While significant efforts have been made in this field, most were focused on mitigating the \textit{catastrophic forgetting} issue to achieve better classification ability, at the cost of a much heavier training workload. They overlooked that in real-world scenarios, e.g., in high-speed data stream environments, data do not pause to accommodate slow models. In this paper, we emphasize that \textit{model throughput}-- defined as the maximum number of training samples that a model can process within a unit of time --  is equally important. It directly limits how much data a model can utilize and presents a challenging dilemma for current methods. With this understanding, we revisit key challenges in OCL from both empirical and theoretical perspectives, highlighting two critical issues beyond the well-documented catastrophic forgetting: (\romannumeral1) Model's ignorance: the single-pass nature of OCL challenges models to learn effective features within constrained training time and storage capacity, leading to a trade-off between effective learning and model throughput; (\romannumeral2) Model's myopia: the local learning nature of OCL on the current task leads the model to adopt overly simplified, task-specific features and \textit{excessively sparse classifier}, resulting in the gap between the optimal solution for the current task and the global objective. To tackle these issues, we propose the Non-sparse Classifier Evolution framework (NsCE) to facilitate effective global discriminative feature learning with minimal time cost. NsCE integrates non-sparse maximum separation regularization and targeted experience replay techniques with the help of pre-trained models, enabling rapid acquisition of new globally discriminative features. Extensive experiments demonstrate the substantial improvements of our framework in performance, throughput and real-world practicality. 
  

\end{abstract}

\section{Introduction}
\label{introduction}
Online continual learning (OCL) is the learning paradigm that enables models to learn continuously from a dynamic data stream $\mathfrak{D}=\left\{\mathcal{D}_1, \mathcal{D}_2, \ldots, \mathcal{D}_t, \ldots \right\} $, where $\mathcal{D}_t = \left\{ x_{i}, y_{i} \right\} ^{N_t}_{i=1} $ is the dataset of task $t$ sampled from distribution $\mu_t$. Existing OCL methods are designed to promote effective learning by mitigating catastrophic forgetting and improving model plasticity through various techniques like gradient regularization\cite{kirkpatrick2017overcoming}, contrastive learning\cite{guo2022online}, experience replay\cite{chaudhry2019tiny, buzzega2020dark} and knowledge distillation\cite{agrawal2024taming}. However, these methods often fail to consider the assessment of model throughput, an essential metric especially crucial for managing data streams with varying arrival speeds. 


According to \cite{agrawal2024taming,zhou2023theoretical}, a \textit{model's throughput} is defined as the maximum number of training sample data that a model can process within a unit of time. When the training speed of the model is slower than the speed that data stream arrives, the model is forced to discard some training data which wastes data and harms model's performance. Furthermore, maintaining a real-time accessible memory buffer, as required by current OCL methods, also proves challenging in real-world applications\cite{jung2018driving, ko2010wireless}. Under these practical concerns, a reexamination of the challenges in OCL from both empirical and theoretical perspectives is in urgent need. In this paper, we reveal that \textbf{model's ignorance} caused by single pass nature of data streams and \textbf{model's myopia} resulting from the continuous arrival of tasks may be more impactful than the well-documented catastrophic forgetting phenomenon.


\textbf{Model's ignorance.} Our first focus lies in whether models can acquire sufficient discriminative features within the limited time during the single-pass training. To independently study this challenge and avoid other issues like catastrophic forgetting and task interference or collision caused by the continuous arrival of tasks, we introduce a relaxed setting for streaming data called \textbf{single task setting}. The data stream is sampled from a unified task, allowing data from any category to appear at any moment with equal probability. Under this controlled setting, we have observed that, 1) models trained from scratch under-perform significantly compared to expectations. The single-pass nature of OCL inhibits the model’s ability to fully harness the semantic information from the data stream, a phenomenon we term as \textbf{model's ignorance}; 2) It is also noticed that existing strategies like contrastive learning and knowledge distillation to mitigate this issue significantly increase the training time of the model, consequently reducing its throughput.

\textbf{Model's myopia.} Even if models can quickly achieve decent performance on individual tasks, performance degradation remains a persistent issue for existing OCL models. Previous studies often attribute this to the model forgetting previously learned information. But in this paper, we propose a different perspective. 
We observed that, in OCL training, the model initially acquires perfect classification accuracy for a specific class, e.g., "car". However, there arrives a critical moment when the model becomes completely confused, mistakenly identifying a "car" as a newly introduced class (e.g., "truck"). We believe this confusion cannot solely be attributed to the model forgetting previous learned knowledge, as \textit{the forgetting process should be gradual rather than abrupt}. Besides, as training progresses, the parameters in the final layer of the model's classifier become increasingly sparse. The emergence of such an \textit{excessively sparse classifier} causes the model to focus on few discriminative features specialized for the current task. When the model is exposed to only a limited range of categories, this narrow focus on the current task restricts its capability to acquire features with broader discriminative power. We term this limitation \textbf{model's myopia}. 

In addition to empirical verification, we adopt the Pac-Bayes theorem to provide insights into the dilemma between effective learning and model throughput. The upper bound of expected risk summation can be segmented into three terms correlated to empirical risk, model throughput and task divergence respectively. Our theory places a particular emphasis on model throughput, which has been long overlooked in the context of OCL. To the best of our knowledge, this is the first attempt to provide theoretical insights into the relationship between model throughput and performance in this area. Given that model needs to adapt to varying data flow rates to ensure its performance, this factor also warrants recognition in theoretical discussions. Plus, interestingly, model's myopia and forgetting can be perceived as two complementary aspects of the proposed task divergence term.


Built on the above analysis, we propose a new OCL framework called Non-sparse Classifier Evolution (NsCE),  which capitalizes on the benefits of pre-trained initialization. This framework introduces a non-sparse regularization term and employs a maximum separation criterion between classes, aimed at mitigating the issue of parameter sparsity while maintaining the model's ability in differentiating classes. As a regularization applied uniformly across tasks, it helps to minimize the differences in the distributions of model parameters between tasks. Furthermore, to enhance the model's throughput and diminish the reliance on a real-time memory buffer, we propose an efficient selective replay mechanism. By selectively replaying past experiences, we specifically target data from classes that the model frequently misidentifies and implement the targeted binary classification tasks on them during experience replay. This approach not only enhances model's throughput but also boosts its performance in handling high-speed data streams. Extensive experiments demonstrate that these techniques are crucial for deploying a  OCL model where high performance, real-world practicality and computational efficiency are all paramount.

\section{Ignorance: Trade-off between Effective Learning and Model Throughput}
\label{ignorance}
\begin{figure*}[ht]
\centering
\setlength{\abovecaptionskip}{-5pt} 
\setlength{\belowcaptionskip}{-15pt}
\begin{minipage}[t]{1\linewidth}
\centering
    \includegraphics[width=0.258\linewidth]{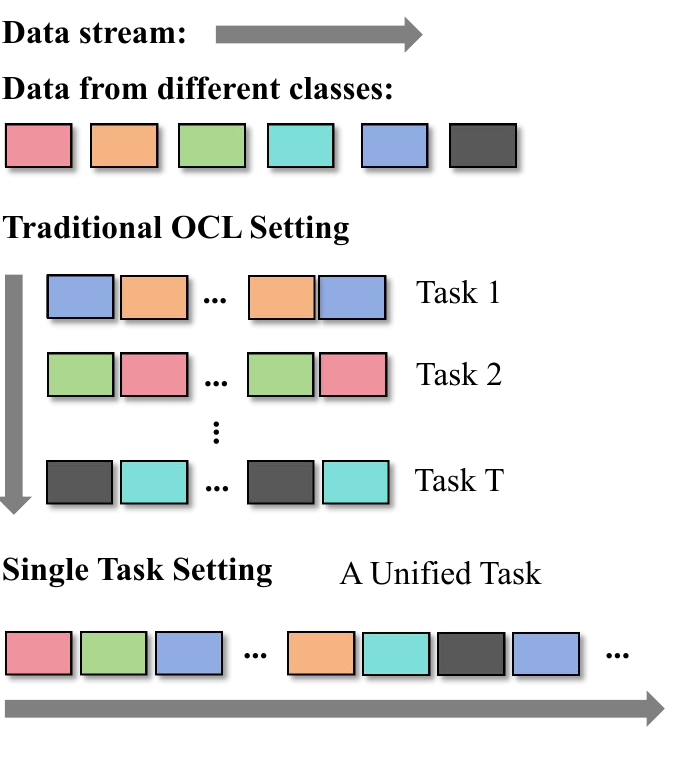} 
    \includegraphics[width=0.241\linewidth]{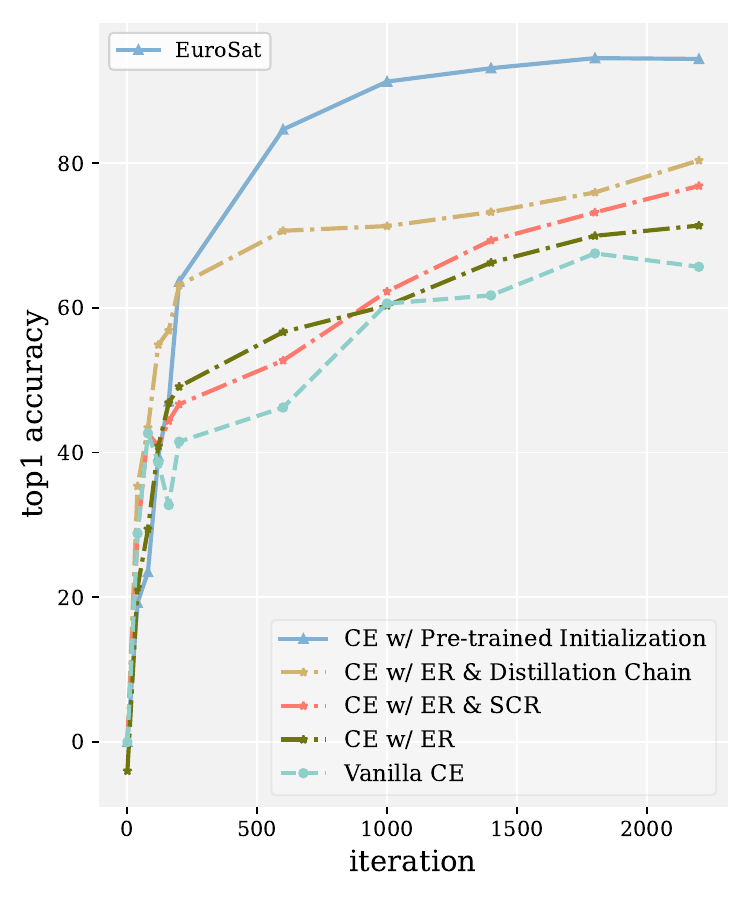} 
    \includegraphics[width=0.241\textwidth]{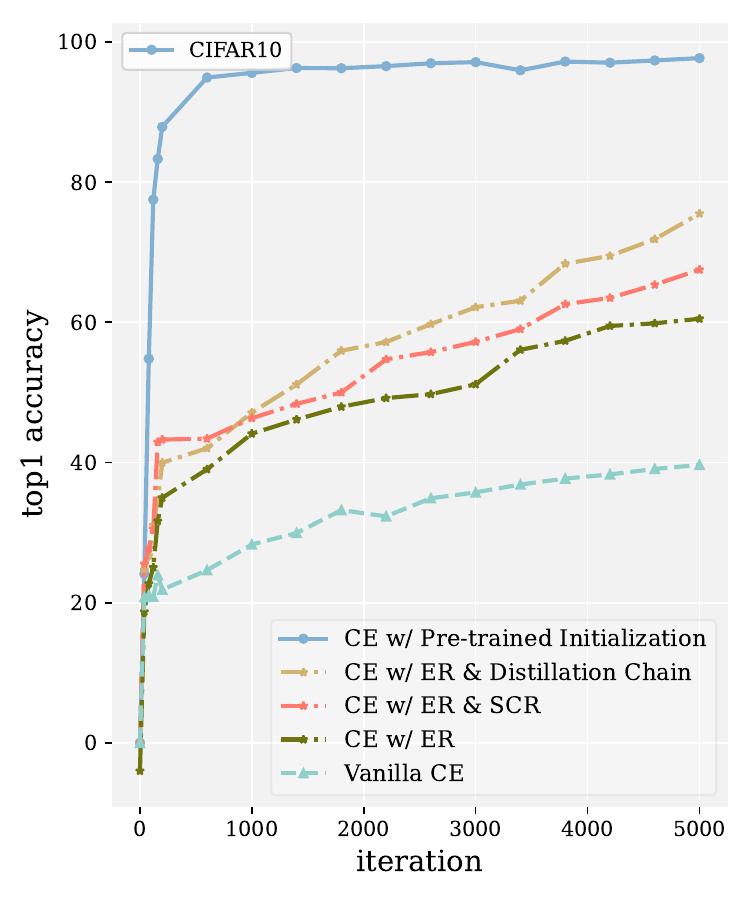} 
    \includegraphics[width=0.241\textwidth]{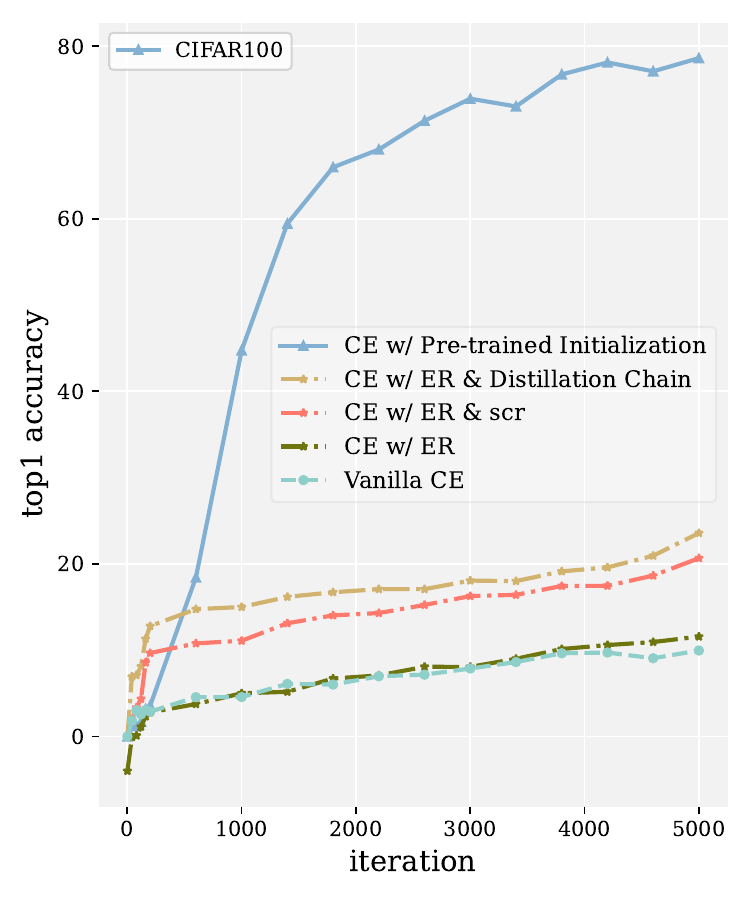}
\end{minipage}
\caption{Real-time accuracy of OCL models trained under the standard cross entropy loss $L_{ce}$ both with and without pre-trained models (pre-trained on ImageNet) under our designed \textbf{single task setting} and the impact of some commomly used strategies\cite{chaudhry2019tiny,mai2021supervised,mai2021supervised}. Results on additional datasets, influence of different pre-trained models (pre-trained on different datasets, using different backbones and different pre-train tasks) and implementation details are provided in Appendix \ref{appendix: pretrain}.}
\label{fig: single pass}
\end{figure*}

In OCL, the most extensively studied challenge is the issue of catastrophic forgetting when learning new tasks. However, the poor performance of existing OCL methods on some large datasets \cite{wei2023online} inevitably leads us to question whether, before considering the issue of forgetting, the model can truly acquire sufficient discriminative features and subsequent classification capabilities. To isolate the challenge brought by the continual arrival of tasks and concentrate on model's behavior on the single pass data stream, we first construct a data stream setting called \textbf{single task setting}. As displayed in Figure\ref{fig: single pass}, we consolidate classes from multiple tasks into a single unified task, where samples from different classes are introduced at random timestamps with equal probability. It ensures a stable and balanced data stream, mitigating concerns about catastrophic forgetting or class imbalance.

\textbf{Unsatisfactory model performance.}  Under this controlled setup, a simple supervised model is optimized using cross-entropy loss $\mathcal{L}_{ce} = -\sum_{i=1}^{N}\sum_{c=1}^Cy_{i,c}\log(\phi(f_{c}(x_i)))$ where $\phi(\cdot)$ represents the classifier, $f(\cdot)$ denotes the feature extractor. We implemented common OCL strategies such as experience replay\cite{chaudhry2019tiny}, contrastive learning\cite{mai2021supervised,wan2024unlocking} and knowledge distillation\cite{agrawal2024taming} to make a comparison. As illustrated in Figure
\ref{fig: single pass}, models trained from scratch fail to reach satisfactory performance in single-pass training scenarios, unlike those benefiting from pre-trained initialization. On CIFAR100, model's average accuracy remains below 10\% even without any inter-task interference. Single-pass nature of OCL prevents the model from fully leveraging the semantic information from the data stream, which we refer as \textbf{model's ignorance}. Meanwhile, as we integrate additional techniques like experience replay, contrastive learning, and distillation chains, the model's accuracy progressively improves, from 10\% to 20\%. Under such circumstances, leveraging additional prior knowledge seems to be the only viable solution. Empirical evidence also supports this viewpoint, as displayed by the performance when using pre-trained initialization in Figure\ref{fig: single pass}. While the selection of an appropriate pre-trained model is beyond the primary focus of this paper, we investigate the effects of various pre-training methods on different OCL downstream tasks in Appendix \ref{appendix: pretrain}.

\vspace{-10pt}
\begin{figure}[ht]
\setlength{\abovecaptionskip}{-1pt} 
\setlength{\belowcaptionskip}{-5pt}
\centering
    \includegraphics[width=0.54\textwidth]{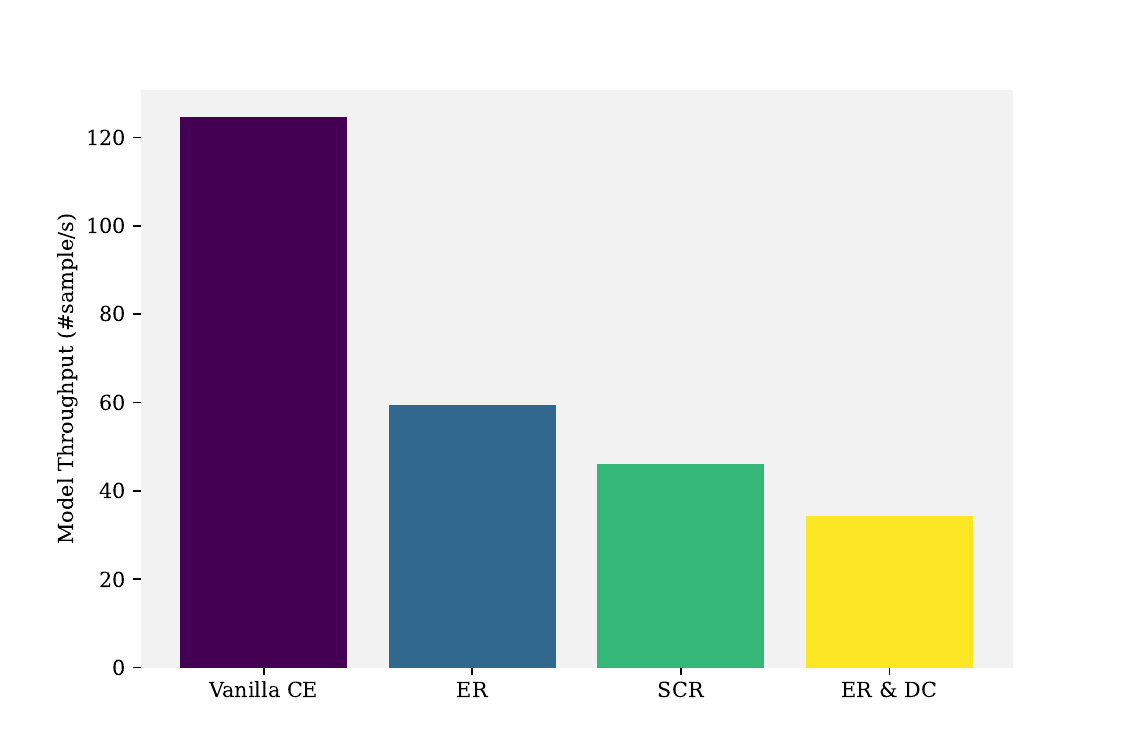}
    \includegraphics[width=0.36\textwidth]{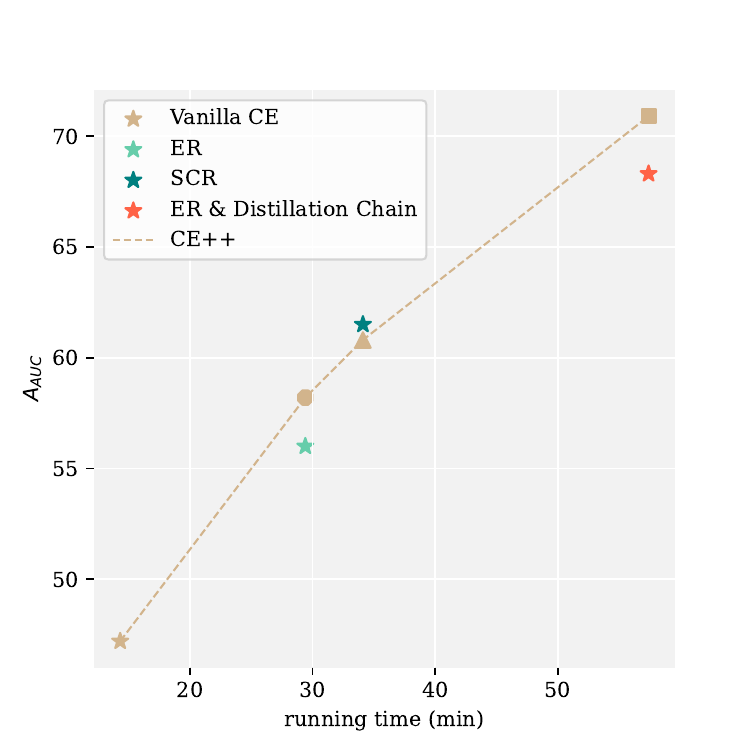}
    \caption{\textbf{Left:} Throughput of the model trained using vanilla cross-entropy, experience replay\cite{chaudhry2019tiny}, supervised contrastive replay\cite{mai2021supervised} and distillation chain\cite{agrawal2024taming}. \textbf{Right:}  Performance($A_{AUC}$: Area Under the Curve of Accuracy) and running time of the above strategies on CIFAR10. "\textit{CE++}" denotes that we compute and perform extra gradient descent per time step to match the delay of the compared-against strategies. All experiments are conducted under \textbf{single task setting}.}
    \label{fig: throughput}
\end{figure}

\textbf{Decreased model throughput.} Despite the partial effectiveness of commonly used strategies in mitigating \textbf{model's ignorance}, they inevitably extend the training time for the same amount of data and increase the demands on the memory buffer's real-time accessibility. As shown in Figure \ref{fig: throughput}(\textbf{Left}), the integration of additional techniques consistently increases the training time. But in the context of OCL, this extended training duration leads to processing fewer data units per unit of time, resulting in lower \textit{model throughput}. Given that the volume of training data is widely recognized as a crucial factor in determining model performance, this highlights a significant flaw in the current evaluation of OCL models. When the data stream's flow rate exceeds the training speed, model throughput and effective learning emerge as two interrelated factors subject to trade-offs. 

More surprisingly, our findings suggest that these strategies are actually no more effective than simply training the model multiple times (\textit{CE++}) on the same data to offset the delays caused by these methods. As illustrated in Figure \ref{fig: throughput}, when handling data at the same flow rate, \textit{CE++} can achieve comparable model performance through the application of experience replay, contrastive techniques and distillation chains. Furthermore, the challenges of maintaining a continuously accessible real-time memory buffer, caused by issues like network connectivity and privacy protection, are frequently overlooked, adding further obstacles to the effective implementation of these strategies.

\section{Myopia: Key Factor for Performance Degradation}

\begin{figure*}[ht]
\setlength{\abovecaptionskip}{-5pt} 
\centering
\begin{minipage}[t]{1\textwidth}
\centering
\centering
    \includegraphics[width=0.16\textwidth]{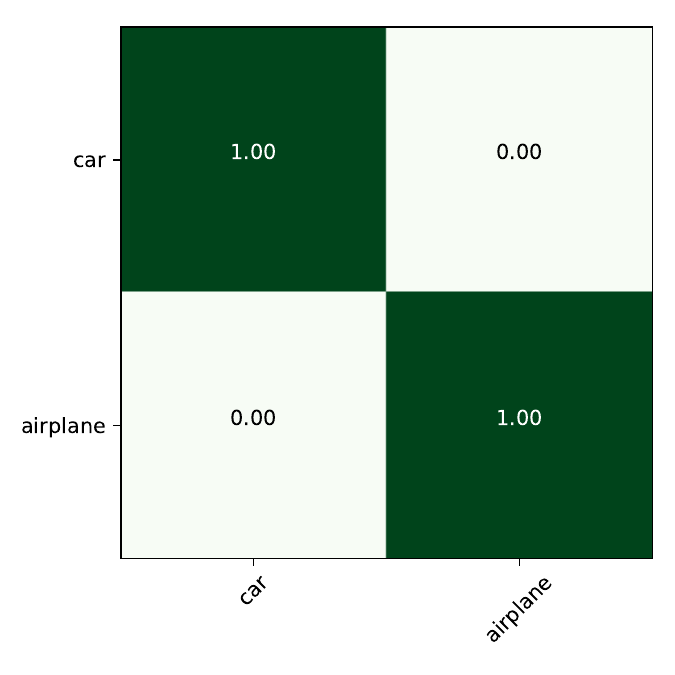} 
    \includegraphics[width=0.16\textwidth]{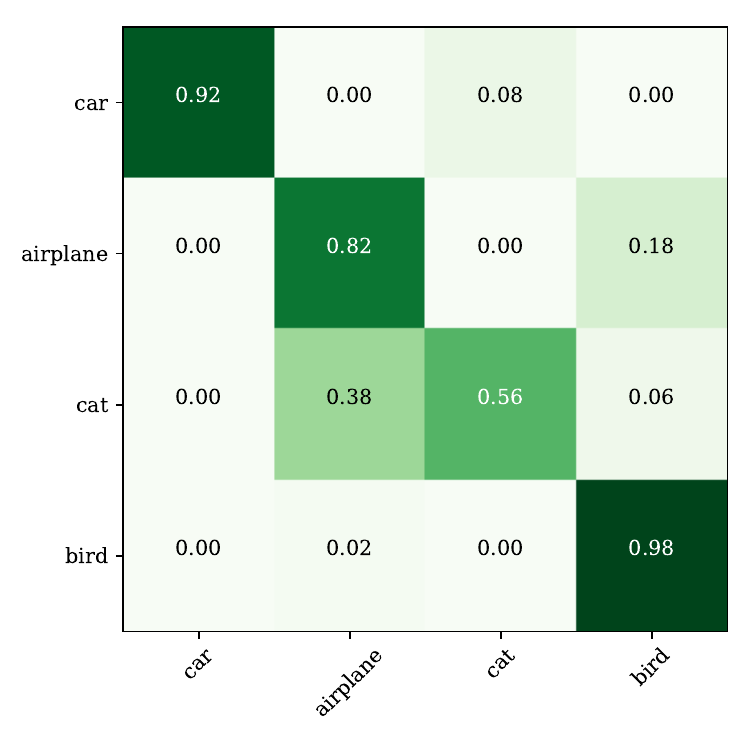} 
    \includegraphics[width=0.16\textwidth]{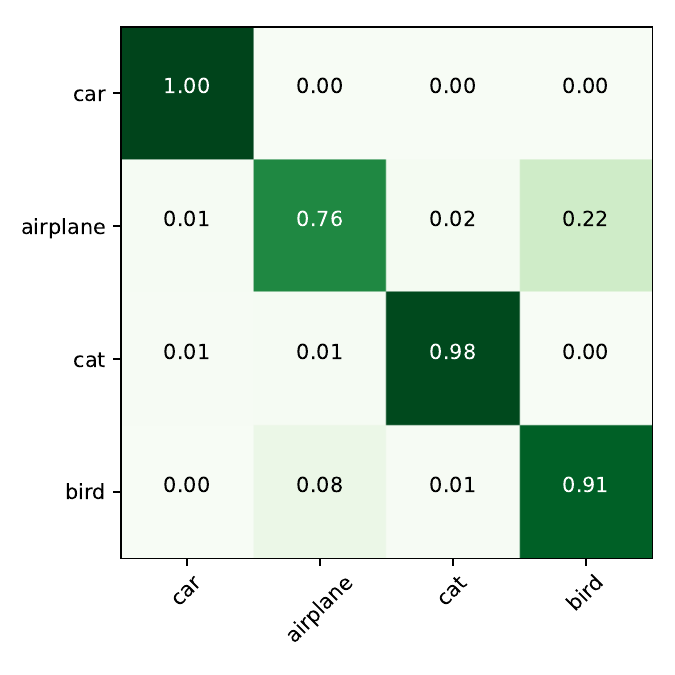}
    \includegraphics[width=0.16\textwidth]{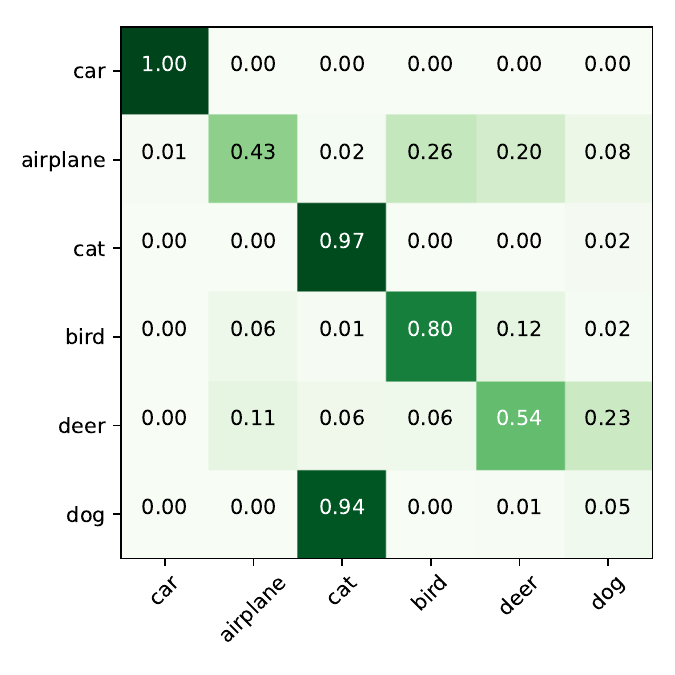} 
    \includegraphics[width=0.16\textwidth]{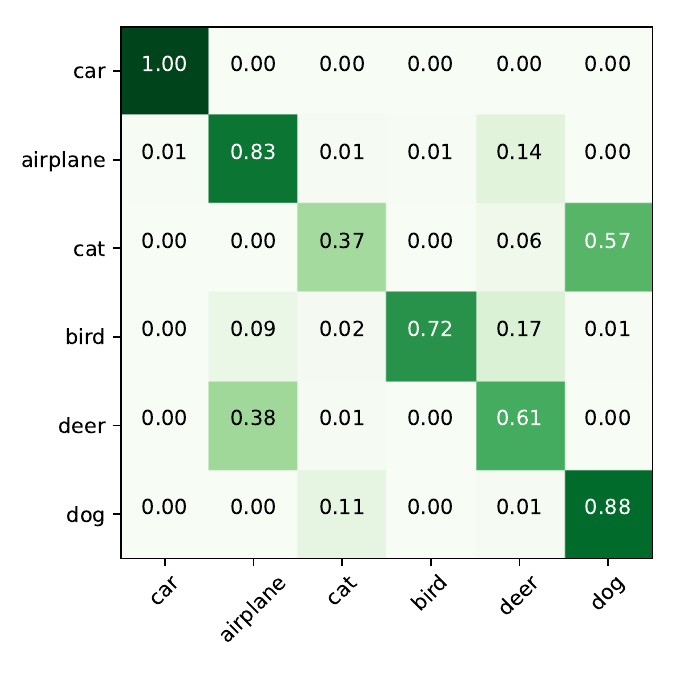}
    \includegraphics[width=0.16\textwidth]{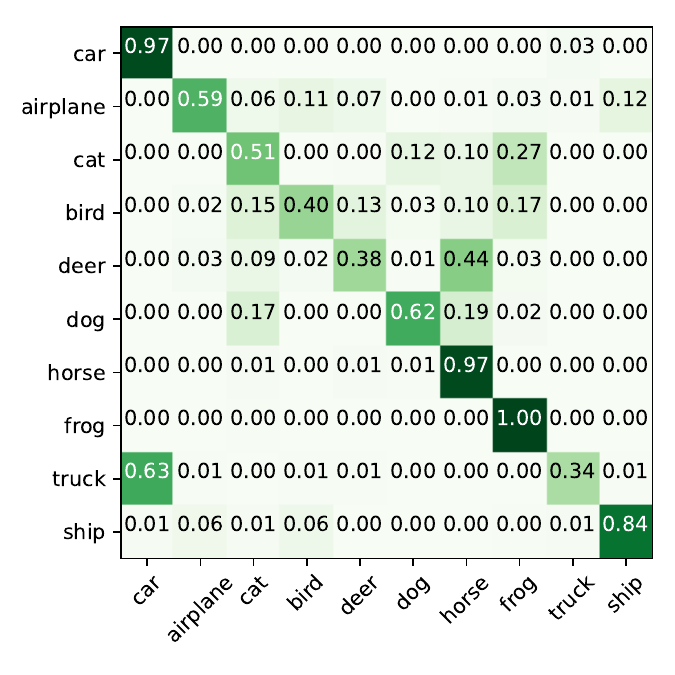}
\end{minipage}

\begin{minipage}[t]{1\textwidth}
\centering
    \includegraphics[width=0.16\textwidth]{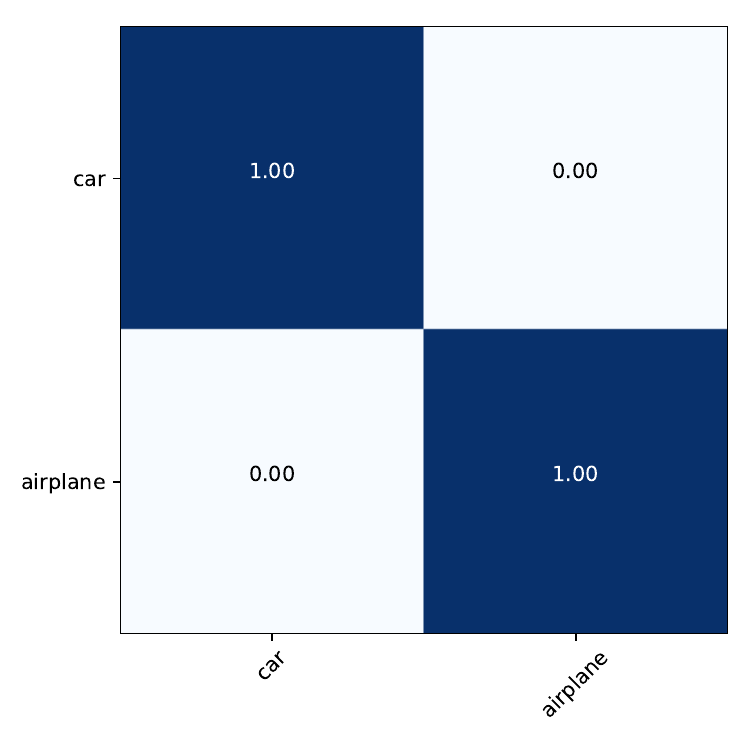} 
    \includegraphics[width=0.16\textwidth]{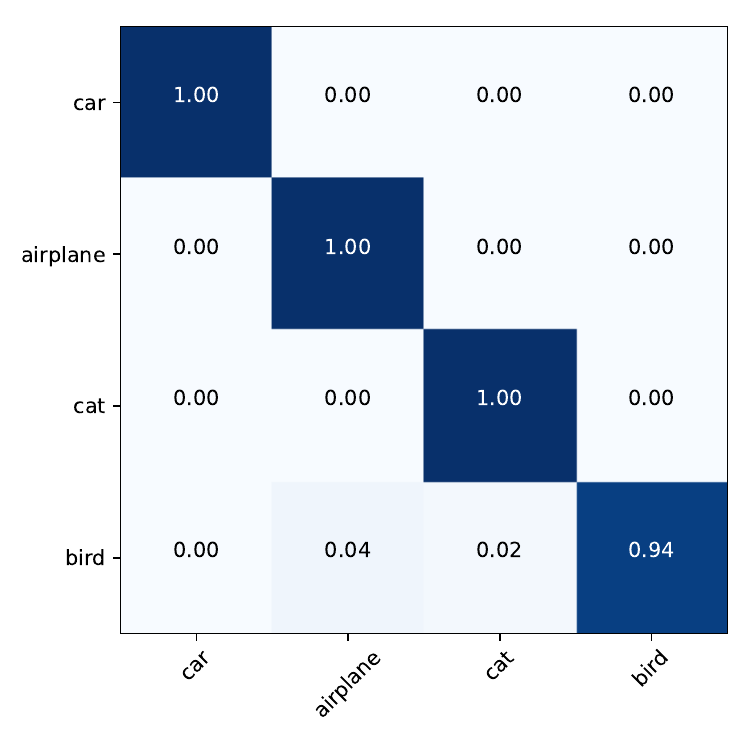} 
    \includegraphics[width=0.16\textwidth]{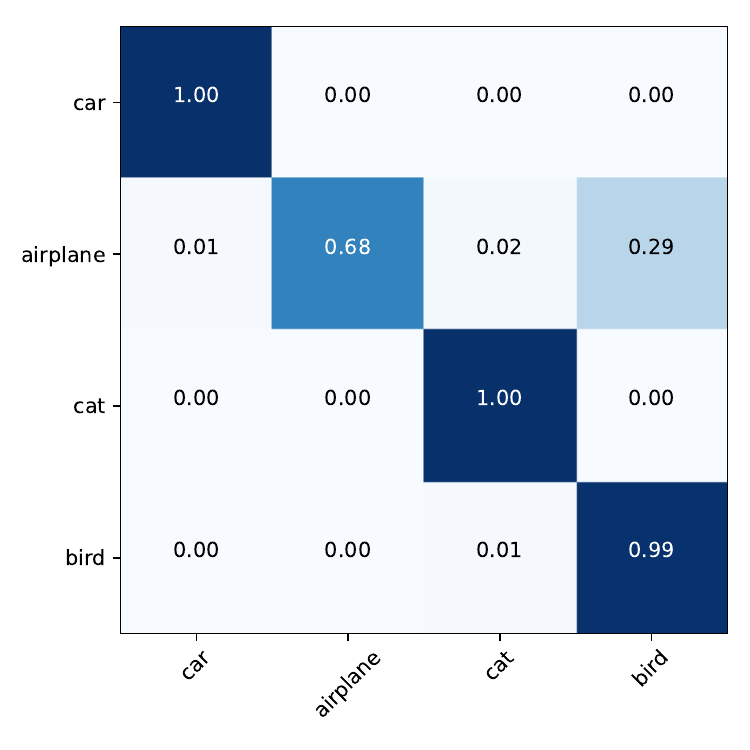}
    \includegraphics[width=0.16\textwidth]{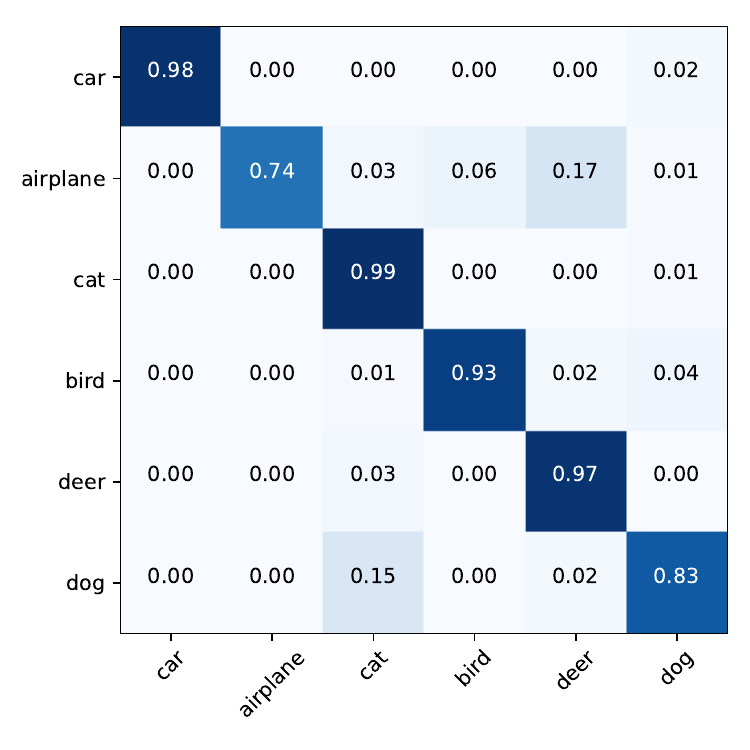} 
    \includegraphics[width=0.16\textwidth]{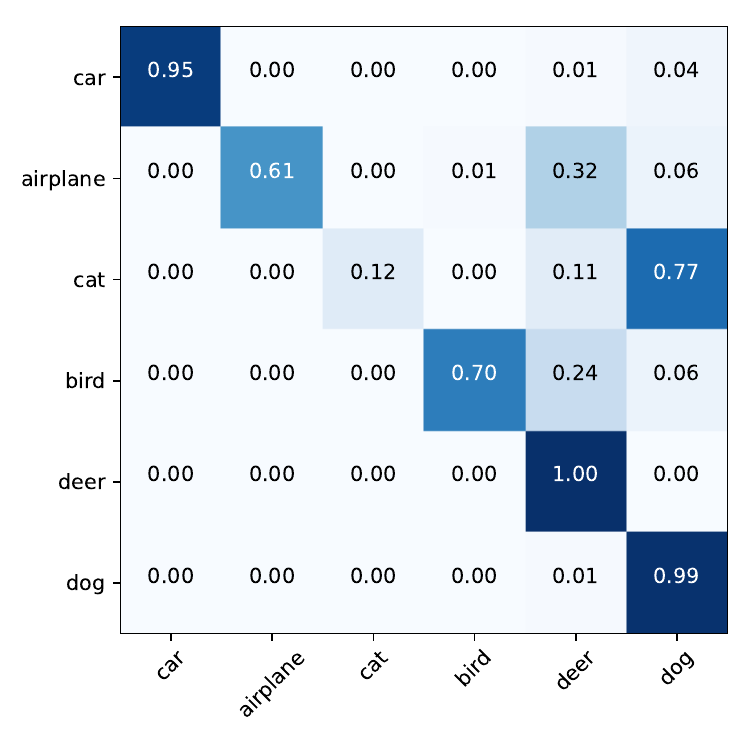}
    \includegraphics[width=0.16\textwidth]{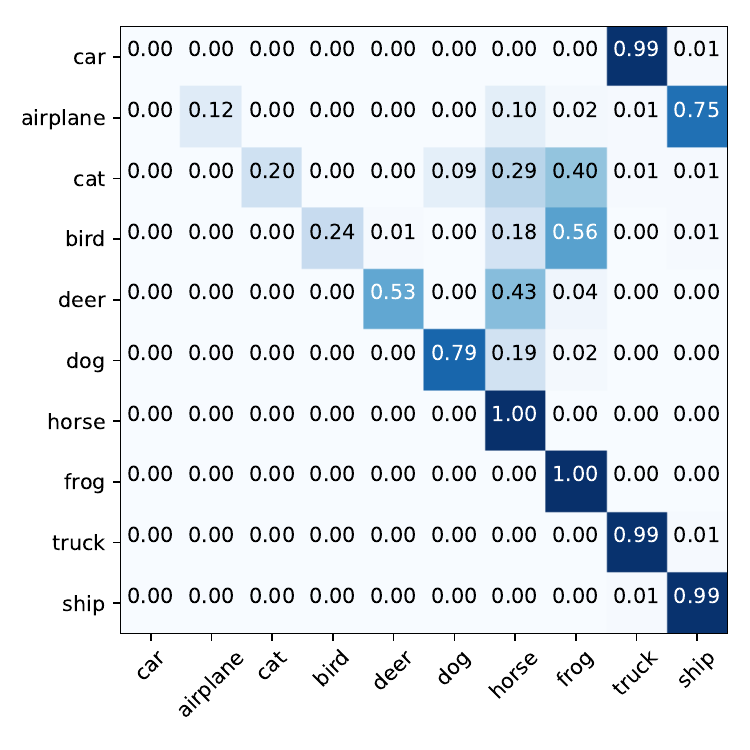}
\end{minipage}
    \caption{Normalized confusion matrix of NCM classifier (\textcolor{green}{green}) and softmax classifier (CIFAR10) (\textcolor{blue}{blue}) with ImageNet supervised pre-trained initialization. Due to space limitations, we present a partial training process in the main text. Comprehensive training process is in Appendix\ref{forgetting}.}
    \label{fig: part linear proto confusion}
\vspace{-5pt}
\end{figure*}

In addition to issues of \textbf{model's ignorance} and decreased model throughput, we recognize that while pre-trained initialization allows the model to quickly classify training data, relying solely on this initialization does not ensure satisfactory overall performance. Performance degradation continues to pose challenges in OCL \cite{wei2023online}. Most previous studies attribute it to the phenomenon of catastrophic forgetting, which is caused by interference between current task and previously learned knowledge \cite{wang2023comprehensive}. Some studies also suggest it is due to the model learning some trivial features\cite{guo2022online, wei2023online}. To clarify this issue, we conduct a comprehensive monitoring of the model's predictions by visualizing the predicting confusion matrix throughout the entire training process. Specifically, the model is trained by vanilla cross-entropy loss (Eq.\ref{eq: ce}), without employing any other techniques.
\begin{equation}
\label{eq: ce}
    \begin{aligned}
        \mathcal{L}_{ce} = -\sum_{t=1}^{T}\sum_{i=1}^{N_t}&\sum_{c=1}^{C_t}y_{i}^{t, c}\log(\phi^{c}(f(x_i^t)))
     , (x_i^t, y_i^t)\sim \mu_t.
    \end{aligned}
\end{equation}
We separately assess the discrimination ability of the classifier and feature extractor by evaluating the model's performance using a softmax classifier\cite{chaudhry2019tiny} and an online updating NCM (Nearest Class Mean) classifier\cite{rebuffi2017icarl} which are both very common approaches for prediction in OCL. For the NCM classifier, we compute a dynamic class mean prototype for each class using Equation \ref{eq: class mean} and apply a momentum update with the parameter $\lambda$. This update calculates the new class mean prototype $\mu_c^{new}$ based on the $n_c$ data points from class $c$ in the current data stream and the previous prototypes $\mu_c^{old}$. Then, class label of new data can be assigned to the most similar prototype.
\begin{equation} \label{eq: class mean}
    \begin{aligned}
        \mu_c^{new} =(1-\lambda)\mu_c^{old} + \lambda \frac{1}{n_c}\sum_i f(x_i)\cdot\mathbb{I}\{y_i=c\},
        y^{*} = \mathop{\arg\min}_{c=1...C}||f(x)-\mu_c||.
    \end{aligned}
\end{equation}
\label{analysis}
Although pre-trained initialization provides the model with a broader perspective and prior knowledge, performance degradation still occurs with the introduction of new tasks. As illustrated in Figure \ref{fig: part linear proto confusion}, the precision for the class 'car' dramatically drops from $0.95$ to nearly $0$ during the training of the fifth task.  When closer examining the classes that the model confuses during training, such as 'car' and 'cat', we suppose that distinguishing between them becomes challenging when highly discriminative features from past tasks reappear in new classes. For example, 'car' and 'truck' share similar shapes and backgrounds, while 'cat' and 'dog' share similar textures, making differentiation difficult. Different from the well-documented catastrophic forgetting, we propose a different perspective that the confusion arises since the discriminative features or knowledge acquired from previous tasks may not be helpful in distinguishing these classes from some new classes in future tasks from the outset. In other words, the previously learned representations may not capture the essential characteristics necessary for effectively differentiating these classes with new classes and this is something perfectly normal even for us as humans. During the training process, the model naturally focuses on features and discriminant criteria that are more important for the current task. The independent arrival of tasks results in such \textbf{a myopic model}\cite{xinrui2024beyond}, which is the key factor in the decline of OCL performance. 

Plus, we observed that softmax classifiers are more frequently susceptible to performance degradation compared to NCM classifiers. As depicted in Figure \ref{fig: part linear proto confusion}, during the fourth and fifth tasks, predictions made by the softmax classifier are more biased towards classes in the current task compared to those made by the NCM classifier. Meanwhile, although the features extracted by the model are initially separable, a biased classifier soon leads to significant confusion between classes, causing the features to lose discriminative power over time. Such phenomenons is better displayed by the complete visualization in Figure\ref{fig: appendix proto confusion} (Appednix\ref{forgetting}).

To better analyze the reasons behind model's performance degradation and verify our suppose on \textbf{model's myopia}, we evaluate and subsequently visualize the sparsity as $1/s(\cdot)$ and mean $m(\cdot)$ of the parameters in the final fully connected layer of the classifier for each task, as detailed in Eq.\ref{eq: sparse evaluation}. We represent this final fully connected layer by a matrix $W \in \mathcal{R}^{d \times C}$, where $d$ represents the feature dimension and $C$ denotes the number of classes. The variable $w^c$ refers to the $c$-th column vector extracted from $W$, and $w^c \in \mathcal{R}^d$.

\begin{align}
\label{eq: sparse evaluation}
    m(w)= \sqrt{w_1^2+w_{2}^2+\cdots+w_d^2}; s(w)=\frac{(|w_1|+|w_{2}|+\cdots+|w_d|)/d}{\max(|w_1|,|w_{2}|,\cdots,|w_d|)}
\end{align}

\begin{figure}[t]
\centering
    \includegraphics[width=0.475\textwidth]{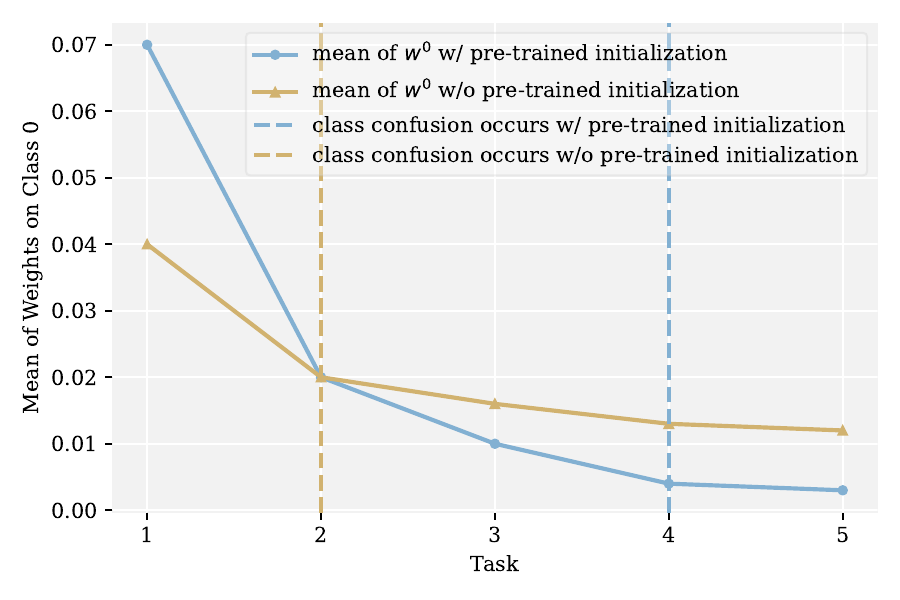}
    \includegraphics[width=0.475\textwidth]{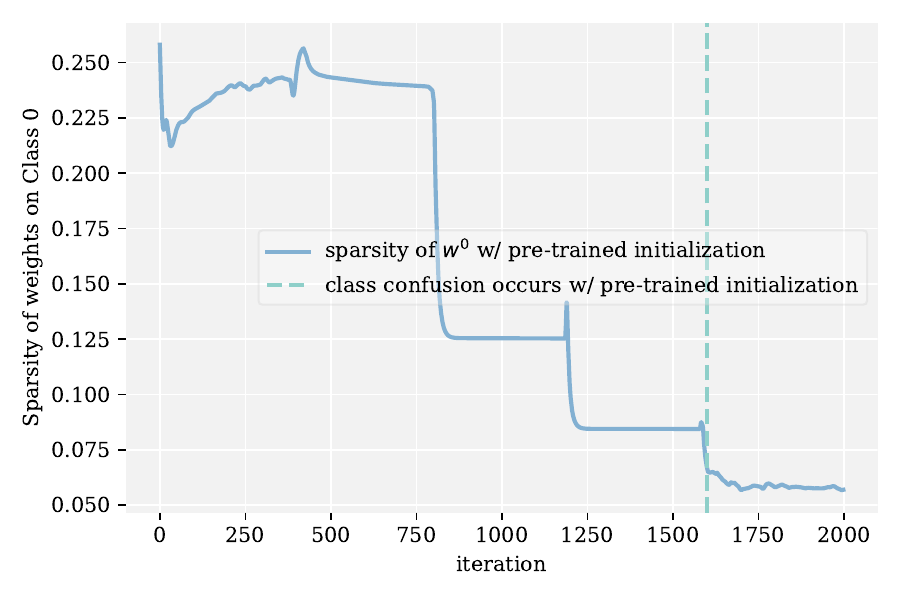}
    \caption{\textbf{Left:} averaged weights of the final FC layer for class 0 in CIFAR10. \textbf{Right:} $s(w)$ (lower $s(w)$ stands for increasing sparsity) of the final FC layer for $w^0$ corresponds to class 0 in CIFAR10. During the training of task 5, the class confusion occurs as Figure \ref{fig: part linear proto confusion} where model classify "car" as "truck".}
    \label{fig: sparse}
\vspace{-10pt}
\end{figure}
As depicted in Figure \ref{fig: sparse}, the mean and sparsity of parameters in the classifier continuously decrease when each new task is introduced. Prior studies in continual learning \cite{belouadah2019il2m, hou2019learning, wu2019large} have linked this pronounced prediction bias towards recent tasks to the decreasing mean weights for old classes. Interestingly, unlike scenarios involving training from scratch, using a pre-trained initialization prevents the model from arbitrarily classifying class $0$ as belonging to the current task. Moreover, although the mean weights for the old classes consistently decline, class confusion only manifests with the arrival of task $4$. These observations all suggest that the reduction in weights is not solely responsible for the observed bias in the classifier. This leads us to question whether, during training process, the model's criteria for categorizing become simpler. In other words, the model increasingly focuses solely on a limited set of discriminative features that it deems beneficial for the current task, and this focus is precisely the cause of \textbf{model's myopia}. This trend towards simplification is illustrated in Figure \ref{fig: sparse}(\textbf{Right}), where there is a noticeable increase in the sparsity of parameters associated with older tasks as new ones are introduced. While relying on few discriminative features is beneficial in traditional supervised learning settings, in OCL, the emergence of such an \textit{excessively sparse classifier} causes inevitable class confusions across tasks. 

\section{Theoretical Analysis}
In addition to the empirical analysis, we try to provide theoretical insights into the OCL problem and illustrate the aforementioned trade-off between adequate feature learning and model throughput. Specifically, we approach the OCL problem from a Pac-Bayes perspective, as outlined in Theorem \ref{th1: continual pac payes}. Following the common notations used in Pac-Bayes literature \cite{alquier2016properties}, we define a model space $\mathcal{H}$ and a loss function $\ell: \mathcal{H} \times \mathcal{Z} \rightarrow \mathbb{R}^+$, which is bounded by a constant $K > 0$. Here, $\mathcal{Z}$ represents the whole training set and $\mu_t$ stands for data distribution of task $t$. In line with the approach described in \cite{haddouche2022online}, we denote a sequence of distributions $(Q_i)_{i=0..T}$ on $\mathcal{H}$, representing the evolution of the model's learning process. Here, $Q_i$ is the distribution of the model parameters after training task $i$, with $Q_0$ representing the initial parameter distribution and $T$ indicates the total number of tasks. Meanwhile, we denote the flow rate of the data stream ($\#$samples coming from data stream in the unit time) as $v_s$, the model throughput ($\#$samples model can train in the unit time) as $v_m$ and $\Delta_t$ as the duration time of task $t$ in the data stream. The sum of expected risks $\mathcal{R}$ across different tasks can be written as $\sum_{t=1}^T \mathbb{E}_{h_t\sim Q_{t}}\left[ \mathbb{E}_{z_t\sim \mu_t}[\ell(h_t,z_t)] \right]$ and the empirical risk $\hat{\mathcal{R}}$ is $\sum_{t=1}^T \sum_{j=1}^{m_t}{\frac{\mathbb{E}_{h_t\sim Q_{t}}\left[ \ell(h_t,z_j^t) \right] }{m_t}}$.

\begin{theorem}
  \label{th1: continual pac payes}
  For any distributions $\mu_1,...,\mu_T$ over $\mathcal{Z}$, let $\mathcal{D}_t$ be an iid set with $m_t=\min(v_s, v_m)\Delta_t$ samples sampled from $\mu_t$ as the dataset of task $t$, for any $\lambda>0$ and any online predictive sequence $(Q_0, Q_1, ..., Q_T)$, the following inequality holds with probability $1-\delta$:
\begin{equation}
    \begin{aligned}
    \mathcal{R} &\leq \hat{\mathcal{R}} + \underbrace{\sum_{t=1}^T\frac{\lambda K^2}{\min(v_s, v_m)\Delta_t}}_{\mathcal{M}} &+\underbrace{\sum_{t=1}^T\frac{\operatorname{KL}(Q_{t}\| Q_{t-1})}{\lambda}}_{\mathcal{D}}  + \underbrace{\frac{T\log(T/\delta)}{\lambda}}_{constant}.
    \end{aligned}
    \label{eq: bound}
  \end{equation}
\end{theorem}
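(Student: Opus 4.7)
The plan is to apply a per-task PAC-Bayes bound in the online predictive sequence framework of Haddouche and Guedj, with the posterior $Q_{t-1}$ from the preceding task serving as the data-dependent prior for the posterior $Q_t$ of task $t$. Summing these per-task bounds and taking a union bound in $\delta$ will yield the stated inequality.

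For a single task $t$ with $m_t = \min(v_s, v_m)\Delta_t$ iid samples from $\mu_t$, the goal is to establish the intermediate bound
\begin{equation*}
\mathbb{E}_{h_t \sim Q_t}\bigl[\mathbb{E}_{z_t\sim\mu_t}[\ell(h_t,z_t)] - \tfrac{1}{m_t}\textstyle\sum_{j=1}^{m_t}\ell(h_t,z_j^t)\bigr] \leq \frac{\operatorname{KL}(Q_t \| Q_{t-1})}{\lambda} + \frac{\lambda K^2}{m_t} + \frac{\log(T/\delta)}{\lambda}.
\end{equation*}
The standard route is: (i) apply the Donsker-Varadhan change-of-measure inequality, which gives $\lambda\mathbb{E}_{Q_t}[X]\leq \operatorname{KL}(Q_t\|Q_{t-1}) + \log\mathbb{E}_{Q_{t-1}}[e^{\lambda X}]$ for any measurable $X$; (ii) instantiate $X$ as the gap between expected and empirical risk on task $t$; (iii) bound the resulting moment generating function by $e^{\lambda^2 K^2/m_t}$ using Hoeffding's lemma, which applies since the loss is bounded in $[0,K]$ and the per-task samples are iid; (iv) use Markov's inequality at confidence level $\delta/T$ to convert the expectation-over-data statement into a high-probability one. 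Dividing by $\lambda$ produces the displayed per-task inequality.

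Summing over $t=1,\ldots,T$ then assembles the final bound: the empirical and expected risk terms aggregate into $\hat{\mathcal{R}}$ and $\mathcal{R}$, the variance contributions add to $\sum_t \lambda K^2/(\min(v_s,v_m)\Delta_t)$, the KL terms form $\sum_t \operatorname{KL}(Q_t\|Q_{t-1})/\lambda$, and the $T$ confidence terms combine to $T\log(T/\delta)/\lambda$. A union bound over the $T$ per-task events at level $\delta/T$ ensures all inequalities hold simultaneously with probability at least $1-\delta$.

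The main obstacle is justifying the data-dependent prior: classical PAC-Bayes requires the prior to be fixed before observing the sample, yet $Q_{t-1}$ depends on the data from tasks $1,\ldots,t-1$. Because the task distributions $\mu_1,\ldots,\mu_T$ are independent, $Q_{t-1}$ is conditionally independent of $\mathcal{D}_t$ given the history, which is exactly the filtration structure exploited by Haddouche and Guedj; a supermartingale argument on the accumulated exponential moment lets one prove all $T$ bounds jointly without paying more than the $\log(T/\delta)$ factor. I would follow that template to make the online predictive sequence rigorous. A secondary, purely bookkeeping step is to verify that the announced constant in the Hoeffding term is exactly $K^2$ rather than $K^2/8$, which amounts to rescaling $\lambda$ by an absolute constant absorbed into the statement.
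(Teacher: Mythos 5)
Your proposal is correct and follows essentially the same route as the paper: apply the single-task PAC-Bayes bound of Alquier et al. with $Q_{t-1}$ serving as the (data-free with respect to $\mathcal{D}_t$) prior for task $t$, take a union bound at level $\delta/T$, and sum over tasks; the paper simply invokes the base lemma as a black box where you re-derive it via Donsker--Varadhan, Hoeffding, and Markov. Your closing remark about the constant is also apt, since the paper's appendix statement carries a factor $\frac{\lambda K^2}{2m_t}$ while the main-text theorem drops the $2$.
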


It is clear that the upper bound of $\mathcal{R}$ can be segmented into three terms, along with a constant related to the task number $T$. They are identified as empirical risk $\hat{\mathcal{R}}$, model throughput term $\mathcal{M}$, and task divergence term $\mathcal{D}$. Among them, the model throughput term $\mathcal{M}$ is determined by the amount of data accessible to the model and this is directly influenced by the model's throughput $v_m$ when data stream's flow rate $v_s$ exceeds $v_m$. However, achieving a lower empirical risk $\hat{\mathcal{R}}$ by data augmentations, knowledge distillation or training the model for multiple times typically requires more training time and consequently reducing model throughput $v_m$, which theoretically explains the trade-off we mentioned in Section\ref{ignorance} between $\mathcal{\hat{R}}$ and $\mathcal{M}$.

When examining the task divergence term $\mathcal{D}$ more closely, we see that nearly all OCL methods aiming at addressing forgetting attempt to minimize the deviation from the parameter distribution of previous tasks by adjusting the current model parameter distribution. Our proposed concept of \textbf{model's myopia} offers a fresh perspective. By aligning the distribution of current model with future ones, the divergence term $\mathcal{D}$ can be also reduced. It leads us to considering the use of structural constraints (e.g. non-sparse regularization) or pre-trained initialization as promising strategies to enhance model performance. Meanwhile, it's important to acknowledge that our analysis has limitations; the bound discussed is intended to illustrate the sum of generalization risk for each individual task and can not represent a global expectation. We provide detailed discussions and proof in Appendix\ref{proof}.


\section{Method}
After conducting a series of analysis on the key challenges in OCL, in this section, we provide the NsCE framework to tackle these issues based on the utilization of pre-trained initialization. Our goal is not only to reduce the risks of model ignorance and myopia but also to enhance the throughput of OCL models. We aim to accelerate training speeds to keep pace with data stream progress and reduce the dependence of existing methods on real-time accessible memory buffers.

\textbf{Non-sparse regularization.}
Unlike some previous methods that aim to acquire task-specific features capable of generalization, in this study, we acknowledge the unrealistic expectation of obtaining a model with absolute discriminative ability within a limited scope of classes, even with the rich prior knowledge provided by a pre-trained model. Instead, our focus lies in ensuring the diversity of discriminant features during training and enabling the model to swiftly develop the ability to differentiate between categories from different tasks. As posited in the previous section, in the context of OCL, contrary to traditional settings where a sparse classifier is often considered desirable for achieving high classification performance\cite{dedieu2021learning, levy2023generalization}, the overly sparse parameters can cause the model to focus solely on a limited set of highly discriminative features, increasing the risk of \textbf{model's myopia}. To mitigate this issue, we propose a straightforward idea of constraining the sparsity of the final fully connected layer of (softmax classifier) of the model. Our goal is to ensure that the model maintains a diverse set of discriminative features during training, allowing it to effectively handle different tasks without being overly biased towards specific features. Meanwhile, it helps reduce the divergence of model distribution across tasks. In specific implementation, considering that the $\max(\cdot)$ function is easily affected by a small number of outliers in the parameters, we opt to replace it with the $l_2$ norm as a more robust alternative in our sparsity regularization: 
\begin{align}
\label{eq: sparse reg}
    \mathcal{L}_{s}= - \sum_{c=1}^C \frac{(|w_1^c|+|w_{2}^c|+\cdots+|w_d^c|)/d}{\sqrt{{w_1^c}^2+{w_2^c}^2+\cdots+{w_d^c}^2}}.
\end{align}

\textbf{Maximum separation.}
While a smooth classifier can help to mitigate the model's myopia, it hinders the model's ability to rapidly perform classification in the current task. Moreover, in OCL, it is also hard to have simultaneous access to data from all categories, especially when there are restrictions on the use of memory buffers. This leads to severe class imbalance during the learning process, which is a well-recognized challenge in the context of continual learning\cite{yang2023neural, zhong2022rebalanced, kasarla2022maximum}. Thus, we draw inspiration from the famous Neural Collapse 
\cite{papyan2020prevalence} and Maximum Class Separation criterion\cite{kasarla2022maximum}. It also serves as a structure constraints on model parameters to minimize the model distribution divergence across the tasks. For learned representations from different categories $\{f(x_1), f(x_2), \cdots f(x_{C_t})\}$, their cosine similarity should satisfy a maximum separation criterion and converge to an ideal simplex equiangular
tight frame (ETF), $\forall_{i,j,i\neq j} \langle f(x_i), f(x_j) \rangle = -\frac{1}{C_t-1}$.
\begin{equation}
    \begin{aligned}
    \mathcal{L}_{p} = \frac{1}{C_t^2}\sum^C_{i,j=1}(\langle f(x_i),f(x_j) \rangle-p_{ij})^2, p_{ij}=\frac{C_t}{C_t-1} \delta_{i,j}-\frac{1}{C_t-1}
    \end{aligned}
\end{equation}
where $\delta_{i,j}$ is Kronecker delta symbol that designates the number $1$ if $i = j$ and $0$ if $i \neq j$. To address categories not present in the current task, we use the class mean in Eq.\ref{eq: class mean} to replace the representation of the corresponding category. Thus, we can denote our total loss function as:
\begin{align} \label{eq: total loss}
    \mathcal{L} = \mathcal{L}_{ce} + \gamma(\mathcal{L}_{p}+\mathcal{L}_{s})
\end{align}

\textbf{Targeted experience replay.}
To enable the model to learn globally discriminative features and correct existing class confusion, we prioritize the categories that the model has previously struggled to distinguish when accessing the memory buffer. During experience replay, we compute a confusion matrix to identify frequently confused categories. To address each group of confused categories, we devise a separate binary classification loss specifically designed to expedite the acquisition of discriminative abilities between these confused classes, as shown in Eq.\ref{eq: replay loss}.
\begin{equation} \label{eq: replay loss}
\begin{aligned}
    \mathcal{L}_{b} = -\sum_{i=1}^{|\mathcal{B}|} \sum_{m,n=1}^C \mathbb{I}\{\mathcal{C}^b_{m,n}>\tau\} \cdot [y_{i}^m\log(\phi^{m}(f(x_i))) + y_{i}^n\log(\phi^{n}(f(x_i)))], m \neq n.
\end{aligned}
\end{equation}
$\mathcal{C}^b$ represents a normalized confusion matrix and $\mathcal{C}^b_{m,n} > \tau$ indicates that the proportion of data belonging to class $m$ being classified as class $n$ exceeds the threshold $\tau$. To further enhance model's throughput and diminish the reliance on a real-time memory buffer, we impose limitations on the number of requests allowed to retrieve data from the memory buffer. Compared to traditional experience replay, our way of replay achieves higher model throughput and is more specifically targeted at addressing existing class confusion. For a complete description of the algorithm process, please refer to Algorithm\ref{alg: NCE}. Moreover, our findings indicate that when selecting an appropriate pre-trained model, halting gradient back-propagation in the feature extractor often enhances throughput without compromising performance. More discussions are provided in Appendix\ref{appendix: gradient stop}.

\section{Experiments}
Before delving into the specifics of experimental results, we highlight a significant distinction in the utilization of memory buffers between our study and other works. In this work, we impose limitations on the number of requests allowed to retrieve data from the memory buffer. We evaluate our models on six image datasets, incorporating realistic task overlaps to mimic practical scenarios\cite{koh2021online}. Training is harmonized using ViT architectures and the AdamW optimizer, with consistent training batch size and initialization (MAE pre-training for ImageNet, supervised pre-training for others). We compare our NsCE method 8 replay based methods and 5 regularization based ones. To more effectively assess model performance over time, we employ $A_{AUC}$ as the primary metrics. Specifically, we evaluate the model's accuracy on all previously encountered tasks at intervals of every 100 training iterations and use these data points to calculate $A_{AUC}$. Due to limited space, we leave detailed descriptions of the implementation, evaluation metrics, datasets and comparison methods in Appendix\ref{appendix: experiment setup}.

\begin{table*}[t]
\caption{Best $A_{AUC}$ is highlighted in \textbf{bold}, second best is shown \underline{underlined}.} 
\label{tab: main result A_{AUC}}
\centering

\resizebox{1\textwidth}{!}
{

    \begin{tabular}{lrrrrrrrrrrr}
    \hline
    \multirow{3}{*}{Method}  & \multicolumn{3}{c}{CIFAR-10}   && \multicolumn{3}{c}{CIFAR-100}  && \multicolumn{3}{c}{EuroSat} \\ \cline{2-4}\cline{6-8}\cline{10-12}
            & $M=0.1k$   & $M=0.2k$   & $M=0.5k$     && $M=0.5k$     & $M=1k$     & $M=2k$     && $M=0.1k$   & $M=0.2k$   & $M=0.5k$   \\ 
            & $Freq=1/100$ & $Freq=1/50$  & $Freq=1/10$    && $Freq=1/100$   & $Freq=1/50$  & $Freq=1/10$  && $Freq=1/100$  & $Freq=1/50$  & $Freq=1/10$   \\ \midrule
    iCaRL\cite{rebuffi2017icarl}   & 80.6{$\pm$0.5} & 83.9{$\pm$0.4} & 88.2{$\pm$0.4} && 55.1{$\pm$0.2}  & 57.9{$\pm$0.4}  & 67.7{$\pm$0.2} && 58.7{$\pm$0.4} & 75.2{$\pm$1.1} & 80.4{$\pm$0.7} \\ 
    EWC\cite{kirkpatrick2017overcoming} & 81.7{$\pm$0.7} & 85.5{$\pm$1.2} & 91.2{$\pm$0.7} && 60.7{$\pm$0.8} & 62.9{$\pm$3.2} & 67.2{$\pm$1.1} && 61.0{$\pm$0.8} & 72.6{$\pm$0.5} & 83.9{$\pm$1.1} \\ 
    DER++\cite{buzzega2020dark}    & 81.5{$\pm$1.2} & 86.7{$\pm$0.8} & 89.9{$\pm$1.0} && 59.2{$\pm$0.9} & 61.1{$\pm$0.8} & 69.2{$\pm$1.4} && 45.0{$\pm$6.0} & 78.2{$\pm$2.4} & 81.9{$\pm$}2.1 \\ 
    PASS\cite{zhu2021prototype}    & 82.0{$\pm$}0.8 & 85.2{$\pm$0.6} & 90.3{$\pm$1.2} && 61.2{$\pm$1.3} & 62.9{$\pm$0.9} &67.0{$\pm$1.5} && 50.1{$\pm$3.1} & 78.1{$\pm$1.2} & 83.5{$\pm$0.8} \\ 
    MC-SGD w/ SAM\cite{mehta2023empirical} & 81.8{$\pm$0.5}  & 83.9{$\pm$1.2} & {90.5{$\pm$0.4}} &&  60.2{$\pm$0.8} & 63.1{$\pm$0.8} & \underline{71.8{$\pm$0.8}} && \underline{61.3{$\pm$0.9}} & \underline{78.9{$\pm$0.5}} & 84.6{$\pm$1.0} \\
    \hline
    AGEM\cite{chaudhry2018efficient}    & 78.6{$\pm$0.7} & 81.2{$\pm$1.1} & 85.7{$\pm$0.9} && 50.2{$\pm$0.7} & 58.9{$\pm$1.1} & 67.4{$\pm$0.8} && 56.4{$\pm$0.7} & 67.7{$\pm$0.9} & 81.9{$\pm$1.0} \\ 
    ER\cite{chaudhry2019tiny}      & 82.6{$\pm$0.5} & 85.4{$\pm$0.4} & \textbf{91.2}{$\pm$0.2} && 61.3{$\pm$0.3}  &\underline{64.6{$\pm$0.4}}  & 71.2{$\pm$1.2} && 58.6{$\pm$0.8} & 70.5{$\pm$0.6} & 84.0{$\pm$0.8} \\ 
    MIR\cite{aljundi2019mir}     & 82.4{$\pm0.4$}& 85.7{$\pm$0.7} & 89.9{$\pm$1.0} && \underline{62.9{$\pm$0.6}} & 63.3{$\pm$0.7} & 71.2{$\pm$1.4} && 59.0{$\pm$1.0} & 71.1{$\pm$0.9} & 84.2{$\pm$1.1} \\ 
    ASER\cite{shim2021online} & 80.7{$\pm$1.2}& 84.4{$\pm$0.6} & 87.2{$\pm$1.0} &&{60.1{$\pm$0.8}} & 62.3{$\pm$0.9} & 70.9{$\pm$1.4} && 59.4{$\pm$1.0} & 72.0{$\pm$0.8} & 83.7{$\pm$0.4} \\ 
    SCR\cite{mai2021supervised}  & \underline{83.8{$\pm0.2$}} & 85.9{$\pm$1.4} & {90.5{$\pm$0.9}} && 61.5{$\pm$0.4}  & 62.7{$\pm$0.2} & 71.2{$\pm$0.4} && 52.1{$\pm$0.8} & 75.9{$\pm$0.7} & 84.8{$\pm$0.6} \\ 
    DVC w/o Aug\cite{gu2022not} &   80.5{$\pm$0.2} &   {85.9}{$\pm$0.3} &   89.2{$\pm$0.7} && 57.9{$\pm$0.6} &   {58.9}{$\pm$0.6} & {67.4}{$\pm$0.8} && 52.0{$\pm$0.9} & 69.1{$\pm$0.8} & 82.7{$\pm$1.1} \\ 
    DVC\cite{gu2022not} & 81.1{$\pm$0.2}  &{85.8{$\pm$0.4}} &90.3{$\pm$0.5} &&  61.6{$\pm$0.8} & 62.9{$\pm$1.0} & 70.7{$\pm$0.9} && 53.6{$\pm$0.7} & 72.7{$\pm$1.1} & \underline{85.3{$\pm$1.0}} \\ 
    OCM w/o Aug\cite{guo2022online} & 79.1{$\pm$1.5}  & 83.3{$\pm$1.4} & 90.1{$\pm$2.0}  && 60.9{$\pm$0.8} & 58.4{$\pm$1.6} & 69.5{$\pm$0.4} && 46.7{$\pm$1.2} & 78.6{$\pm$1.0} & 83.1{$\pm$0.4} \\
    OCM w/ Aug\cite{guo2022online} & 82.1{$\pm$2.9}  & 85.2{$\pm$2.1} & 90.2{$\pm$2.7}  && 61.3{$\pm$1.5} & 60.3{$\pm$1.1} & 70.2{$\pm$0.7} && 51.9{$\pm$1.4} & 76.8{$\pm$}1.2 & 84.0{$\pm$}0.9 \\  
    OnPro w/ Aug\cite{wei2023online} &   81.1{$\pm$0.6} &   \underline{{86.1}{$\pm$0.7}} &   90.1{$\pm$0.8} &&   \underline{62.9{$\pm$0.7}} &   {63.7}{$\pm$0.8} &   {70.5}{$\pm$0.9} && 52.8{$\pm$6.8} & 75.2{$\pm$1.0} & 83.7{$\pm$0.9} \\ 
    \hline
    NsCE  & \textbf{89.9{$\pm$0.4}} & \textbf{90.4{$\pm$0.4}} & \underline{90.7{$\pm$1.0}} && \textbf{74.1{$\pm$0.7}} & \textbf{75.5{$\pm$0.8}} & \textbf{79.7{$\pm$0.9}} && \textbf{75.7{$\pm$0.4}} & \textbf{83.4{$\pm$0.7}} & \textbf{86.3{$\pm$0.4}} \\ \hline
    \end{tabular}
}
\vspace{-10pt}
\end{table*}

\begin{table}[ht]
\caption{$A_{AUC}$ on on large-scale real-world online domain-incremental data stream. We discard OCM on ImageNet due to its significantly higher runtime and computational memory costs. } 
\vspace{-5pt}
\label{tab: clear result A_{AUC}}
\centering
\setlength\tabcolsep{12pt}
\resizebox{0.9\columnwidth}{!}
{

    \begin{tabular}{lrrrrrrrrrr}
    \hline
    {Method}  & \multicolumn{2}{c}{CLEAR-10}  && \multicolumn{2}{c}{CLEAR-100}  && {ImageNet} \\ \cline{2-3} \cline{5-6} \cline{8-9}
            & $M=0.1k$   & $M=0.2k$         && $M=1k$         & $M=2k$          && $M=10k$          &\\ 
            & $Freq=1/100$ & $Freq=1/50$    && $Freq=1/100$   & $Freq=1/50$     && $Freq=1/500$    &\\ \midrule
    ER      & 87.3{$\pm$}1.0 & 87.9{$\pm$}0.5 && 80.1{$\pm$}0.6 & 82.0{$\pm$}0.8 && 55.6$\pm$0.4 \\
    DER++   & 87.4{$\pm$}0.5 & 88.1{$\pm$}0.9 && 78.5{$\pm$}1.1 & 80.4{$\pm$}0.6 && 46.5$\pm$0.4 \\
    EWC     & \underline{88.0{$\pm$}0.4} & 89.0{$\pm$}0.2 && \underline{81.1{$\pm$}0.3} & 81.2{$\pm$}0.5 && 50.9$\pm$1.0 \\
    iCarl   & 86.2{$\pm$}0.8 & 87.1{$\pm$}1.1 && 77.1{$\pm$}0.8 & 80.4{$\pm$}0.9 && \underline{57.1$\pm$1.8} \\
    SCR     & 84.1{$\pm$}0.7 & 85.9{$\pm$}0.6 && 67.2{$\pm$}0.7 & 79.7{$\pm$}0.4 && 52.5$\pm$2.4 \\
    OCM     & 88.1{$\pm$}0.5 & \underline{89.2{$\pm$}0.6} && 80.0{$\pm$}0.9 & \underline{82.5{$\pm$}1.0} && $\times$$\times$$\times$$\times$$\times$ \\
    DVC     & 86.4{$\pm$}0.3 & 87.0{$\pm$}0.7 && 79.4{$\pm$}0.2 & 80.2{$\pm$}0.7 && 53.1$\pm$0.6 \\
    OnPro   & 86.9{$\pm$}1.4 & 88.1{$\pm$}0.9 && 80.4{$\pm$}0.3 & 82.3{$\pm$}0.2 && 52.2$\pm$0.4 \\
    \hline
    NSCE  & \textbf{89.2{$\pm$}0.7} & \textbf{91.4{$\pm$}0.8} && \textbf{84.3{$\pm$0.4}} & \textbf{85.7$\pm$0.3} && \textbf{61.6$\pm$0.7} \\\hline
    \end{tabular}
}
\end{table}

\begin{table}[t] 
\caption{Ablation study of the proposed NsCE framework.}
\setlength\tabcolsep{13pt}
\centering
{
    \resizebox{0.9\columnwidth}{!}
    {
        \begin{tabular}{lrrrrrrr}
        \hline
        \multirow{3}{*}{Method}  & CIFAR10 & CIFAR100 & EuroSat & CLEAR100  & ImageNet    \\ \cline{2-6}
               & $M=0.1k$&$M=0.5k$  &$M=0.1k$ & $M=1k$    & $M=10k$ \\
               & $Freq=1/100$& $Freq=1/100$& $Freq=1/100$& $Freq=1/100$ & $Freq=1/500$\\
        \hline
        vanilla $\mathcal{L}_{ce}$ w/ ER                                       & 82.6{$\pm$0.5} & 61.3{$\pm$0.3} & 58.6{$\pm$0.8} & 80.1{$\pm$}0.6 & 55.6$\pm$0.4 \\ 
        vanilla $\mathcal{L}_{ce}$ w/ ER \& $\mathcal{L}_s$                      & 84.5{$\pm$1.3} & 64.5{$\pm$0.7} & 62.0{$\pm$0.4} & 77.6{$\pm$}0.8 & 56.2$\pm$0.7\\ 
        vanilla $\mathcal{L}_{ce}$ w/ ER \& $\mathcal{L}_s$ $\&$ $\mathcal{L}_p$ & 86.2{$\pm$0.8} & 66.1{$\pm$0.4} & 66.9{$\pm$0.7} & 81.3{$\pm$}1.1 & 59.4$\pm$1.1\\ 
        vanilla $\mathcal{L}_{ce}$ w/ targeted ER                 & 87.2{$\pm$0.9} & 71.9{$\pm$0.9} & 72.4{$\pm$0.4} & 83.7{$\pm$0.6} & 58.2$\pm$1.3\\ \hline
        NsCE                                 & \textbf{89.9{$\pm$0.4}} &\textbf{74.1{$\pm$0.7}} &\textbf{75.7{$\pm$0.4}} & \textbf{84.3{$\pm$0.4}} &\textbf{61.6$\pm$0.7} \\ \hline
        \end{tabular}
    }
}
\label{tab: ablation}
\end{table}

\textbf{Main Results and Analysis.}
We conduct a comprehensive evaluation of our method by comparing its performance with several existing state-of-the-art OCL methods as well as various continual learning variants. Table\ref{tab: main result A_{AUC}} displays the $A_{AUC}$ (area under the accuracy curve) for three synthetic benchmark datasets, showcasing the impact of different memory buffer sizes and replay frequencies. This evaluation metric offers a more comprehensive assessment compared to the commonly used average accuracy\cite{koh2021online}. The results demonstrate that our proposed method, NsCE, consistently outperforms other approaches. Notably, the performance improvement achieved by NsCE is particularly significant when the memory buffer size is relatively small and the number of memory buffer access times is very limited. This finding highlights that the proposed framework helps prevent the model from excessively focusing on the current task which is crucial when memory capacity or access frequency are constrained. Plus, we also evaluate NsCE on real-world domain incremental datasets large-scale image classification dataset. It allows us to assess the performance and generalizability of our approach in real-world scenarios, where the challenges and characteristics may differ. Table\ref{tab: clear result A_{AUC}} demonstrates that NsCE can also enhance the performance in real-world domain incremental settings and complex data streams. More experimental results including model's last time accuracy, sensitivity analysis and evaluation on model throughput are provided in \ref{detailed experiments}. Moreover, we visualize the predictions of our proposed NsCE in Figure\ref{fig: NsCEvis}. From the visualization, it is evident that our model quickly learns the current task while avoiding confusion between past categories and categories in the current task as much as possible, effectively alleviating \textbf{model's ignorance and myopia}.

\begin{figure*}[tbp]
    \centering
\begin{minipage}[t]{1\textwidth}
\centering
    \includegraphics[width=0.195\textwidth]{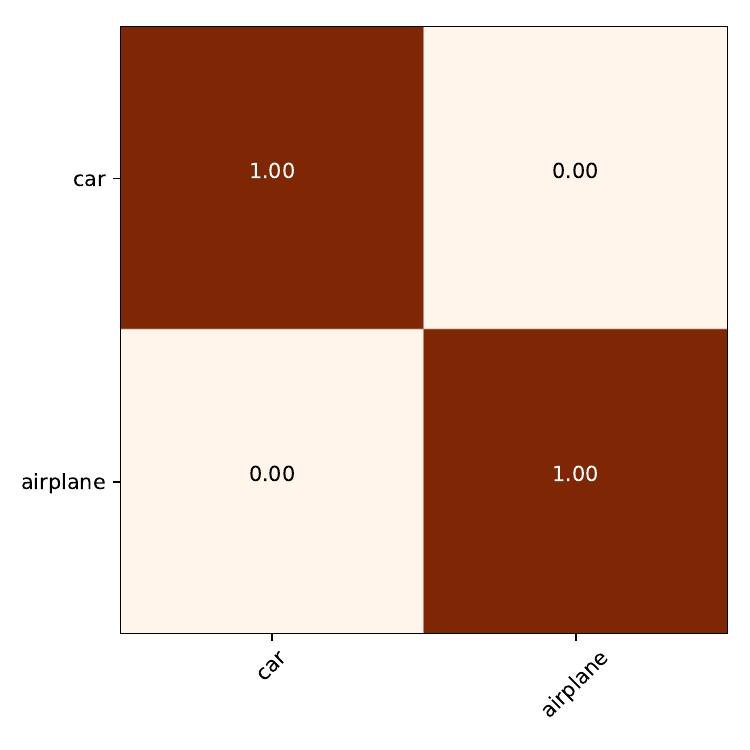}
    \includegraphics[width=0.195\textwidth]{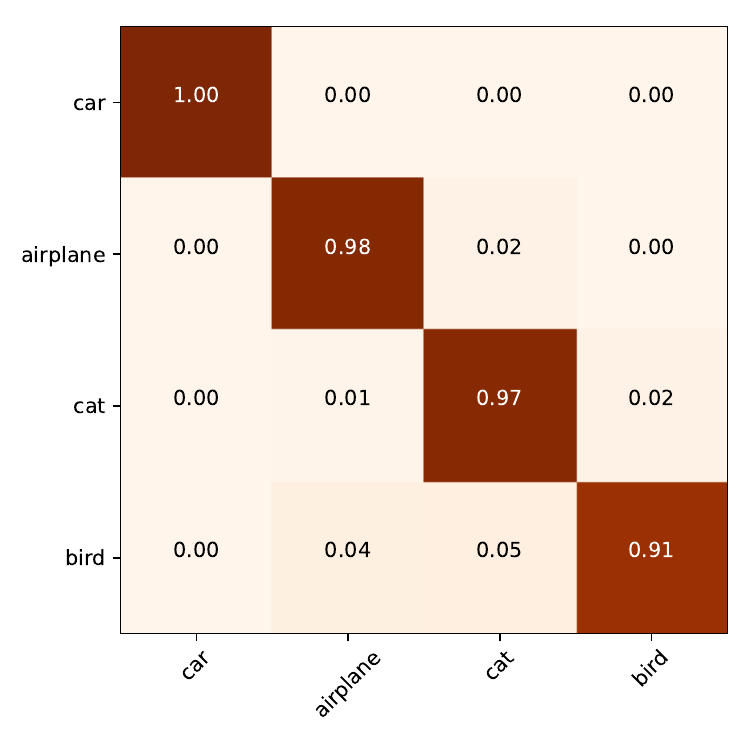} 
    \includegraphics[width=0.195\textwidth]{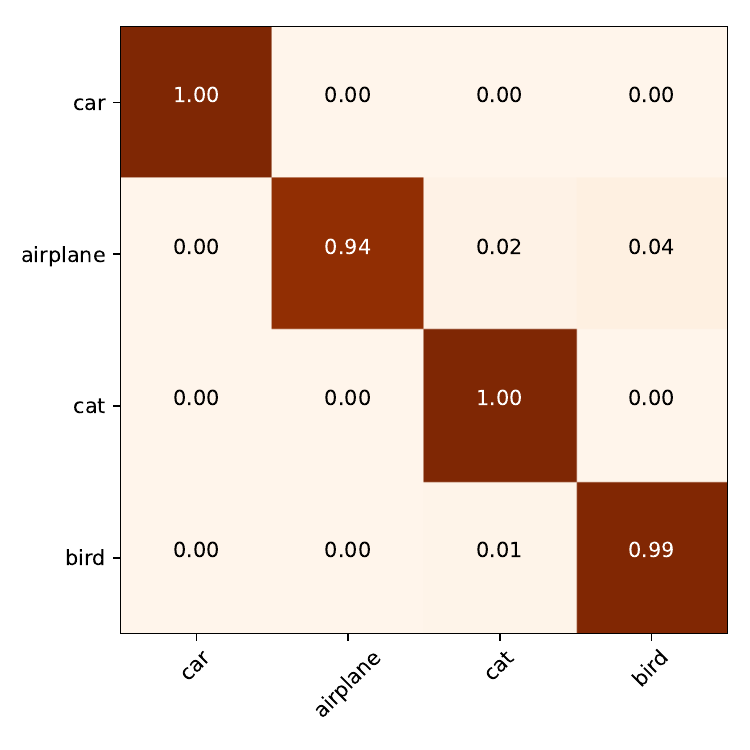}
    \includegraphics[width=0.195\linewidth]{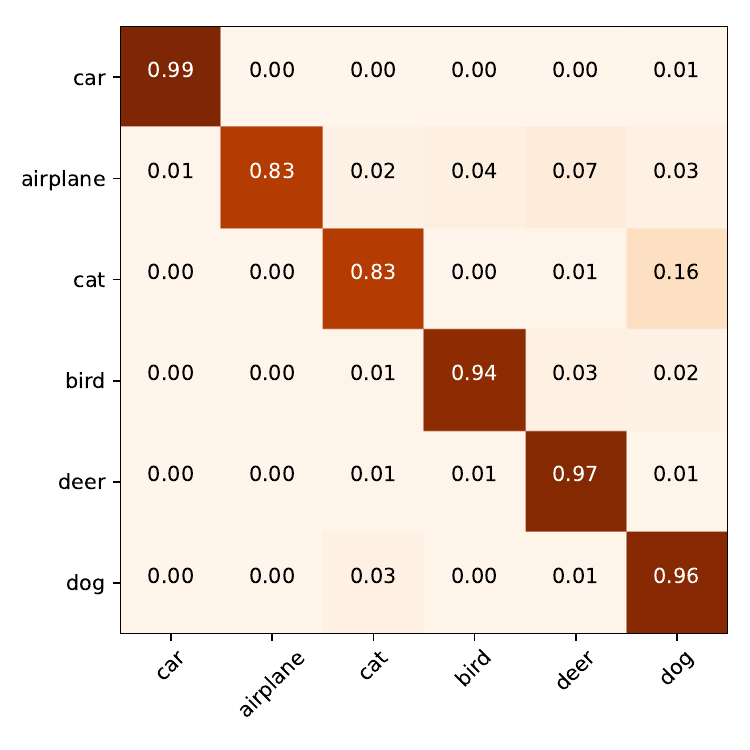} 
    \includegraphics[width=0.195\textwidth]{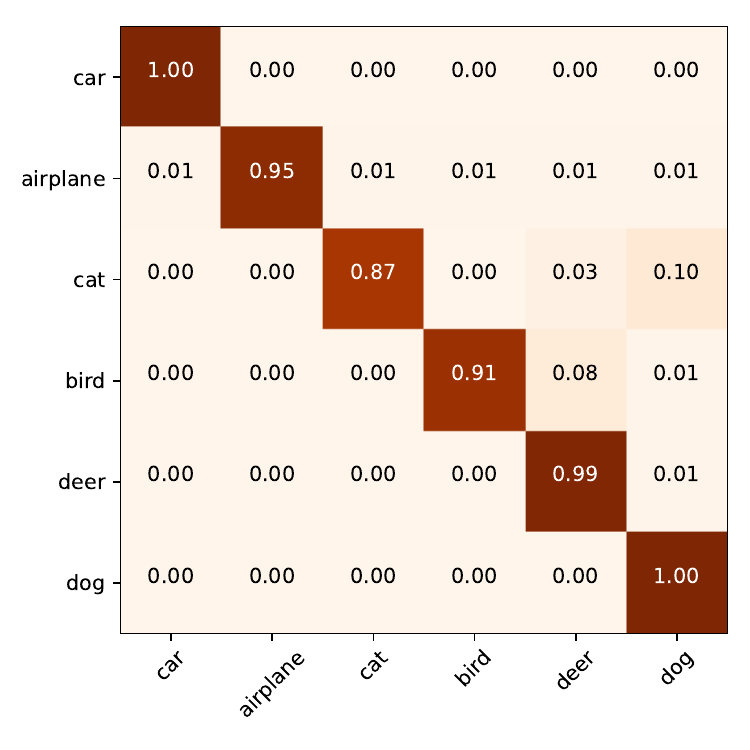} \\
\end{minipage}
\begin{minipage}[t]{1\linewidth}
\centering
    \includegraphics[width=0.195\textwidth]{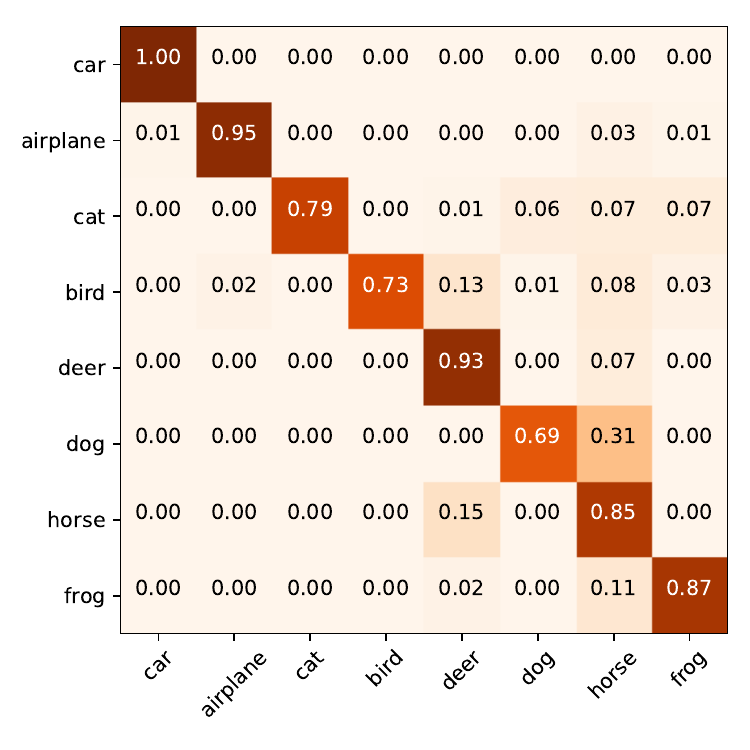} 
    \includegraphics[width=0.195\textwidth]{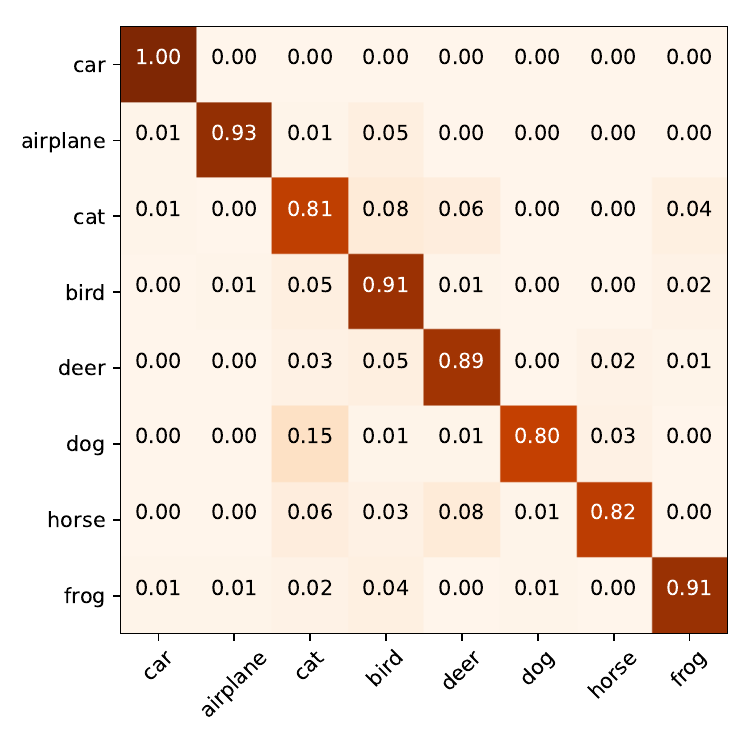}
    \includegraphics[width=0.195\linewidth]{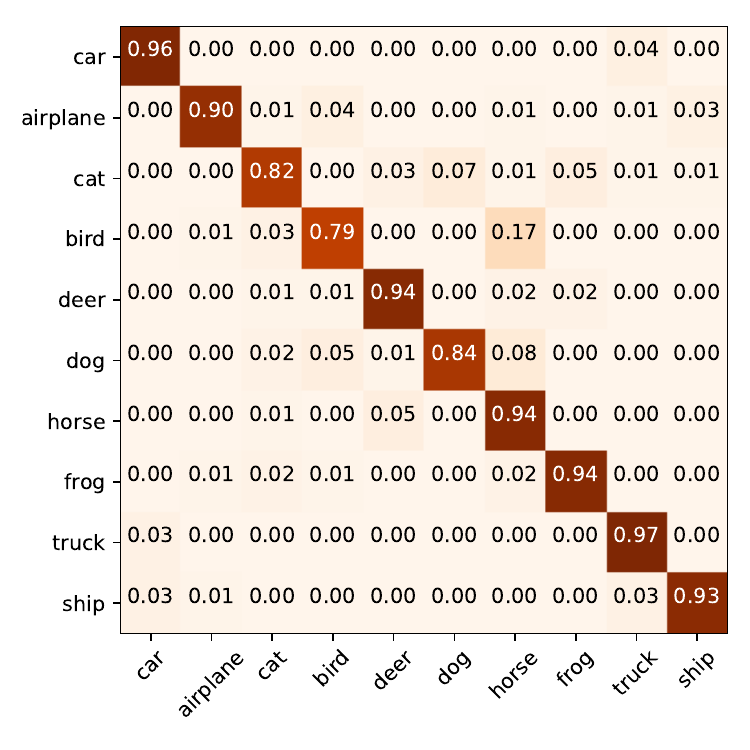} 
    \includegraphics[width=0.195\textwidth]{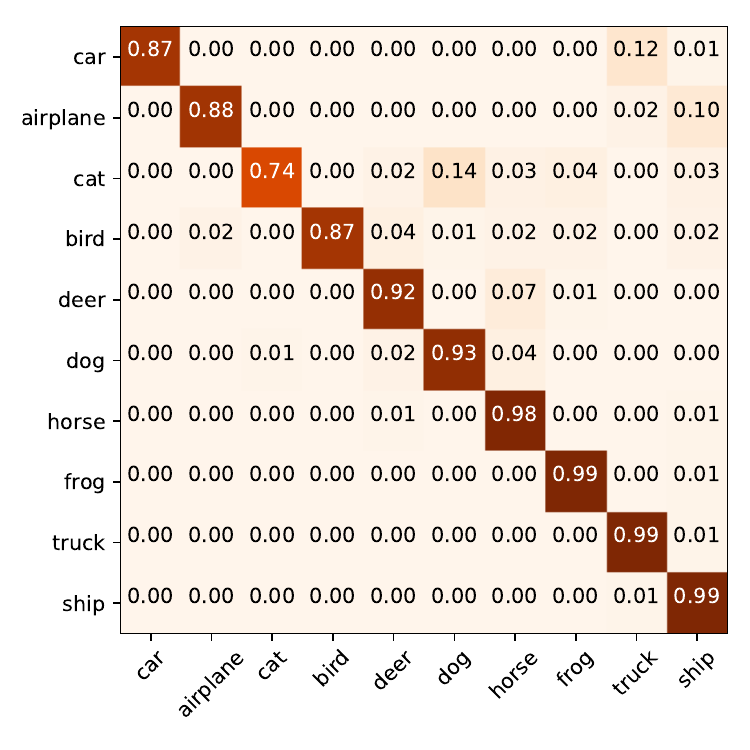}
    \includegraphics[width=0.195\textwidth]{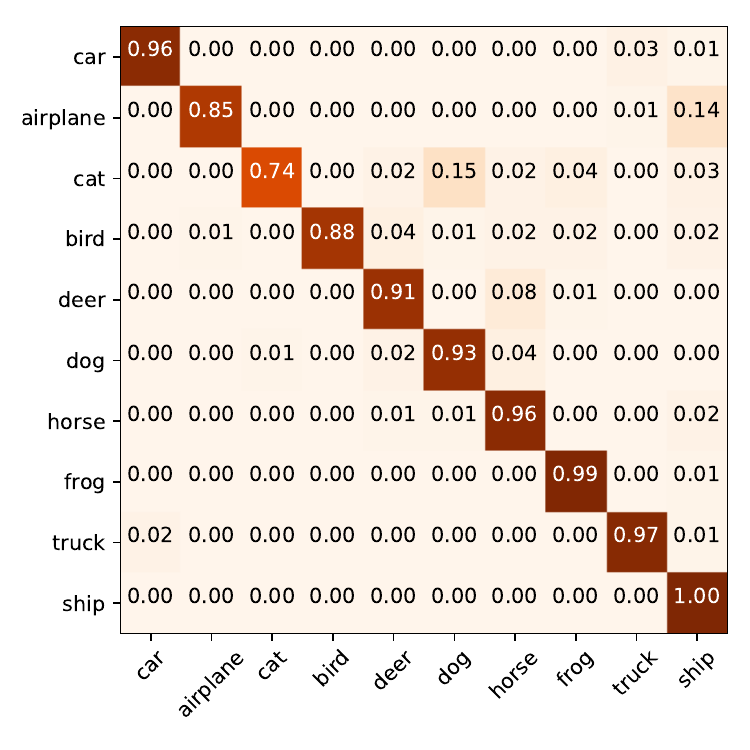} 
    \end{minipage}

    \caption{The detailed normalized confusion matrix (CIFAR10) evolution of  our proposed NsCE framework (memory buffer size is 100 and replay frequency is 1/100).}
    \label{fig: NsCEvis}
\end{figure*}

\textbf{Ablation Studies.}
To investigate the specific effects of different proposed components, we conduct a series of ablation studies. From Table\ref{tab: ablation}, we can draw several observations: (1) Each component we propose provides performance improvements, among which \textbf{targeted ER} has the most obvious effect. (2) Constraints on classifier sparsity, as defined by $\mathcal{L}_s$, prove to be more effective in class incremental scenarios where model's myopia tends to be more pronounced. (3) The maximum separation term $\mathcal{L}_p$ achieves consistent performance improvements across datasets.

\begin{figure}[ht]
    \centering
    \includegraphics[width=0.32\columnwidth]{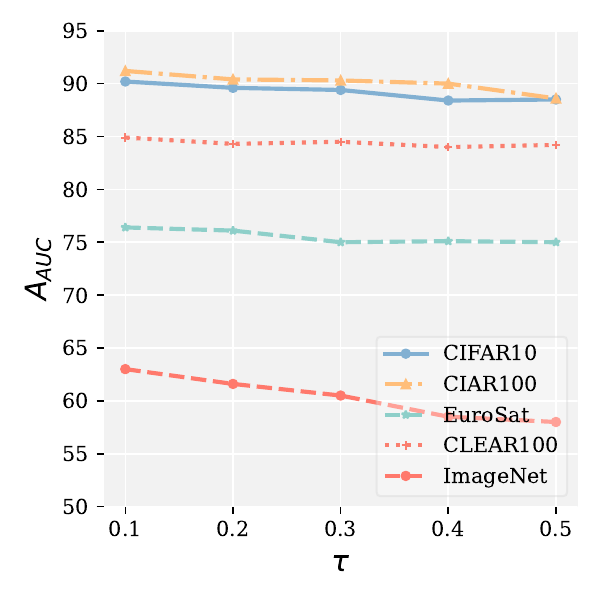}
    \includegraphics[width=0.32\columnwidth]{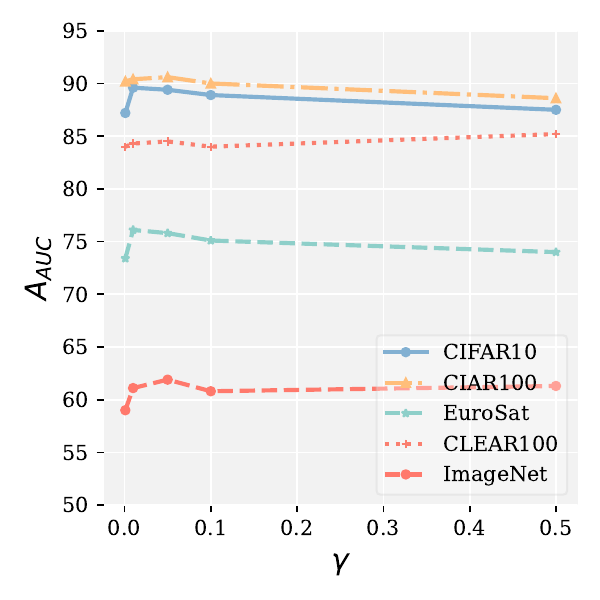}
    \includegraphics[width=0.315\columnwidth]{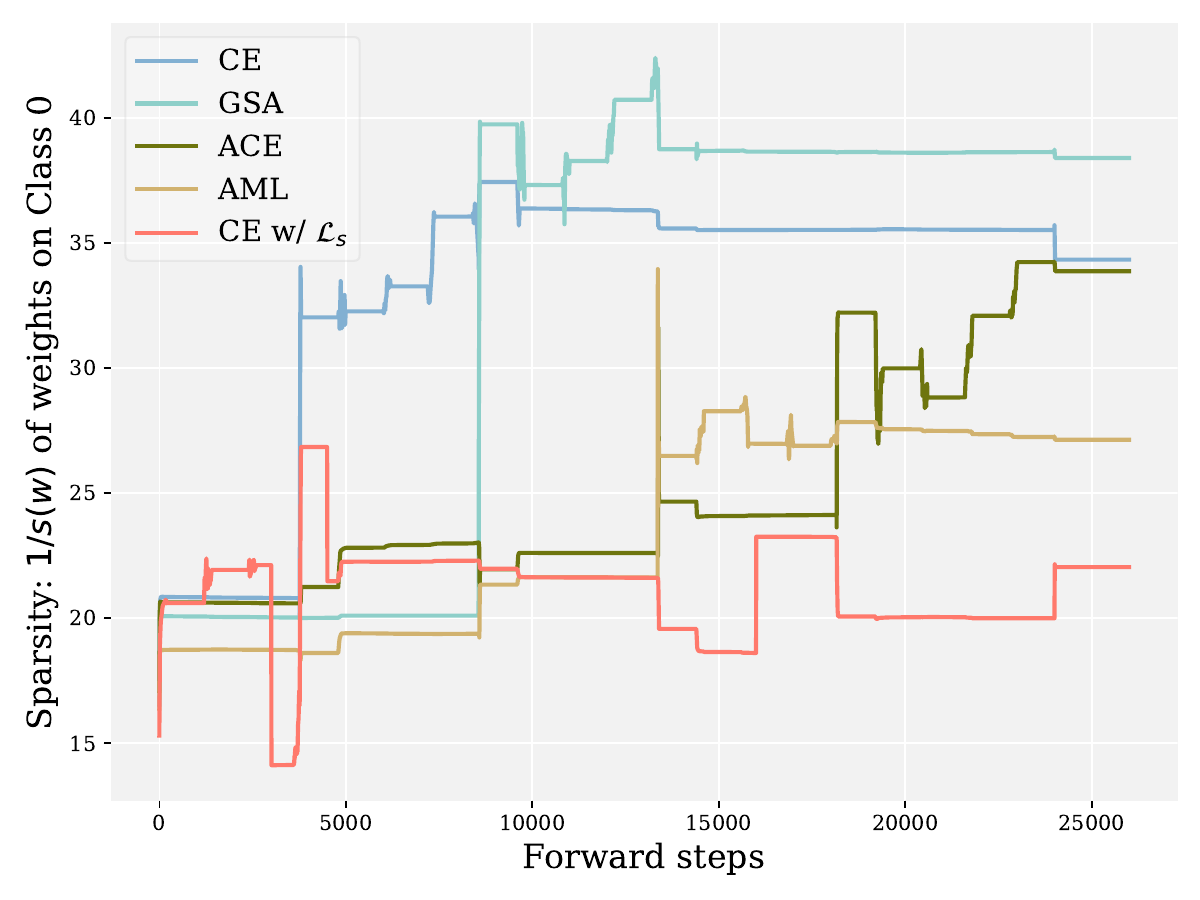}
    \caption{\textbf{Left:} Sensitivity analysis on $\tau$ and $\gamma$. \textbf{Right: } Sparsity ($1/s(w)$) of the classifier under different algorithms.}
    \label{fig: sens}
\end{figure}

\textbf{Sensitivity analysis.}
We analyze the impact of the threshold $\tau$ in targeted experience replay and the coefficient on the non-sparse maximum separate regularization. As depicted in Figure \ref{fig: sens}, we observe that as the threshold $\tau$ increases, our proposed NsCE has a relatively lower area under the accuracy curve ($A_{AUC}$). This trade-off between performance and efficiency is expected, as higher values of $\tau$ lead to fewer samples being replayed, resulting in improved model throughput but potentially compromising performance. Furthermore, our approach demonstrates robust outcomes when the coefficient $\gamma$ is not too small, and it basically achieves the best performance when $\gamma=0.01$. 

\textbf{Classifier sparsity.} 
We are also very interested in how sparsity would be affected by the proposed NsCE and methods focusing on re-arranged last layer weight updates. After implementing ER-ACE and ER-AML\cite{ahn2021ss, caccia2021reducing}, we found that the phenomenon of parameters rapidly becoming sparse is indeed somewhat mitigated, though not as significantly as with our proposed regularization term $\mathcal{L}_s$, as illustrated in Figure \ref{fig: sens}. While incorporating ACE or AML can also boost performance for baselines like ER and SCR. We believe that when ACE and AML nudge the learned representations to be more robust to new future classes, they indirectly decrease the sparsity of the model parameters. For GSA\cite{guo2023dealing}, the sparsity is not affected. But we are not entirely sure whether this part is perfectly embedded or if further tuning would help, as the authors only provide hyperparameters for CIFAR-100. For SS-IL, we did not find its implementation, so it may be left for future works.
\section{Conclusion}
In this study, we conduct a thorough reevaluation of the major challenges in current OCL methods. We delve into the underlying causes of these challenges and the limitation of existing methods. Our analysis highlights two critical limiting factors: \textbf{model's ignorance and myopia}, which can have a more significant impact than the widely recognized issue of catastrophic forgetting. Furthermore, we introduce the NsCE framework, which incorporates non-sparse maximum separation regularization and targeted experience replay techniques with a focus on balancing performance, throughput and practicality. Our work aims to provide a fresh perspective and inspire the OCL field to prioritize both model's performance and efficiency in more real-world scenarios.

\section{Acknowledgments and Disclosure of Funding}
This work was supported by the National Science and Technology Major Project 2022ZD0114801, Natural Science Foundation of China (NSFC)  (Grant No.62376126), the National Key R\&D Program of China (2022ZD0114801), National Natural Science Foundation of China (61906089), Natural Science Foundation of China (NSFC) (Grant No.62106102),  Natural Science Foundation of Jiangsu Province (BK20210292), Graduate Research and Practical Innovation Program at Nanjing University of Aeronautics and Astronautics (xcxjh20221601).
\newpage

\bibliography{neurips_2024}
\newpage
\appendix

\section{Related Works}
\textbf{Continual Learning.}
Continual learning is a research field dedicated to learning continuously while mitigating the forgetting of previously acquired knowledge\cite{oren2021defense, mirza2022efficient,garg2022multi,mirza2022efficient,garg2022multi,van2019three}. Most continual learning approaches employ three types of techniques. Regularization-based approaches introduce regularization terms or constraints to the learning process to preserve previously learned knowledge\cite{kirkpatrick2017overcoming,zenke2017continual,aljundi2018memory,dhar2019learning}. Memory-based approaches utilize external memory buffers or replay mechanisms to store and replay past data, allowing the model to retain access to previous experiences\cite{guo2020improved,zhu2021prototype,chaudhry2018efficient,chaudhry2019tiny,shim2021online,wang2018glue}. Architecture-based approaches involve modifying the model architecture to facilitate continual learning\cite{sodhani2020toward,yoon2017lifelong,mallya2018packnet,kim2022continual}. Additionally, reducing storage overhead and minimizing dependence on hardware devices are issues of concern in the research community\cite{wang2022memory,zhou2022model}.

\textbf{Online Continual Learning.}
Online continual learning (OCL) serves as a more realistic extension of continual learning. Unlike traditional batch learning, where the entire dataset for each task is available upfront, OCL operates in scenarios where data distributions dynamically change over time. In OCL, similar to memory-based approaches in CL, most methods leverage a real-time accessible memory buffer and employ various experience replay methods to mitigate the issue of forgetting\cite{chaudhry2018efficient,chaudhry2019tiny,aljundi2019mir,aljundi2019gradient, shim2021online, aljundi2019task, chrysakis2020online, chaudhry2020continual, rebuffi2017icarl, de2019episodic, sokar2021learning, wang2022memory, caccia2021new}. Besides, other OCL methods aim to improve the learning of better features and classifiers in a single-pass training manner\cite{rebuffi2017icarl}. Techniques like contrastive learning\cite{mai2021supervised,cha2021co2l}, mutual information maximization\cite{gu2022not,guo2022online}, and prototype learning\cite{zhu2021prototype,wei2023online} have been employed to enhance the discriminative abilities of the model and improve its performance. Moreover, there are other works that focus a more proper evaluation of existing algorithms\cite{koh2021online, ghunaim2023real}. Compared with methods that aim for better performance, we focus on rethinking key challenges in OCL and then design a framework under more realistic throughput and storage constraints. Despite that online learning has been extensively studied by theoretical community, research on generalization bounds tailored for online continual learning remains scarce and our bound serves a simple attempt to bridge the performance and model throughput.

\textbf{Online Continual Learning with Pre-trained Models.} The utilization of pre-trained models has become a common approach in various machine learning tasks, including transfer learning\cite{you2021logme, chen2021pre}, natural language processing\cite{devlin2018bert} and class-incremental learning\cite{mehta2023empirical, zhou2023revisiting}. While the effectiveness of pre-trained models has been well-established for these applications, only a few works\cite{lee2023pre} have explored their impact on OCL. These studies reveal underperforming algorithms can become very competitive when considering when using pre-trained models \cite{he2022masked, chen2020simple, radford2021learning, guo2022adaptive}.

\section{Detailed Proof and Limitations} \label{proof}
We first reintroduce the classical PAC-Bayes adapted from \cite{alquier2021user, alquier2016properties} as the Lemma.
\begin{lemma}[Adapted from \cite{alquier2016properties}, Thm 4.1] \label{th:naive pac bayes}
    Let $\mathcal{D}=(z_1,...,z_m)$ be an iid set sampled from the law $\mu$.
  For any data-free prior $P$, for any loss function $\ell$ bounded by $K$, any $\lambda>0,\delta\in [0,1]$, one has with probability $1-\delta$ for any posterior $Q\in\mathcal{M}_1(\mathcal{H})$:

  \[ \mathbb{E}_{h\sim Q}\mathbb{E}_{z\sim \mu}[\ell(h,z)] \leq \frac{1}{m} \sum_{i=1}^m \mathbb{E}_{h\sim Q}[\ell(h,z_i)] + \frac{\operatorname{KL}(Q\| P) + \log(1/\delta)}{\lambda} + \frac{\lambda K^2}{2m}, \]
  where $\mathcal{M}_1(\mathcal{H})$ denotes the set of  all probability distributions on $\mathcal{H}$.
\end{lemma}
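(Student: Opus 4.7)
The plan is to prove this single-dataset PAC-Bayes inequality by the standard three-ingredient recipe: the Donsker--Varadhan change-of-measure inequality, a Fubini + Hoeffding moment-generating-function (MGF) bound, and Markov's inequality. Concretely, I will work with the rescaled deviation functional
\begin{equation*}
f(h) \;=\; \lambda\!\left( \mathbb{E}_{z\sim\mu}[\ell(h,z)] - \frac{1}{m}\sum_{i=1}^{m}\ell(h,z_i)\right),
\end{equation*}
and the goal is to bound $\mathbb{E}_{h\sim Q}[f(h)]$ with high probability over $\mathcal{D}$.

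First I would invoke the Donsker--Varadhan variational identity: for any measurable $f$ and any measures $Q \ll P$,
\begin{equation*}
\mathbb{E}_{h\sim Q}[f(h)] \;\le\; \operatorname{KL}(Q\|P) + \log \mathbb{E}_{h\sim P}\bigl[\exp(f(h))\bigr].
\end{equation*}
This is the crucial step that converts the $Q$-expectation (which depends on the data through the learning algorithm) into a $P$-expectation that does not. Because $P$ is data-free, I can then swap the expectation over $\mathcal{D}$ with the $P$-expectation using Fubini's theorem, reducing the problem to controlling $\mathbb{E}_{\mathcal{D}}[\exp f(h)]$ for a \emph{fixed} $h$.

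Next, for fixed $h$ the quantity $\exp f(h)$ factorizes over the iid sample since $f(h) = \sum_{i=1}^{m}(\lambda/m)\bigl(\mathbb{E}_\mu[\ell(h,z)] - \ell(h,z_i)\bigr)$, and each summand is a centered random variable bounded in $[-\lambda K/m, \lambda K/m]$. Hoeffding's lemma then yields, per factor, an MGF bound of $\exp\!\bigl(\lambda^2 K^2/(2m^2)\bigr)$, and multiplying the $m$ factors gives
\begin{equation*}
\mathbb{E}_{\mathcal{D}}\bigl[\exp f(h)\bigr] \;\le\; \exp\!\left(\frac{\lambda^2 K^2}{2m}\right).
\end{equation*}
Averaging over $h\sim P$ preserves the bound, so $\mathbb{E}_\mathcal{D}\mathbb{E}_{h\sim P}[\exp f(h)] \le \exp(\lambda^2 K^2/(2m))$. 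Applying Markov's inequality to the nonnegative random variable $\mathbb{E}_{h\sim P}[\exp f(h)]$ gives, with probability at least $1-\delta$,
\begin{equation*}
\log \mathbb{E}_{h\sim P}[\exp f(h)] \;\le\; \frac{\lambda^2 K^2}{2m} + \log\!\frac{1}{\delta}.
\end{equation*}
Substituting back into Donsker--Varadhan, dividing by $\lambda$, and rearranging produces exactly the stated inequality.

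The main conceptual subtlety is the order of quantifiers: the bound must hold uniformly in $Q$ on the same high-probability event, which is why we use the $P$-centered MGF (data-free prior) and apply Markov once to the $Q$-free quantity $\mathbb{E}_{h\sim P}[\exp f(h)]$ before the change of measure is performed. The main technical point is the Hoeffding MGF estimate, where the factor of $2$ in $(b-a)=2K$ exactly matches the $K^2/(2m)$ in the final bound; getting this constant right is the one place a routine calculation could go wrong. Everything else is bookkeeping (Fubini's applicability follows from nonnegativity/boundedness of $\exp f(h)$, and dividing by $\lambda>0$ preserves the inequality direction).
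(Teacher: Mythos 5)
Your proof is correct and is essentially the standard argument behind the cited result: the paper itself does not prove this lemma but imports it verbatim from Alquier et al.\ (Thm~4.1), adding only the remark that boundedness of $\ell$ gives the subgaussian moment bound, and that source's proof is exactly your Donsker--Varadhan $+$ Hoeffding $+$ Markov recipe with the Markov step applied to the $Q$-free quantity $\mathbb{E}_{h\sim P}[e^{f(h)}]$ to get uniformity over posteriors. Your constant bookkeeping is also consistent with the statement: taking each centered summand in $[-\lambda K/m,\lambda K/m]$ yields $\exp(\lambda^2K^2/(2m))$ for the product, matching the $\lambda K^2/(2m)$ term (a tighter range argument would even give $\lambda K^2/(8m)$, but the stated lemma only claims the looser constant).
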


\begin{remark}
    Theorem\ref{th:naive pac bayes} is a special case of the original theorem from \cite{alquier2016properties} as we take the case of a bounded loss which implies the subgaussianity of the random variables $\ell(.,z_i)$ and then allows us to recover the factor $\frac{\lambda K^2}{m}$.
\end{remark}

Following \cite{rivasplata2020pac}, we introduce the notion of \emph{stochastic kernel} which formalise properly data-dependent measures within the PAC-Bayes framework. First, for a fixed predictor space $\mathcal{H}$, we set $\Sigma_{\mathcal{H}}$ to be the considered $\sigma$-algebra on $\mathcal{H}$. We denote $\mathcal{M}_1(\mathcal{H})$ as the set of all probability distributions on $\mathcal{H}$.

\begin{definition}[Stochastic kernels\cite{rivasplata2020pac}] \label{def: stoc kernels}
    A \emph{stochastic kernel} from $\mathcal{D}=\mathcal{Z}^m$ to $\mathcal{H}$ is defined as a mapping $Q: \mathcal{Z}^m\times \Sigma_{\mathcal{H}} \rightarrow [0;1]$ where
    \begin{itemize}
        \item For any $B\in \Sigma_{\mathcal{H}}$, the function  $\mathcal{D}=(z_1,...,z_m)\mapsto Q(\mathcal{D},B)$ is measurable,
        \item For any $\mathcal{D}\in\mathcal{Z}^m$, the function $B\mapsto Q(\mathcal{D},B)$ is a probability measure over $\mathcal{H}$.
    \end{itemize}
    We denote by $\texttt{Stoch}(\mathcal{D},\mathcal{H})$ the set of all stochastic kernels from $\mathcal{S}$ to $\mathcal{H}$ and for a fixed $S$, we set $Q_\mathcal{D}:= Q(\mathcal{D},.)$ the data-dependent prior associated to the sample $S$ through $Q$.
\end{definition}
Following Theorem\ref{def: stoc kernels}, we provide a formal definition of the \textbf{online predictive sequence} as \cite{haddouche2022online}:
\begin{definition}
  A sequence of stochastic kernels $(P_i)_{i=1..m}$ is denoted as an \emph{\textbf{online predictive sequence}} if (i) for all $i\geq 1, S\in\mathcal{Z}^m, P_i(\mathcal{D},.)$ is $\mathcal{F}_{i-1}$ measurable and (ii) for all $i \geq 2$, $P_i(\mathcal{D},.)\gg P_{i-1}(\mathcal{D},.)$.
\end{definition}
For $P,Q \in \mathcal{M}_{1}\left(\mathcal{H}\right)$, the notation $P \ll Q$ indicates that $Q$ is absolutely continuous wrt $P$ (i.e. $Q(A) = 0$ if $P(A) = 0$ for measurable $A \subset \mathcal{H}$). Before giving a detailed proof, let us first reclaim our theorem.
\begin{theorem}
  \label{th: continual pac payes}
  For any distributions $\mu_1,...,\mu_T$ over $\mathcal{Z}$, $\mathcal{D}_t=(z_1^t,...,z_{m_t}^t)$ an iid set with $m_t=\min(v_s,v_m)\Delta_t$ samples sampled from $\mu_t$ as the dataset of task $t$ and $v_m$ is the model throughput, $v_s$ is the flow rate of the stream  and $\Delta_t$ is the the time of the data stream for task $t$, for any $\lambda>0$ and any online predictive sequence (used as both priors and posteriors) $(Q_0, Q_1, ..., Q_T)$, the following inequality holds with probability $1-\delta$:

  \begin{align*}
    \sum_{t=1}^T \mathbb{E}_{h_t\sim Q_{t}}\left[ \mathbb{E}_{z_t\sim \mu_t}[\ell(h_t,z_t)]    \right] \leq  \sum_{t=1}^T \sum_{j=1}^{m_t}{\frac{\mathbb{E}_{h_t\sim Q_{t}}\left[ \ell(h_t,z_j^t) \right] }{m_t}} + \\
    \sum^T_{t=1}\frac{\operatorname{KL}(Q_{t}\| Q_{t-1})}{\lambda} + \sum^T_{t=1}\frac{\lambda K^2}{2m_t} + \frac{T\log(T/\delta)}{\lambda}.
  \end{align*}
\end{theorem}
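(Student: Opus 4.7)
The plan is to reduce the continual bound to a per-task application of the classical PAC-Bayes inequality (Lemma stated as \textsc{naive pac bayes}) and then combine across tasks via a union bound. For every task $t\in\{1,\ldots,T\}$, I treat $Q_{t-1}$ as the prior and $Q_{t}$ as the posterior, and apply Lemma of Alquier et al.\ to the iid sample $\mathcal{D}_t=(z_1^t,\ldots,z_{m_t}^t)\sim\mu_t^{\otimes m_t}$ with confidence level $\delta/T$. This yields, with probability at least $1-\delta/T$ over the draw of $\mathcal{D}_t$,
\begin{equation*}
\mathbb{E}_{h_t\sim Q_t}\mathbb{E}_{z_t\sim\mu_t}[\ell(h_t,z_t)] \leq \frac{1}{m_t}\sum_{j=1}^{m_t}\mathbb{E}_{h_t\sim Q_t}[\ell(h_t,z_j^t)] + \frac{\operatorname{KL}(Q_t\|Q_{t-1})+\log(T/\delta)}{\lambda} + \frac{\lambda K^2}{2m_t}.
\end{equation*}

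The crux that needs justification is that $Q_{t-1}$ qualifies as a ``data-free'' prior for Lemma \ref{th:naive pac bayes} with respect to $\mathcal{D}_t$, even though $Q_{t-1}$ genuinely depends on $\mathcal{D}_1,\ldots,\mathcal{D}_{t-1}$. This is exactly what the online predictive sequence / stochastic kernel formalism (Definition \ref{def: stoc kernels}) is designed to handle: since the filtration $\mathcal{F}_{t-1}$ is generated by the data of the previous tasks, and since $\mathcal{D}_t$ is drawn independently from $\mu_t$, I can condition on $\mathcal{F}_{t-1}$ and apply Lemma \ref{th:naive pac bayes} to the conditional distribution of $\mathcal{D}_t$. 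Under this conditioning $Q_{t-1}$ becomes a deterministic (hence data-free) prior, so the lemma applies verbatim; the absolute continuity requirement $Q_t\ll Q_{t-1}$ built into the definition of an online predictive sequence guarantees the KL term is well defined. Integrating the conditional inequality gives the unconditional one.

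Next I take a union bound over the $T$ events, each of probability at least $1-\delta/T$, to conclude that all $T$ inequalities hold simultaneously with probability at least $1-\delta$. Summing them over $t=1,\ldots,T$ collapses the $\log(T/\delta)/\lambda$ terms into $T\log(T/\delta)/\lambda$ and leaves the three structural terms $\hat{\mathcal{R}}$, $\sum_t \operatorname{KL}(Q_t\|Q_{t-1})/\lambda$, and $\sum_t \lambda K^2/(2 m_t)$ on the right-hand side. Substituting $m_t=\min(v_s,v_m)\Delta_t$ then directly produces the model-throughput term $\mathcal{M}$ quoted in the main text.

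I expect the main obstacle to be the measurability bookkeeping around the data-dependent prior: one has to verify that the sequence of kernels $(Q_t)_{t=0}^T$ is genuinely adapted to the filtration generated by $(\mathcal{D}_1,\ldots,\mathcal{D}_t)$, and that the conditional application of Lemma \ref{th:naive pac bayes} preserves the high-probability statement after integrating out the conditioning. A minor secondary point is explaining the slight asymmetry in the bound (the factor $1/(2 m_t)$ in the lemma versus $1/m_t$ in Theorem \ref{th1: continual pac payes}), which can be absorbed by a redefinition of $\lambda$ or a constant; I would flag this in a remark rather than rederive the subgaussian constant. Everything else---the sum telescoping, the union bound, and the final substitution $m_t=\min(v_s,v_m)\Delta_t$---is bookkeeping.
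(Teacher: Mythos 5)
Your proposal follows essentially the same route as the paper's proof: apply the classical Alquier-style PAC-Bayes bound per task with $Q_{t-1}$ as prior and $Q_t$ as posterior at confidence level $\delta/T$, union bound over the $T$ tasks, and sum. Your extra care about conditioning on the filtration $\mathcal{F}_{t-1}$ to justify treating $Q_{t-1}$ as a data-free prior, and your flag of the $1/(2m_t)$ versus $1/m_t$ discrepancy between the lemma and the main-text statement of the bound, are both correct refinements of points the paper handles only informally.
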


\begin{proof}
    For each task $t$ in OCL, we can consider $Q_{t-1}$ as a prior since it doesn't depend on the dataset $\mathcal{D}_t$. By applying Theorem\ref{th:naive pac bayes}, we can have that let $\mathcal{D}_t=(z_1,...,z_{m_t})$ be an iid set sampled from the law $\mu_t$. For data-free prior $Q_{t-1}$, for any loss function $\ell$ bounded by $K$, any $\lambda>0,\Tilde{\delta}\in [0,1]$, one has with probability $1-\Tilde{\delta}$ for a data-dependent posterior $Q_t\in\mathcal{M}_1(\mathcal{H})$:
    \begin{align*}
        \mathbb{E}_{h_t\sim Q_{t}}\left[ \mathbb{E}_{z_t\sim \mu_t}[\ell(h_t,z_t)]    \right] \leq \sum_{j=1}^{m_t}{\frac{\mathbb{E}_{h_t\sim Q_{t}}\left[ \ell(h_t,z_j^t) \right] }{m_t}}+
        \frac{\operatorname{KL}(Q_{t}\| Q_{t-1})}{\lambda} + \frac{\lambda K^2}{2m_t} + \frac{\log(\Tilde{\delta})}{\lambda}.
    \end{align*}
  We then make $\Tilde{\delta}=\delta/T$ and take an union bound on all tasks to ensure with probability $1-\delta$ for any $t\in {1,2,...T}$:
  \begin{align*}
    \mathbb{E}_{h_t\sim Q_{t}}\left[ \mathbb{E}_{z_t\sim \mu_t}[\ell(h_t,z_t)]    \right] \leq \sum_{j=1}^{m_t}{\frac{\mathbb{E}_{h_t\sim Q_{t}}\left[ \ell(h_t,z_j^t) \right] }{m_t}}+
    \frac{\operatorname{KL}(Q_{t}\| Q_{t-1})}{\lambda} + \frac{\lambda K^2}{2m_t} + \frac{\log(T/\delta)}{\lambda}.
  \end{align*}
  Then, by taking a sum on all tasks, we can have the following result with probability $1-\delta$:
  \begin{align*}
    \sum_{t=1}^T \mathbb{E}_{h_t\sim Q_{t}}\left[ \mathbb{E}_{z_t\sim \mu_t}[\ell(h_t,z_t)]    \right] &\leq  \sum_{t=1}^T \sum_{j=1}^{m_t} {\frac{\mathbb{E}_{h_t\sim Q_{t}}\left[ \ell(h_t,z_j^t) \right] }{m_t}}+ \sum^T_{t=1}\Big[
    \frac{\operatorname{KL}(Q_{t}\| Q_{t-1})}{\lambda} + \\ \frac{\lambda K^2}{2m_t} + \frac{\log(T/\delta)}{\lambda}\Big] \\
    &=\underbrace{\sum_{t=1}^T \sum_{j=1}^{m_t}{\frac{\mathbb{E}_{h_t\sim Q_{t}}\left[ \ell(h_t,z_j^t) \right] }{m_t}}}_{\hat{\mathcal{R}}} + \underbrace{\sum_{t=1}^T\frac{\lambda K^2}{2\min(v_s,v_m)\Delta_t}}_{\mathcal{M}} \\ &+\underbrace{\sum_{t=1}^T\frac{\operatorname{KL}(Q_{t}\| Q_{t-1})}{\lambda}}_{\mathcal{D}}  + \underbrace{\frac{T\log(T/\delta)}{\lambda}}_{const}.
  \end{align*}
\end{proof}
    
The flexibility of the classical PAC-Bayes bound allows the stochastic kernels $Q_t$ to be either data-dependent distributions or not, as stated in Theorem\ref{th:naive pac bayes}. In the case of data-dependent distributions, the only available prior we can select in the predictive sequence $Q_t$ is the initial prior distribution $Q_0$. Meanwhile, it is possibly the largest term in the divergence term $D$. This emphasizes the significance of a good initialization and pre-trained model in achieving favorable results. Furthermore, by examining Lemma\ref{th: continual pac payes}, we can observe that the derived bound deteriorates as the number of tasks $T$ increases. This deterioration arises from the growing number of new tasks, which makes online continual learning more challenging. As the model needs to adapt and accommodate an expanding set of tasks, the learning process becomes increasingly complex and prone to performance degradation. 

Furthermore, the third model throughput term in the bound emphasizes the significance of model throughput, as it directly impacts $v$. Many existing techniques in OCL, such as data augmentation, knowledge distillation and gradient constraints all increase the training time, consequently reducing the amount of data ($m_t$) that the model can process when the data stream's flow rate surpasses the $v$. A lower model throughput not only hampers the practicality of the OCL method but also restricts the model's generalization ability.
    
Compared to traditional generalization bounds that only consider the final output of an algorithm, the left side of Theorem\ref{th: continual pac payes} evaluates the performance of the model at each time step. This distinction is crucial because in continual learning scenarios, the model's performance should be assessed and monitored throughout the learning process, rather than solely focusing on the final outcome. Indeed, considering the performance of the model at each time step can be seen as a compromise that aligns the generalization gap with a notion of regret. Compared with the regret bound provided in \cite{haddouche2022online}, the deteriorated convergence rate is mainly caused by the fact that at each time step we don't have an access to all the past data to predict the future as the projected Online Gradient Descent (OGD) algorithm. In summary, this is just a simple and natural extension of \cite{alquier2016properties} in the context of OCL. However, we believe that Theorem\ref{th: continual pac payes} already provides some theoretical guidance for the issues that OCL needs to address. In future work, we hope to obtain a more in-depth theoretical analysis specifically for this problem.

\textbf{Limitations.} First, it is important to clarify that the bound discussed is specifically used to illustrate the generalization risk associated with each individual task, rather than representing a global risk. This limitation means that our current theoretical analysis can not extend to data streams composed of disjoint tasks. Despite this constraint, we believe that it does not detract from the main findings presented in the core sections (\textbf{model's ignorance and myopia}) of our paper. Minimizing the distribution divergence across tasks remains one of the most fundamental concepts in OCL, despite our theoretical results can not fully reflect the benefits of mitigating model's forgetting or myopia. Part of the reason is that such a sum of risk is easier to account for the problem of the trade-off between effective learning and model throughput. On the other hand, it is actually extremely challenging to directly establish the relationship between the expected risk of the posterior distribution $Q_t$ and the empirical losses of the entire training process. Actually, research on generalization bounds tailored for online continual learning remains scarce and our bound serves a simple attempt in this area and a extension of Lemma\ref{th:naive pac bayes}.

\section{Setups and Additional Experiments} \label{appendix: experiment}
\subsection{Experiment Setups} \label{appendix: experiment setup}
\textbf{Memory buffers.}
Before delving into the specifics experimental results, it is essential to highlight a significant distinction in the utilization of memory buffers between this paper and other works. Traditional methods typically employ a real-time accessible memory buffer, where at each time step $t$, the model receives a mini-batch of data $X\cup X^b$, drawn i.i.d from $\mathcal{D}_t$ and the memory buffer $\mathcal{B},$ respectively. However, in this work, we impose limitations on the number of requests allowed to retrieve data from the memory buffer. Furthermore, we will assess the time and storage overhead incurred by any additional computations, including data augmentation, knowledge distillation, and gradient calculations. For our experiments, we employ three distinct memory sizes along with their corresponding experience replay frequencies, as presented in each respective table. In contrast to existing replay-based methods that sample a small batch of data from the memory buffer at every training iteration, we evaluate the performance of OCL methods under the assumption that they only have access to the memory buffer every 10, 50, 100 or 500 training iterations. This approach allows us to evaluate the methods under various throughput requirements and is more aligned with the off-site storage of data and models in real-world scenarios.

\textbf{Datasets.}
We use 6 image classification datasets in the evaluation including CIFAR10, CIFAR100, EuroSat, CLEAR10, CLEAR100 and ImageNet\cite{krizhevsky2009cifar, helber2019eurosat, lin2021clear, deng2009imagenet}. {CIFAR10} has 10 classes with 40,000 for training and 10,000 for testing. It is split into 5 disjoint tasks with 2 classes per task. {CIFAR100} has 100 classes with 40,000 for training and 10,000 for testing. It is split into 20 disjoint tasks with 5 classes per task. {EuroSat} has 10 classes with 17,799 for training and 7,000 for testing. It is split into 5 disjoint tasks with 2 classes per task. {CLEAR10, CLEAR100}, two continual image classification benchmark datasets with a natural temporal evolution of visual concepts in the real world that spans a decade (2004-2014). We adopt the "streaming" protocols for CL that always test on both seen data and data in the (near) future. {ImageNet} has 1000 classes with 1,281,167 for training and 50,000 for testing. It is split into 200 disjoint tasks with 5 classes per task. All the methods are trained in a supervised manner and tested on seen classes at any given time. In our experiments, we employ a blurry task boundary as suggested by \cite{koh2021online} instead of the conventional disjoint task boundary to better reflect realistic and practical scenarios. Specifically, in the process of data arrival, there is partial overlap (set at 10\%) between the data at the boundaries of different tasks, rather than being completely disjoint.

\textbf{Implementation details.}
For CIFAR10, CIFAR100, and EuroSat, we utilize the ViT-Tiny as a backbone and ViT-Base \cite{dosovitskiy2020image} for CLEAR10, CLEAR100 and ImageNet. We train the model with AdamW optimizer for all the datasets and comparing methods. For all the methods compared, we set the same batch size (10) and replay batch size (10) for fair comparisons. We reproduce all baselines in the same environment with their source code
and default settings. For the methods requiring a real-time memory buffer to compute some exclusive variables, we ensure the correct calculation of these variables by increasing the frequency of access to the memory while ensuring a same replay frequency. For the pre-trained models used in Table\ref{tab: main result A_{AUC}}, \ref{tab: clear result A_{AUC}}, \ref{tab: ablation},\ref{tab: main result last accuracy} and \ref{tab: clear result lacc}, we use the MAE pre-trained initialization \cite{he2022masked} for ImageNet and supervised pre-trained initialization for other datasets.

\textbf{Compared baselines.}
We conducted a comparison of our NCE approach with 13 baselines, as shown in Table\ref{tab: clear result lacc}, consisting of 8 replay-based OCL baselines and 5 offline CL baselines. To ensure a fair comparison, we implemented a vanilla experience replay on the 3 offline CL baselines, running all approaches for one epoch.

\textbf{Evaluation metrics.}
The traditional metric, Average accuracy ($A_{avg}$), is commonly used in continual learning. However, $A_{avg}$ only provides information about the model's performance at specific moments of task transitions, which may occur only 5 to 10 times in most OCL setups. This temporal sparsity of measurement makes it insufficient to deduce conclusions about the model's any-time inference capability. In this paper, we use alternative evaluation metrics: Area Under the Curve of Accuracy ($A_{AUC}$) and Last Accuracy. Inspired by the work of \cite{koh2021online}, we measure accuracy more frequently by evaluating it after every $\Delta n$ samples, instead of only at discrete task transitions. This new metric is equivalent to the area under the curve (AUC) of the accuracy-to-{\# of samples} curve for continual learning methods, with $\Delta n=1$. We refer to it as Area under the curve of accuracy ($A_{AUC}$), calculated as $A_{AUC}=\sum_{i=1}^t f(i\cdot\Delta n)\cdot\Delta n$. Additionally, we include Last Accuracy as another evaluation metric. Last Accuracy simply refers to the model's accuracy after it has processed all the data in the data streams.

\textbf{Device.}
All the experiments are implemented on NVIDIA RTX2080ti and RTX4090ti. It is notable that all results on training efficiency, model throughput and inference time are done on RTX2080ti.

\subsection{More Detailed Experimental Results}
\label{detailed experiments}
\begin{table*}[ht]
\caption{\textit{Last Accuracy} on synthetic online class-incremental setting. Best is highlighted in \textbf{bold}, second best is shown \underline{underlined}.} 
\label{tab: main result last accuracy}
\centering
\resizebox{1\textwidth}{!}
{

    \begin{tabular}{lrrrrrrrrrrr}
    \hline
    \multirow{3}{*}{Method}  & \multicolumn{3}{c}{CIFAR-10}   && \multicolumn{3}{c}{CIFAR-100}  && \multicolumn{3}{c}{EuroSat} \\ \cline{2-4}\cline{6-8}\cline{10-12}
            & $M=0.1k$   & $M=0.2k$   & $M=0.5k$     && $M=0.5k$     & $M=1k$     & $M=2k$     && $M=0.1k$   & $M=0.2k$   & $M=0.5k$   \\ 
             & $Freq=1/100$ & $Freq=1/50$  & $Freq=1/10$    && $Freq=1/100$   & $Freq=1/50$  & $Freq=1/10$  && $Freq=1/100$  & $Freq=1/50$  & $Freq=1/10$   \\ \midrule
    iCaRL\cite{rebuffi2017icarl}   & 90.1{$\pm$0.2} & 90.0{$\pm$0.1} & 92.3{$\pm$0.3} && 69.6{$\pm$0.4}  & 70.2{$\pm$0.7}  & 73.1{$\pm$0.2} && 67.7{$\pm$0.8} & 77.9{$\pm$1.0} & 87.5{$\pm$0.4} \\ 
    EWC\cite{kirkpatrick2017overcoming} & 85.5{$\pm$0.4} & \underline{92.4{$\pm$0.8}} & 94.2{$\pm$1.0} && 67.9{$\pm$0.8} & 66.0{$\pm$1.1} & 70.4{$\pm$1.3} && 75.7{$\pm$1.1} & 84.5{$\pm$0.9} & 89.2{$\pm$1.0} \\ 
    DER++\cite{buzzega2020dark}   & 87.7{$\pm$1.4} & 91.1{$\pm$0.9} & 93.0{$\pm$1.1} && 67.8{$\pm$1.7} & 71.5{$\pm$}1.0 & 73.7{$\pm$0.9} && 66.9{$\pm$4.3} & 84.7{$\pm$1.4} & 87.4{$\pm$2.0} \\ 
    PASS\cite{zhu2021prototype}   & 91.2{$\pm$1.1} & 92.0{$\pm$0.9} & \underline{94.4{$\pm$0.7}} && 69.0{$\pm$1.4} & 71.7{$\pm$1.2} & 72.0{$\pm$0.9} && 70.9{$\pm$0.8} & 84.5{$\pm$1.0} & 88.7{$\pm$1.0} \\
    MC-SGD w/ SAM\cite{mehta2023empirical} & 90.7{$\pm$0.6}  & 91.5{$\pm$0.7} & 94.2{$\pm$0.4} &&  70.0{$\pm$0.8} & 73.1{$\pm$0.8} & 74.0{$\pm$1.6} && {75.3{$\pm$0.4}} & {88.9{$\pm$0.7}} & 94.0{$\pm$1.5} \\
    \hline
    AGEM\cite{chaudhry2018efficient}    & 84.3{$\pm$0.3} & 90.4{$\pm$0.2} & 93.7{$\pm$0.9} && 68.2{$\pm$0.4}  & 67.8{$\pm$0.3}  & 73.5{$\pm$0.1} && 75.6{$\pm$1.1} & 87.6{$\pm$0.8} & 91.8{$\pm$0.7} \\ 
    ER\cite{chaudhry2019tiny}      & 91.3{$\pm$0.7} & 92.0{$\pm$0.4} & \textbf{94.9{$\pm$0.2}} && \underline{73.5{$\pm$0.6}}  & 73.3{$\pm$0.5}  & 73.6{$\pm$0.4} && {76.3{$\pm$0.8}} & \underline{89.8{$\pm$1.0}} & 93.4{$\pm$0.9} \\ 
    MIR\cite{aljundi2019mir}     & 92.0{$\pm$1.0} & 92.1{$\pm$0.8} & 94.1{$\pm$1.1} && 68.4{$\pm$0.8} & \textbf{74.1{$\pm$0.8}} & \textbf{74.7{$\pm$0.8}} && 74.4{$\pm$2.4} & 88.4{$\pm$}1.6 & 90.0{$\pm$}0.8 \\ 
    ASER\cite{shim2021online} & 86.2{$\pm$0.9} & 90.2{$\pm$1.2} & 93.0{$\pm$1.1} && 67.9{$\pm$0.8} & 71.0{$\pm$0.4}  & 72.3{$\pm$0.8} && 74.3{$\pm$0.8} & 85.9{$\pm$0.5} & 92.9{$\pm$0.8} \\ 
    SCR\cite{mai2021supervised}  & 89.9{$\pm$0.6} & \underline{92.2{$\pm$0.5}} & 93.5{$\pm$1.0} && 69.9{$\pm$0.9} & 71.7{$\pm$0.5}  & 73.1{$\pm$0.3} && 75.8{$\pm$0.8} & 87.8{$\pm$0.7} & \underline{94.2{$\pm$1.1}} \\ 
    DVC w/o Aug\cite{gu2022not} &   90.1{$\pm$0.8} &   {91.9}{$\pm$0.9} &   92.7{$\pm$0.8} &&   71.4{$\pm$0.2} &   71.0{$\pm$0.6} &   73.5{$\pm$1.0} && 74.2{$\pm$4.2} & 85.7{$\pm$1.0} & 92.3{$\pm$0.8} \\
    DVC w/ Aug\cite{gu2022not} & 90.6{$\pm$0.9}  & 91.3{$\pm$0.7} & 94.1{$\pm$0.4} &&  72.7{$\pm$0.6} & 72.8{$\pm$0.8} & 73.4{$\pm$0.6} && {75.1{$\pm$0.4}} & {88.9{$\pm$0.8}} & 92.4{$\pm$1.2} \\ 
    OCM w/o Aug\cite{guo2022online} & 84.1{$\pm$1.3}  & 83.3{$\pm$1.4} & 92.0{$\pm$1.1}  && 64.2{$\pm$0.8} & 55.0{$\pm$1.6} & 64.7{$\pm$0.4} && 72.8{$\pm$1.2} & 78.7{$\pm$1.0} & 80.4{$\pm$0.6} \\ 
    OCM w/ Aug\cite{guo2022online} & 85.6{$\pm$1.1}  & 89.5{$\pm$1.4} & 93.6{$\pm$0.5} && \textbf{73.7{$\pm$0.8}} & 73.5{$\pm$1.0} & 73.9{$\pm$0.5} && \underline{74.1{$\pm$1.0}} & 86.4{$\pm$}1.5 & 93.5{$\pm$0.7} \\ 
    OnPro w/Aug\cite{wei2023online} &   92.2{$\pm$0.9} &   {92.1}{$\pm$0.6} &   94.1{$\pm$0.5} && {69.4{$\pm$0.5}} &   73.7{$\pm$0.8} & \underline{74.1{$\pm$0.6}} && \underline{76.8{$\pm$2.4}} & 88.6{$\pm$0.7} & 93.3{$\pm$1.1} \\ 
    \hline
    NsCE  & \textbf{93.1{$\pm$0.5}} & \textbf{93.0{$\pm$1.4}} & {93.1{$\pm$1.2}} && {70.8{$\pm$1.5}} & \underline{73.9{$\pm$1.7}} & {73.7{$\pm$1.4}} && \textbf{84.9{$\pm$1.7}} & \textbf{91.7{$\pm$}0.8} & \textbf{94.4{$\pm$0.6}} \\
    \hline
    \end{tabular}
}
\end{table*}

\begin{table}[ht]
\caption{\textit{Last Accuracy} on real-world online domain-incremental setting and large scale data stream. Best is highlighted in \textbf{bold}, second best is shown \underline{underlined}.} 
\label{tab: clear result lacc}
\centering
\resizebox{0.8\textwidth}{!}
{

    \begin{tabular}{lrrrrrrrrrr}
    \hline
    {Method}  & \multicolumn{2}{c}{CLEAR-10}  && \multicolumn{2}{c}{CLEAR-100}  && {ImageNet} \\ \cline{2-3} \cline{5-6} \cline{8-9}
            & $M=0.1k$   & $M=0.2k$         && $M=1k$         & $M=2k$          && $M=10k$          &\\ 
            & $Freq=1/100$ & $Freq=1/50$    && $Freq=1/100$   & $Freq=1/50$     && $Freq=1/500$    &\\ \midrule
    ER      & \underline{93.4{$\pm$}0.2} & \underline{93.5{$\pm$}0.6} && 88.5{$\pm$}0.9 & 88.1{$\pm$}0.2 && $47.0\pm$0.6 \\    
    DER     & 92.7{$\pm$}0.4 & 93.4{$\pm$}0.8 && 87.9{$\pm$}0.4 & 88.9{$\pm$}0.3 && 43.8$\pm$0.7 \\ 
    EWC     & 93.0{$\pm$}0.6 & 92.1{$\pm$}0.7 && \underline{89.3{$\pm$}1.4} & \textbf{89.6{$\pm$}0.9} && 46.0$\pm$0.4 \\ 
    iCarl   & 91.4{$\pm$}0.9 & 92.8{$\pm$}1.2 && 84.9{$\pm$}1.3 & 85.9{$\pm$}0.8 && \underline{47.9$\pm$1.0} \\
    SCR     & 89.4{$\pm$}0.6 & 89.8{$\pm$}0.6 && 83.4{$\pm$}0.6 & 87.0{$\pm$}1.1 && 46.3$\pm$1.4 \\
    OCM     & 92.1{$\pm$}0.5 & 92.7{$\pm$}1.3 && 87.9{$\pm$}0.9 & 86.7{$\pm$}0.6 && $\times$$\times$$\times$$\times$$\times$ \\
    DVC     & 91.3{$\pm$}0.8 & 91.9{$\pm$}0.4 && 88.0{$\pm$}0.6 & 88.1{$\pm$}0.5 && 45.9$\pm$0.5 \\
    OnPro   & 92.2{$\pm$}1.2 & \underline{93.5{$\pm$}1.6} && 88.0{$\pm$}0.9 & 88.3{$\pm$}0.7 && 47.1$\pm$0.6 \\
    \hline
    NsCE  & \textbf{93.9$\pm$0.7} & \textbf{94.2$\pm$1.1} && \textbf{90.1$\pm$0.8} & \underline{89.2$\pm$1.6} && \textbf{49.8$\pm$2.4} \\\hline
    \end{tabular}
}
\end{table}

\subsubsection{Last Accuracy}
In addition to $A_{AUC}$, we also evaluate the last accuracy of various OCL methods. We include these two evaluations because $A_{AUC}$ allows us to assess the real-time performance of the model, while the last accuracy measurement reflects the model's performance after processing the entire data stream. As shown in Table\ref{tab: main result last accuracy} and \ref{tab: clear result lacc}, Even without employing data augmentation and knowledge distillation, our NsCE framework still achieves comparable results. This is particularly evident when faced with more stringent constraints on memory buffer size and replay frequency.



\begin{table}[htb] 
\caption{Comparison results between our proposed NsCE and NsCE Lite ($A_{AUC}$). The methods that exhibit the best performance with pre-trained models are highlighted in \textbf{bold}.}
\centering
{
    \resizebox{0.95\textwidth}{!}
    {
        \begin{tabular}{ccccccc}
        \hline
        Model  & CIFAR10 & CIFAR100 & EuroSat & CLEAR10 & CLEAR100 & ImageNet \\ 
               & $M=0.1k,Freq=1/100$ & $M=0.5k,Freq=1/100$ & $M=0.1k,Freq=1/100$ & $M=0.1k,Freq=1/100$ & $M=1k,Freq=1/100$ & $M=10k,Freq=1/500$\\\hline
        NsCE     & 89.9$\pm$0.4 & \textbf{74.1$\pm$0.7} & 75.7$\pm$0.4 & \textbf{89.2$\pm$0.7} & \textbf{84.3$\pm$0.4} & \textbf{61.6$\pm$0.7}\\
        NsCE Lite& \textbf{90.7$\pm$0.6} & 72.9$\pm$0.9 & \textbf{76.1$\pm$0.5} & 88.0$\pm$0.6 & 82.9$\pm$1.2 & 60.9$\pm$0.8\\
        \hline
        \end{tabular}
    }
}
\label{tab: NCE Lite}
\end{table}

\begin{algorithm}[htb]
   \caption{NsCE \textcolor{blue}{(Lite)}}
   \label{alg: NCE}
\begin{algorithmic}
   \STATE {\bfseries Input:} Data stream $\mathfrak{D}$, encoder $\theta_f$, classifier $\phi$
   \STATE {\bfseries Initialization:} Memory buffer $\mathcal{M}\leftarrow \{\}$,$acc_t=0$
   \FOR{$t=1$ {\bfseries to} $T$}
        \FOR{each mini-batch $X$ in $D_t$ }
            \STATE $M\leftarrow$Update$(M,X)$
            \textcolor{blue}{
            \IF{$acc_t>threshold$}  
                \STATE$\theta_f$.detach() 
            \ENDIF}
            \STATE $p$ = $\phi(\theta_f(X))$, $z$ = $\theta_f(X)$
            \STATE Compute online class mean $\mu$ and $y^*$ by Eq.\ref{eq: class mean}
            \STATE $\theta_f, \theta_{\phi} \leftarrow \mathcal{L}_{ce} + \gamma(\mathcal{L}_{p}+\mathcal{L}_{s})$ by Eq.\ref{eq: total loss}
            \IF{Replay}
            \STATE Compute Confusion Matrix on Memory buffer
            \STATE $\theta_f, \theta_{\phi} \leftarrow \mathcal{L}_{b}$ by Eq.\ref{eq: replay loss}
            \ENDIF
            \STATE Caculate the accuracy $acc_t$ on current task by $y*$
        \ENDFOR
   \ENDFOR

\end{algorithmic}
\end{algorithm}

\subsubsection{NsCE Lite.} \label{appendix: gradient stop}
In addition to enhancing model throughput through constraining memory replay, we also consider the possibility of not fine-tuning the entire network since we have already utilized a pre-trained model. However, when faced with large-scale data streams with changing data distributions, it becomes challenging for the model to adapt to new data without fine-tuning. In such cases, we evaluate the features learned by our model (Eq.\ref{eq: class mean}) during training on the data stream. If the model has acquired sufficiently discriminative features for the current task, we believe that only updating the classifier layer would suffice to achieve optimal results. We refer to this lightweight framework as NsCE Lite, detailed in Algorithm\ref{alg: NCE}.

We conducted tests on our lightweight version of NsCE using the smallest memory buffer and lowest replay frequency on six datasets, as presented in Table\ref{tab: NCE Lite}. In most cases, this lightweight framework also achieves comparable results with NsCE, particularly on relatively simple datasets like CIFAR10 and EuroSat, where NsCE Lite even outperforms the original NsCE approach. The potential reason for this improvement may be that when NsCE Lite encounters simpler datasets, despite constraining the sparseness of the classifier parameters with the dataset, model's myopia caused by intensified parameter sparsity exists not only in the classifier but also in the feature extraction process. This becomes especially apparent when the model acquires highly separable features. Further training may cause the model to excessively focus on discriminative features that lack generalization ability. Hence, introducing a detach operation to the feature extractor $f(\cdot)$ can effectively mitigate the model's myopia, especially when the model has attained satisfactory performance on the current task. By detaching the feature extractor, the model can retain the learned features while allowing for independent updates and adjustments to the classification layer or other components.

\subsection{Discussions on Utilization of Pre-trained Models} 
\label{appendix: pretrain}
\textbf{Model's ignorance and myopia: new perspectives.}
The current performance bottleneck of the OCL method serves as the initial motivation for our study on the application of pre-trained models in OCL. As shown by results in Table\ref{tab: old results} from \cite{wei2023online}, even the best methods can only reach about 30\% on CIFAR100 and 20\% accuracy on TinyImageNet. Although the exploration of these methods may be meaningful to the community, such performance is completely unworthy of discussion for practical problems. Furthermore, when we review previous work from the perspective of \textbf{model's ignorance and myopia}, we observe that many techniques originally developed for continuous learning, such as knowledge distillation and dark experience replay, may not be fully applicable to OCL scenarios. In OCL, the model requires more than just relying on past cognition. It needs the flexibility to dynamically adjust and update its features and classification criteria. The improvement in performance of OCL methods often stems from alleviating model ignorance through replaying past data and incorporating augmentation methods. These approaches help the model adapt and refine its representations, reducing the impact of myopia and enabling better performance in OCL settings. 

\textbf{Inspiration of using pre-trained models.}
Inspired by how humans learn quickly and effectively, we realize our ability to recognize new class is typically built upon fundamental cognitive abilities and prior knowledge. In fact, for most intelligent life forms, the process of learning begins with the acquisition of fundamental concepts and knowledge. For instance, humans possess innate abilities for perception and an instinctual drive to seek advantages while avoiding disadvantages. These foundational aspects of learning form the basis upon which more complex cognitive abilities and knowledge are built. We anticipate that pre-trained models, which have demonstrated success across various domains, can play a similar role as fundamental knowledge that enables a high-throughput, high-performance OCL model supporting any-time inference.

\begin{figure*}[htbp]
\centering
\begin{minipage}[t]{1\linewidth}
\centering
    \includegraphics[width=0.32\linewidth]{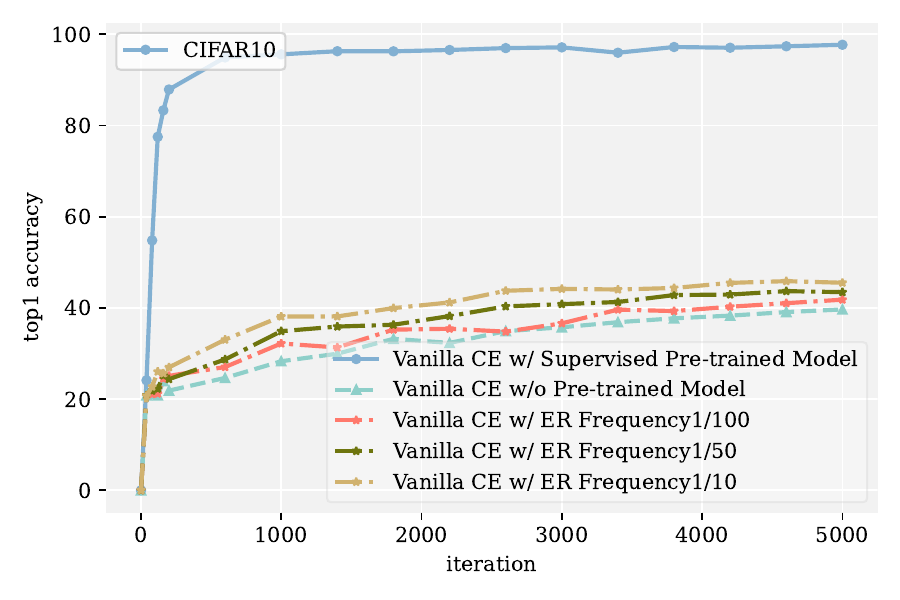} 
    \includegraphics[width=0.32\textwidth]{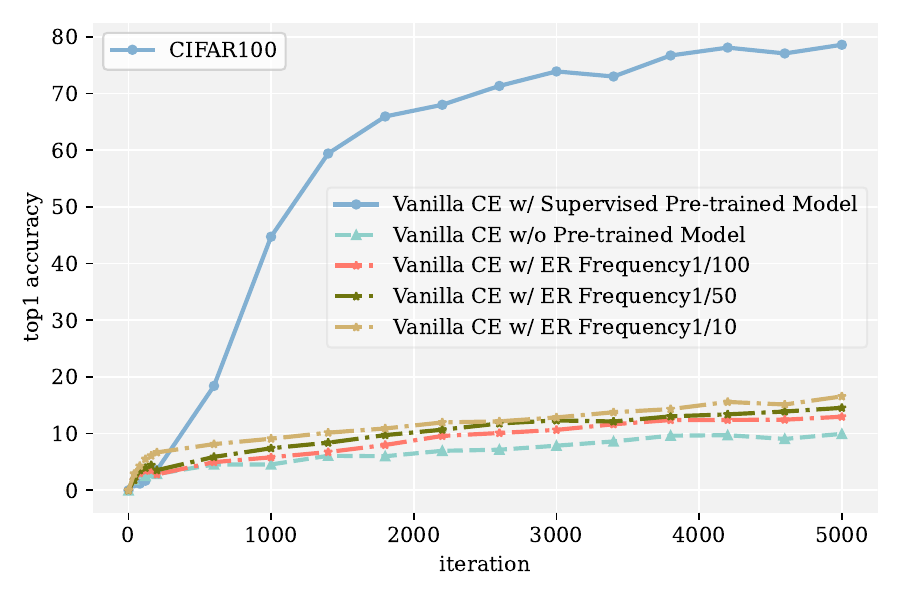} 
    \includegraphics[width=0.32\textwidth]{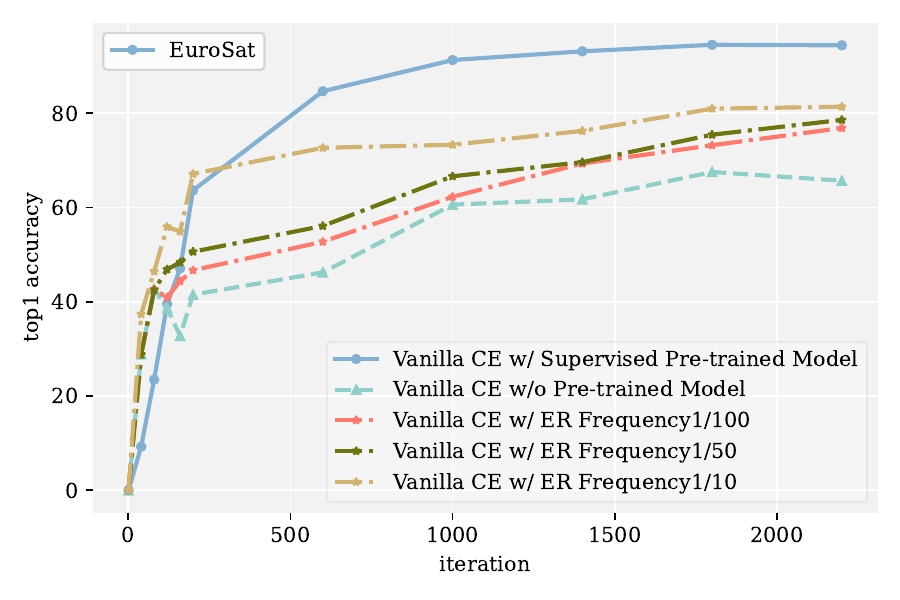} \\
\end{minipage}
\begin{minipage}[t]{1\linewidth}
\centering
    \includegraphics[width=0.32\textwidth]{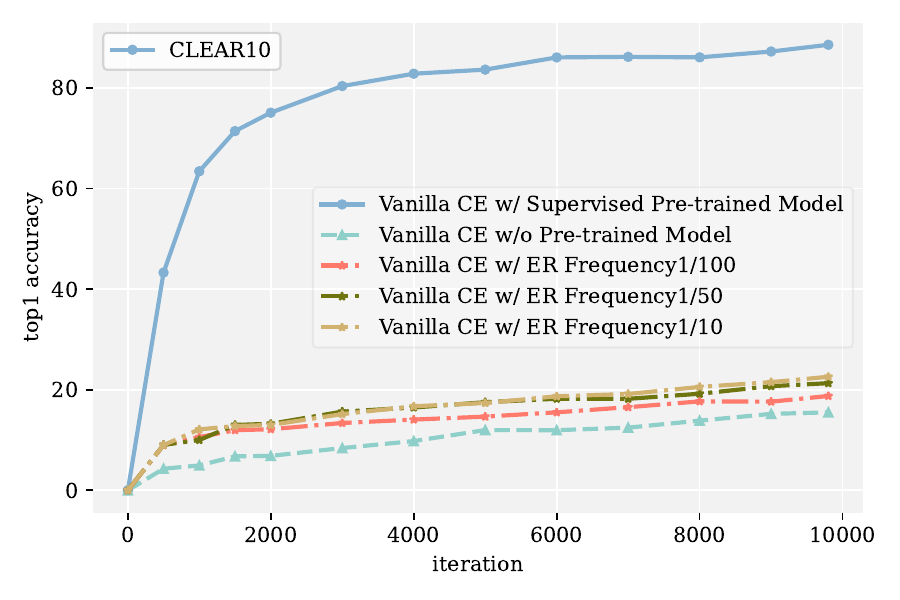} 
    \includegraphics[width=0.32\textwidth]{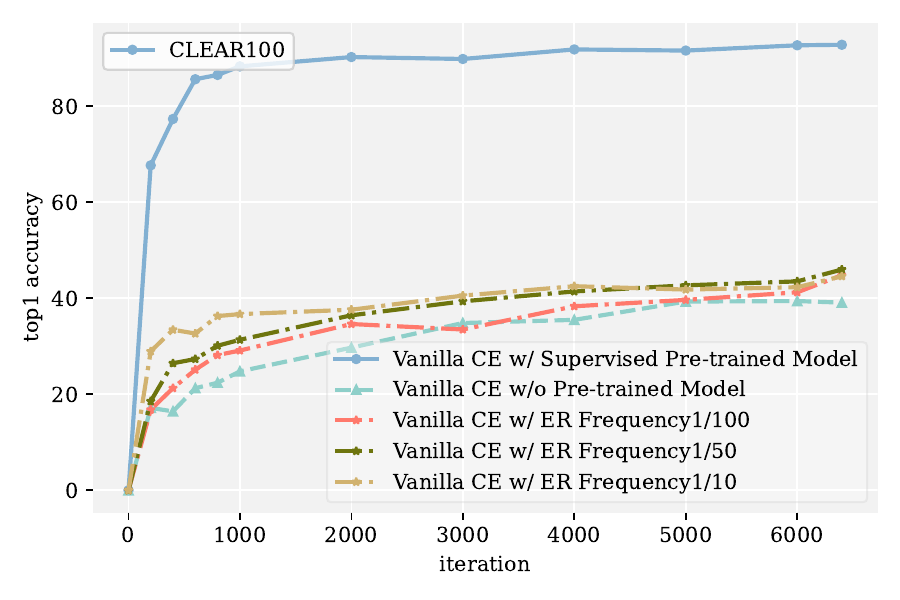}
    \includegraphics[width=0.32\textwidth]{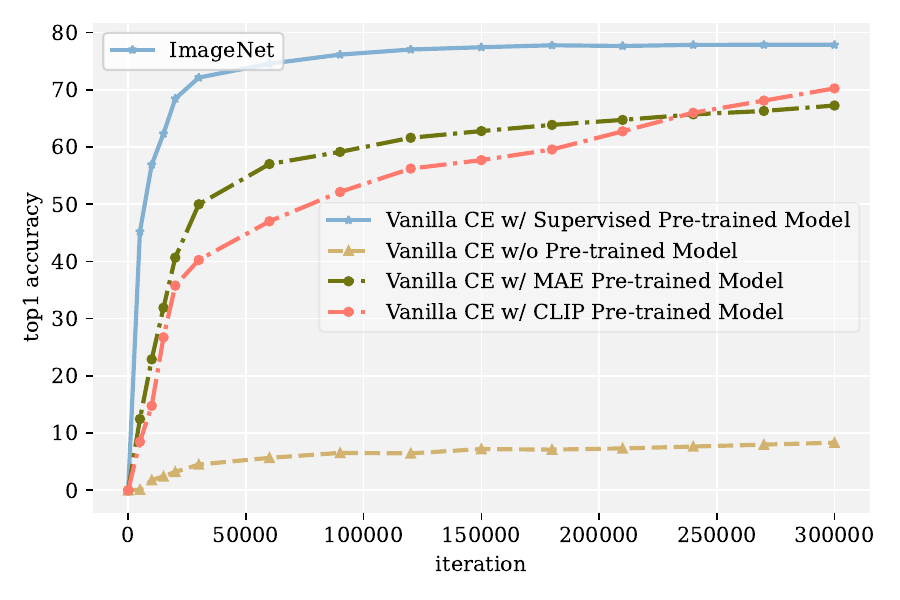} 
    \end{minipage}
    \caption{We evaluate the real-time accuracy of models on currently seen classes (w/) and (w/o) pre-trained models under our designed \textbf{single task setting}, as well as the impact of experience replay frequency on CIFAR, EuroSAT, CLEAR and ImageNet.}
    \label{fig: appendix pre-trained capacity}
\end{figure*}

\begin{table*}[ht]
\caption{Average accuracy of state-of-the-art methods without the pre-trained initialization. (The results are directly copied from \cite{wei2023online})} 
\centering
\resizebox{1\textwidth}{!}
{
    \begin{tabular}{lrrrrrrrrrrr}
    \hline
    \multirow{3}{*}{Method}  & \multicolumn{3}{c}{CIFAR-10}   && \multicolumn{3}{c}{CIFAR-100}  && \multicolumn{3}{c}{TinyImageNet} \\ \cline{2-4}\cline{6-8}\cline{10-12}
            & $M=0.1k$   & $M=0.2k$   & $M=0.5k$     && $M=0.5k$     & $M=1k$     & $M=2k$     && $M=1k$      & $M=2k$ & $M=4k$   \\ \midrule
    iCaRL\cite{rebuffi2017icarl}   & 31.0{$\pm$1.2} & 33.9{$\pm$0.9} & 42.0{$\pm$0.9} && 12.8{$\pm$0.4}  & 16.5{$\pm$0.4}  & 17.6{$\pm$0.5} && 5.0{$\pm$0.3}   & 6.6{$\pm$0.4} & 7.8{$\pm$0.4} \\ 
    
    DER++\cite{buzzega2020dark}   & 31.5{$\pm$2.9} & 39.7{$\pm$2.7} & 50.9{$\pm$1.8} && 16.0{$\pm$0.6}  & 21.4{$\pm$0.9}  & 23.9{$\pm$1.0} && 3.7{$\pm$0.4} & 5.1{$\pm$0.8} & 6.8{$\pm$0.6} \\ 
    PASS\cite{zhu2021prototype}    & 33.7{$\pm$2.2} & 33.7{$\pm$2.2} & 33.7{$\pm$2.2} && 7.5{$\pm$0.7}  & 7.5{$\pm$0.7}  & 7.5{$\pm$0.7} && 0.5{$\pm$0.1}   & 0.5{$\pm$0.1} & 0.5{$\pm$0.1} \\ 
    \hline
    AGEM\cite{chaudhry2018efficient}    & 17.7{$\pm$0.3} & 17.5{$\pm$0.3} & 17.5{$\pm$0.2} && 5.8{$\pm$0.1}  & 5.9{$\pm$0.1}  & 5.8{$\pm$0.1} && 0.8{$\pm$0.1}   & 0.8{$\pm$0.1} & 0.8{$\pm$0.1} \\ 
    GSS\cite{aljundi2019gradient}     & 18.4{$\pm$0.2} & 19.4{$\pm$0.7} & 25.2{$\pm$0.9} && 8.1{$\pm$0.2}  & 9.4{$\pm$0.5}  & 10.1{$\pm$0.8} && 1.1{$\pm$0.1}   & 1.5{$\pm$0.1} & 2.4{$\pm$0.4} \\ 
    ER\cite{chaudhry2019tiny}      & 19.4{$\pm$0.6} & 20.9{$\pm$0.9} & 26.0{$\pm$1.2} && 8.7{$\pm$0.3}  & 9.9{$\pm$0.5}  & 10.7{$\pm$0.8} && 1.2{$\pm$0.1}   & 1.5{$\pm$0.2} & 2.0{$\pm$0.2} \\ 
    MIR\cite{aljundi2019mir}     & 20.7{$\pm$0.7} & 23.5{$\pm$0.8} & 29.9{$\pm$1.2} && 9.7{$\pm$0.3}  & 11.2{$\pm$0.4}  & 13.0{$\pm$0.7} && 1.4{$\pm$0.1}   & 1.9{$\pm$0.2} & 2.9{$\pm$0.3} \\ 
    GDumb\cite{prabhu2020gdumb}  & 23.3{$\pm$1.3} & 27.1{$\pm$0.7} & 34.0{$\pm$0.8} && 8.2{$\pm$0.2}  & 11.0{$\pm$0.4}  & 15.3{$\pm$0.3} && 4.6{$\pm$0.3}   & 6.6{$\pm$0.2} & 10.0{$\pm$0.3} \\ 
    ASER\cite{shim2021online} & 20.0{$\pm$1.0} & 22.8{$\pm$0.6} & 31.6{$\pm$1.1} && 11.0{$\pm$0.3}  & 13.5{$\pm$0.3}  & 17.6{$\pm$0.4} && 2.2{$\pm$0.1}   & 4.2{$\pm$0.6} & 8.4{$\pm$0.7} \\ 
    SCR\cite{mai2021supervised}  & 40.2{$\pm$1.3} & 48.5{$\pm$1.5} & 59.1{$\pm$1.3} && 19.3{$\pm$0.6}  & 26.5{$\pm$0.5}  & 32.7{$\pm$0.3} && 8.9{$\pm$0.3}   & 14.7{$\pm$0.3} & 19.5{$\pm$0.3} \\ 
    CoPE\cite{de2021continual}  & 33.5{$\pm$3.2} & 37.3{$\pm$2.2} & 42.9{$\pm$3.5} && 11.6{$\pm$0.7}  & 14.6{$\pm$1.3}  & 16.8{$\pm$0.9} && 2.1{$\pm$0.3}   & 2.3{$\pm$0.4} & 2.5{$\pm$0.3} \\
    DVC\cite{gu2022not} & 35.2{$\pm$1.7}  & 41.6{$\pm$2.7} & 53.8{$\pm$2.2} &&  15.4{$\pm$0.7} & 20.3{$\pm$1.0} & 25.2{$\pm$1.6} && 4.9{$\pm$0.6} &  7.5{$\pm$0.5} & 10.9{$\pm$1.1} \\ 
    OCM\cite{guo2022online} & 47.5{$\pm$1.7}  & 59.6{$\pm$0.4} & 70.1{$\pm$1.5} && 19.7{$\pm$0.5} & 27.4{$\pm$0.3} & 34.4{$\pm$0.5} && 10.8{$\pm$0.4} & 15.4{$\pm$0.4} & 20.9{$\pm$0.7} \\ 
    OnPro\cite{wei2023online} &   {57.8}{$\pm$1.1} &   {65.5}{$\pm$1.0} &   {72.6}{$\pm$0.8} &&   {22.7}{$\pm$0.7} &   {30.0}{$\pm$0.4} &   {35.9}{$\pm$0.6} &&   {11.9}{$\pm$0.3} &   {16.9}{$\pm$0.4} &    {22.1}{$\pm$0.4} \\ 
    \hline
    \end{tabular}
}
\label{tab: old results}
\end{table*}

\begin{table}[htb] 
\caption{Performance ($A_{AUC}$) under our \textbf{single task setting}. We illustrate the impact brought by pre-trained models (models pre-trained on ImageNet by Masked Auto Encoder\cite{he2022masked}) with different network architectures over various datasets. The networks that exhibit the best performance with pre-trained models are highlighted in \textbf{bold}, while the networks that achieve the best performance without pre-trained models are shown \underline{underlined}.}
\centering
{
    \resizebox{0.75\columnwidth}{!}
    {
        \begin{tabular}{cccccc}
        \hline
        Model  & CIFAR10 & CIFAR100 & EuroSat & SVHN & TissueMNIST    \\ 
        \hline
        ViT-T w/o pretrain   & 31.31            & 9.77              &\underline{55.71}& 54.16        & 43.68 \\ 
        ViT-T w/ pretrain    & 93.54            & 61.97             & 76.09         & 88.24          & 59.84 \\ \hline
        $\Delta$             & \textbf{+62.23}  & +52.20            & +20.38        & +34.08         & +16.16 \\ \hline
        ViT-S w/o pretrain   &  37.64           & 6.95              & 51.81         & 36.86          & 42.78 \\ 
        ViT-S w/ pretrain    & \textbf{90.38}   &  \textbf{79.49}   &\textbf{78.02} & \textbf{93.33} & \textbf{60.04} \\ \hline
        $\Delta$             &  +52.74          &  \textbf{+72.54}  &\textbf{+26.21}& \textbf{+56.47}& \textbf{+17.26} \\ 
        \hline
        \end{tabular}
    }
}
\label{tab: pretrained 1}
\end{table}

\begin{table}[htb] 
\caption{Performance ($A_{AUC}$) under our \textbf{single task setting}. We illustrate the impact brought by pre-trained models (models pre-trained on CLIP\cite{radford2021learning}) with different network architectures over various datasets. The networks that exhibit the best performance with pre-trained models are highlighted in \textbf{bold}, while the networks that achieve the best performance without pre-trained models are shown \underline{underlined}.}
\centering
{
    \resizebox{0.75\columnwidth}{!}
    {
        \begin{tabular}{cccccc}
        \hline
        Model  & CIFAR10 & CIFAR100 & EuroSat & SVHN & TissueMNIST    \\ 
        \hline
        Res50 w/o pretrain   &\underline{38.58}  &  14.04            & 37.64         & 88.04         & 43.72 \\
        Res50 w/ pretrain    & \textbf{86.44}    &  \textbf{79.27}   & \textbf{65.31}& \textbf{92.10}& \textbf{60.31} \\ \hline
        $\Delta$             &   +47.86          &   +65.23          & \textbf{+27.67}& +4.06         & \textbf{+16.59} \\ \hline
        ViT-S w/o pretrain   &   37.64           & 6.95              & 51.81         & 36.86         & 42.78 \\ 
        ViT-S w/ pretrain    &  83.70            &  76.87            & 59.36         & 90.09         & 49.37 \\ \hline
        $\Delta$             &   \textbf{+55.24} &  \textbf{+69.92}  & +7.55         &\textbf{+53.23}& +6.59 \\ 
        \hline
        \end{tabular}
    }
}
\label{tab: pretrained 2}
\end{table}

\begin{table}[htb] 
\caption{Performance ($A_{AUC}$) under our \textbf{single task setting}. We illustrate the impact brought by pre-trained models (supervised models pre-trained on ImageNet) with different network architectures over various datasets. The networks that exhibit the best performance with pre-trained models are highlighted in \textbf{bold}, while the networks that achieve the best performance without pre-trained models are shown \underline{underlined}.}
\centering
{
    \resizebox{0.75\columnwidth}{!}
    {
        \begin{tabular}{cccccc}
        \hline
        Model  & CIFAR10 & CIFAR100 & EuroSat & SVHN & TissueMNIST    \\ 
        \hline
        Res18 w/o pretrain   &  30.62            &\underline{16.21}  & 30.14         & 81.45         & 39.57 \\
        Res18 w/ pretrain    &    43.60          &  53.41            & 35.68         & 85.60         & 39.40 \\ \hline
        $\Delta$            &   +12.98          &   +37.20          & +5.54         & +4.15         & -0.17 \\ \hline
        Res50 w/o pretrain   &\underline{38.58}  &  14.04            & 37.64         & 88.04         & 43.72 \\
        Res50 w/ pretrain    & 49.21             &   56.73           & 46.20         & 88.91         & 48.60 \\ \hline
        $\Delta$            &   +14.63          &   +42.69          & +8.56         & +0.87         & +4.88 \\ \hline
        WRN28-2 w/o pretrain &   32.10           & 11.17             & 29.52         & 87.69         & 42.38 \\ 
        WRN28-2 w/ pretrain  &   46.83           &  37.57            & 36.67         & 87.78         & 39.41 \\ \hline
        $\Delta$            &  +14.73           &  +26.40           & +7.15         & +0.09         & -2.97 \\ \hline
        WRN28-8 w/o pretrain &   26.66           &  13.22            & 21.24         &\underline{90.17}& \underline{44.35} \\ 
        WRN28-8 w/ pretrain  &    57.13          &  44.23            & 29.91         & 90.93         & 44.05 \\ \hline
        $\Delta$            &   +30.47          &  +31.01           & +8.67         & +0.76         & -0.30 \\ \hline
        ViT-T w/o pretrain   & 31.31             & 9.77              &\underline{55.71}& 54.16         & 43.68 \\ 
        ViT-T w/ pretrain    &  91.49            & 57.55             & 76.40         & 90.07         & 53.35 \\ \hline
        $\Delta$            & \textbf{+60.18}   &   +47.78          & +10.70        & +35.91        & +9.67 \\ \hline
        ViT-S w/o pretrain   &   37.64           & 6.95              & 51.81         & 36.86         & 42.78 \\ 
        ViT-S w/ pretrain    &  \textbf{92.88}   &  \textbf{76.87}   &\textbf{79.40} & \textbf{95.11}& \textbf{57.64} \\ \hline
        $\Delta$            &   +55.24          &  \textbf{+69.92}  &\textbf{+17.59 }&\textbf{+58.25}& \textbf{+14.86} \\ 
        \hline
        \end{tabular}
    }
}
\label{tab: pretrained 3}
\end{table}

\textbf{Pre-trained model is not a one-size-fits-all solution.} Although pre-trained models have demonstrated significant performance improvements across various datasets compared with the performance trained from scratch (as illustrated in Table\ref{fig: appendix pre-trained capacity}), it is crucial to acknowledge that they are not a one-size-fits-all solution. The limitations manifest in multiple aspects, as depicted in Table\ref{tab: pretrained 1}, \ref{tab: pretrained 2} and \ref{tab: pretrained 3}, the benefits (or drawbacks) of utilizing a pre-trained model vary depending on the dataset and network architecture employed. Pre-trained models do not consistently yield substantial gains across all datasets. The effectiveness of pre-training depends on several factors, such as the degree of domain similarity between the pre-training and target tasks, the size and quality of the pre-training dataset, some specific characteristics of the target dataset and even the structure of backbone networks matters. For CIFAR, EuroSat, SVHN\cite{netzer2011reading}, and TissueMNIST\cite{yang2023medmnist}, the distributional discrepancy between the pre-training and downstream task data gradually increases. It is evident that pre-trained models tend to provide more benefits when the pre-trained data share common semantics with the downstream tasks. Surprisingly, even for the same dataset and pre-training approach, the choice of model architecture can significantly impact the performance of the model, as observed in Table\ref{tab: pretrained 1}, \ref{tab: pretrained 2} and \ref{tab: pretrained 3} on SVHN and TissueMNIST. These findings highlight the nuanced nature of leveraging pre-trained models. When comparing networks based on convolutional neural networks (CNN), it is evident that transformer-based models tend to derive more benefits from pre-trained models. Furthermore, we also discover that different pre-training methods also exert a significant influence on the model's performance. However, despite conducting these experiments, we have not yet been able to discern definitive rules for selecting pre-trained models. Careful consideration and experimentation are necessary to identify the optimal combination of pre-training and downstream task settings for achieving the desired performance improvements. The overall efficacy of pre-training is influenced by various elements, including the domain alignment between the pre-training and target tasks, the volume and integrity of the pre-training dataset, specific pre-training scheme, particular attributes of the target dataset and the architecture of the backbone network. Generally, we posit that possessing insight into the expected data distribution of upcoming tasks enables the selection of a pre-trained model trained on a comparable distribution, which is a prudent and dependable approach. We believe the selection of appropriate pre-trained models remains an important open question for many areas including OCL. 

\section{Detailed Discussions on Efficiency and Feasibility of Current OCL Methods}
\label{efficiency and feasibility}
In this section, we present empirical observations on the efficiency and feasibility of current OCL methods. We will discuss these observations from several key aspects: requirements on memory buffer, model throughput, and performance. By examining these aspects, we aim to provide insights into the practicality and effectiveness of existing OCL methods in real-world applications.

\subsection{Requirements on Real-time Memory Buffers}
As we stated before, OCL serves as a more realistic extension of continual learning, unlike traditional batch learning, where the entire dataset for each task is available upfront, it operates in scenarios where data distributions dynamically change over time. However, we find that, there is very limited research that specifically addresses the accessibility of memory buffers during training in the context of OCL. In this study, we argue that such an assumption is highly unrealistic in a real-world environment. Most existing replay-based OCL methods exhibit some flaws when applied to real-world applications:

\begin{figure*}[ht]
    \centering
    \includegraphics[width=0.65\columnwidth]{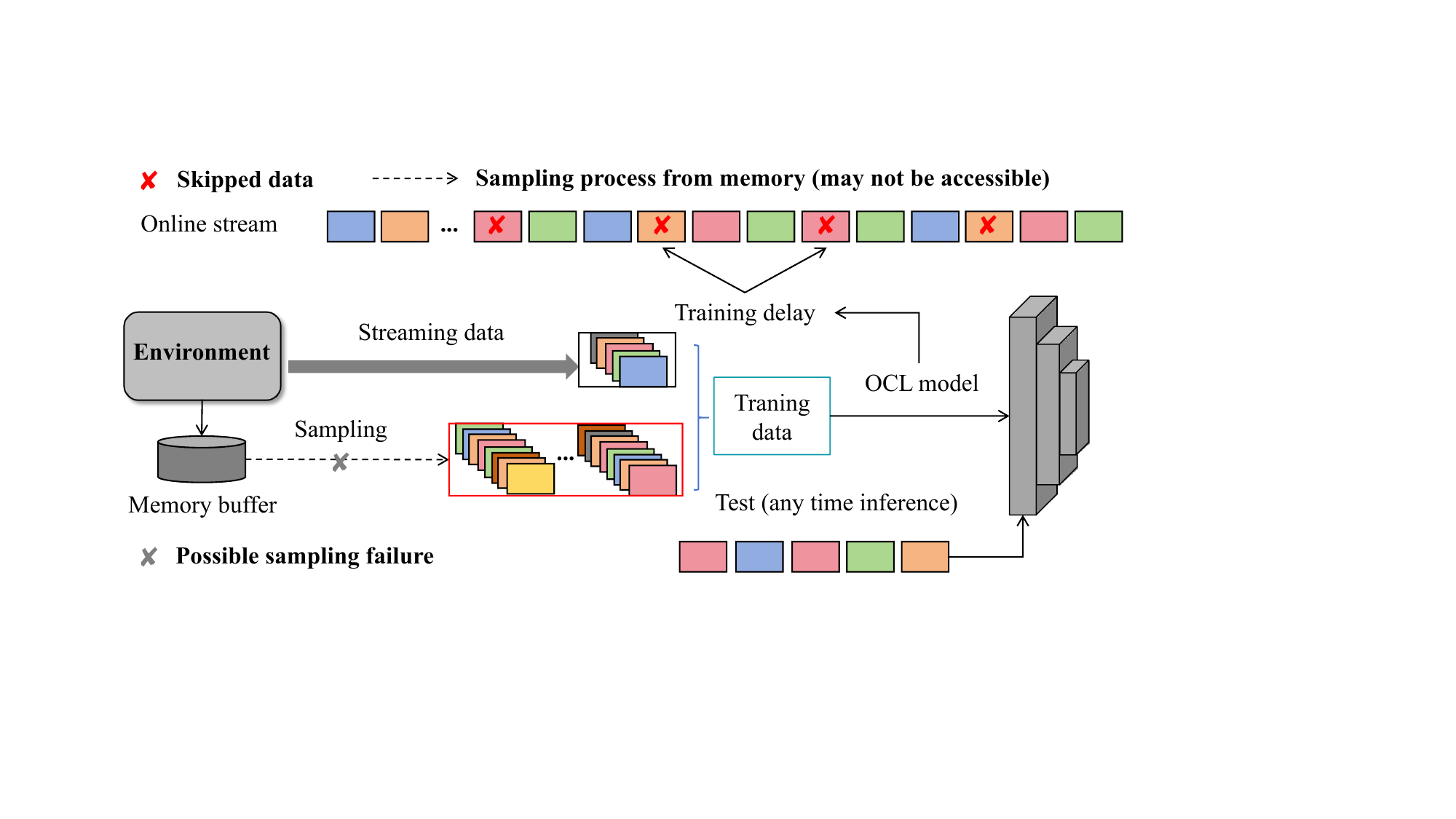}

    \caption{Common framework of replay-based OCL methods. Possible sampling failure and training delay due to the mismatch between model training speed and the data stream flow rate are two primary concerns.}
    \label{fig: appendix real-time buffer}
\end{figure*}

(1) As illustrated in Figure\ref{fig: appendix real-time buffer}, a notable characteristic of replay-based methods is their tendency to sample a larger proportion of data from the memory buffer compared to the incoming data stream. However, such a continuous sampling process significantly restricts the throughput of the model for streaming data. It has been observed that the time required to retrain memory data is typically 3-5 times longer than that for new data. Similarly, some data augmentations also significantly reduce the model throughput. This prolonged training time not only results in increased training delays but also leads to more skipped data, reducing the amount of available data that can be effectively utilized within a given time frame, as highlighted in\cite{ghunaim2023real, zhou2023stream}.

(2) Furthermore, there is a typical assumption that the memory buffer needs to store dozens of samples for each class to help the model to enable efficient review of past classes. However, when dealing with large-scale datasets such as ImageNet \cite{deng2009imagenet}, the storage overhead becomes impractical, particularly for data that needs to be real-time accessible and stored in local memory. Not to mention that in real-world scenarios, the number of categories in the data stream we encounter is constantly increasing. Such limitations become evident when the system lacks continuous access to past data, severely restricting the model's learning capacity, as illustrated in Figure \ref{fig: appendix pre-trained capacity}. Additionally, most existing methods actually require the memory buffer, model and data being processed to be stored in GPU memory simultaneously to avoid latency during access. This discrepancy between the existing methods and the practical scenario further diminish the practicality of current OCL methods.

(3) In real-world applications, such as autonomous vehicles \cite{jung2018driving} or sensor networks \cite{ko2010wireless}, ensuring the real-time accessibility of the memory buffer presents a significant challenge, especially when the learning system is deployed in terminal equipment. The constraints of computing resources, privacy concerns and network connectivity all hinder the sampling process in the memory buffer, thereby diminishing the feasibility of existing OCL methods. For example, the latency caused by data transfer makes it difficult for the model to synchronize and obtain data from both the memory buffer and the incoming data stream. Additionally, due to privacy and copyright concerns, in most cases, we cannot store data from the data stream arbitrarily. For instance, in real-world autonomous driving scenarios, data cannot be uploaded to data centers at any time. Meanwhile, the data stored in data centers usually undergoes strict scrutiny to avoid the risk of infringing on privacy or the commercial copyright of another party.  

In our research, we simulate the situation where the memory buffer, model and data stream are stored separately by limiting the number of accesses to the memory, which means we cannot replay the desired data at any given moment. Nevertheless, it is important to acknowledge that there may still be a gap between the scenarios we simulated and real-life applications. However, we firmly believe that taking this step is beneficial to the community.

\subsection{Model Throughput and Performance}
\begin{figure}[ht]
    \centering
    \includegraphics[width=0.24\columnwidth]{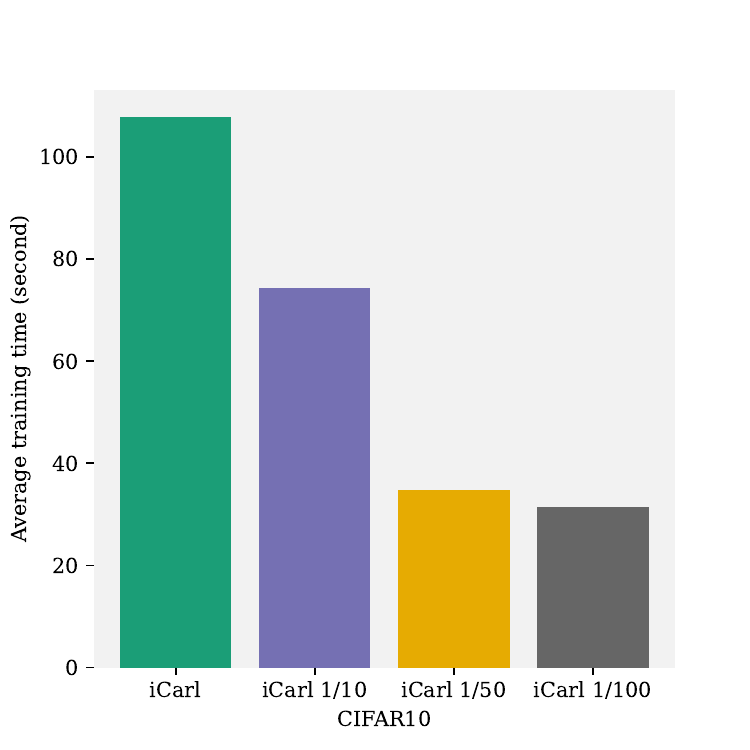}
    \includegraphics[width=0.24\columnwidth]{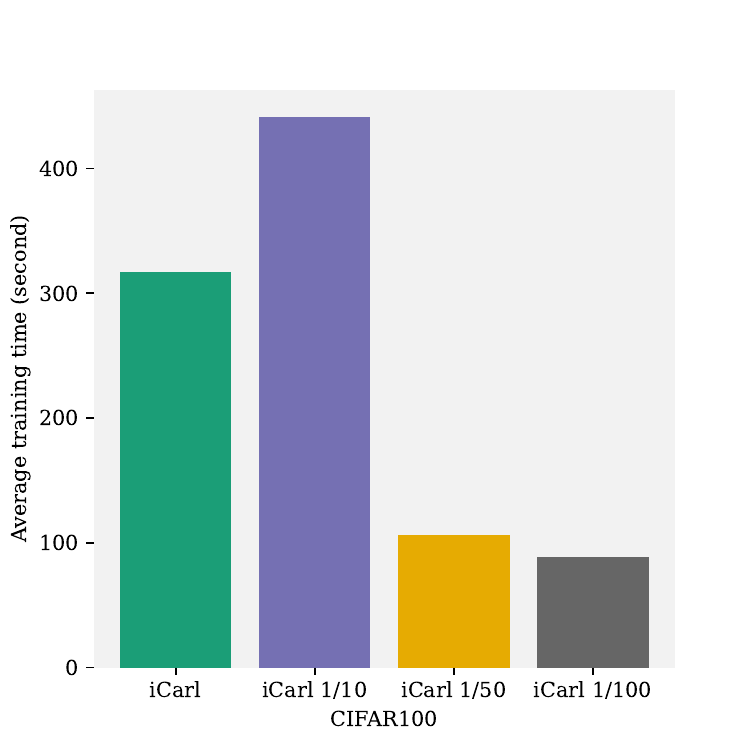}
    \includegraphics[width=0.24\columnwidth]{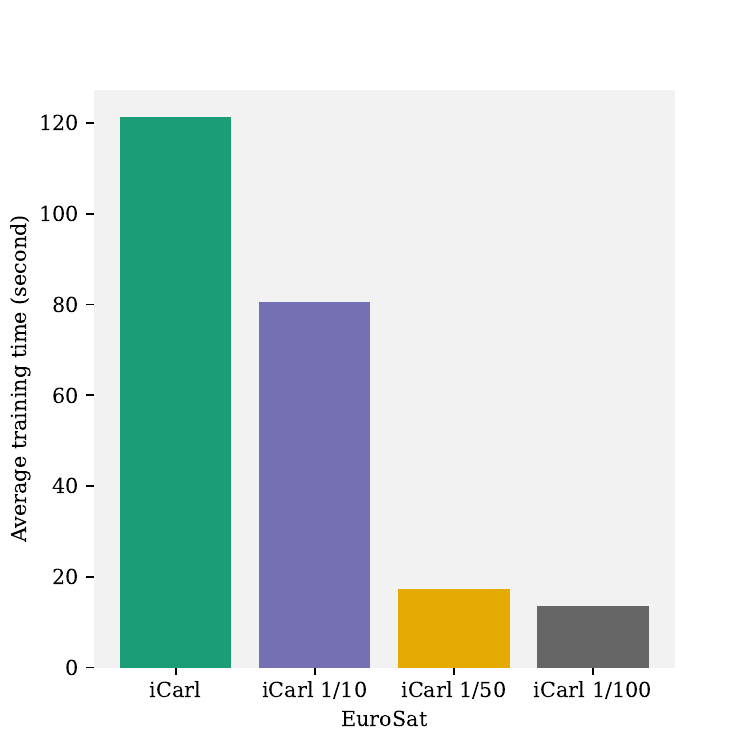}
    \includegraphics[width=0.24\columnwidth]{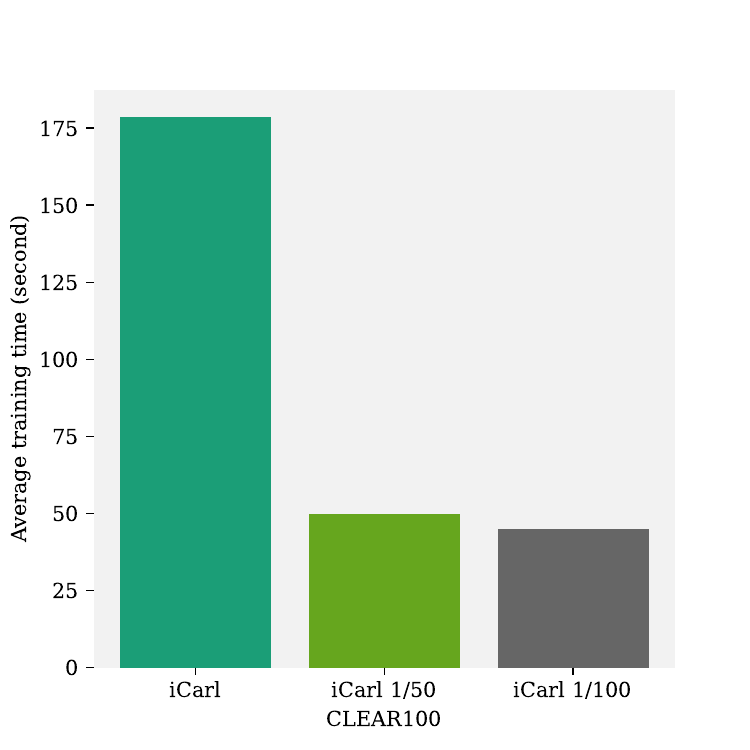}
    \caption{Training time of iCarl\cite{rebuffi2017icarl} under different replay frequencies across datasets.}
    \label{fig: efficiency1}
\end{figure}

\begin{figure}[ht]
    \centering
    \includegraphics[width=0.24\columnwidth]{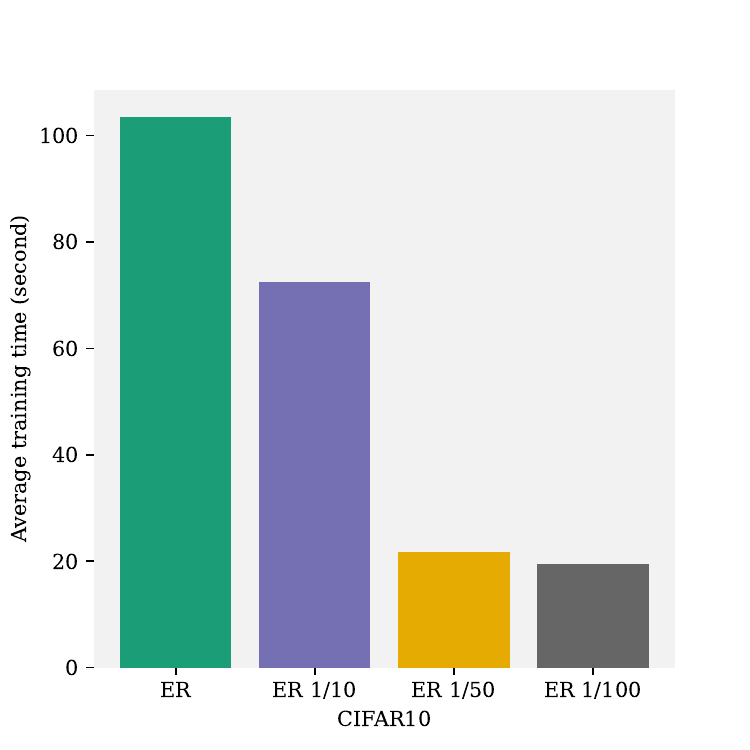}
    \includegraphics[width=0.24\columnwidth]{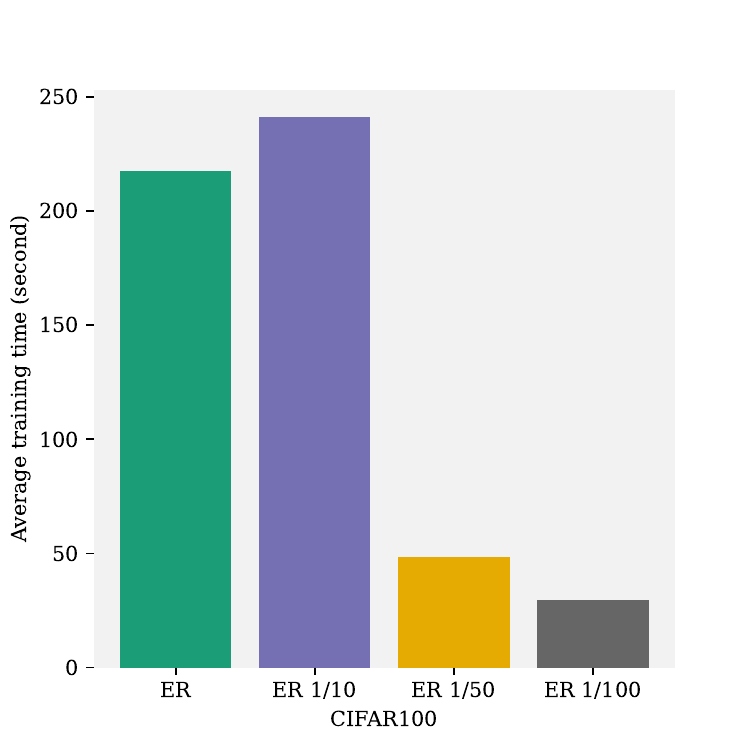}
    \includegraphics[width=0.24\columnwidth]{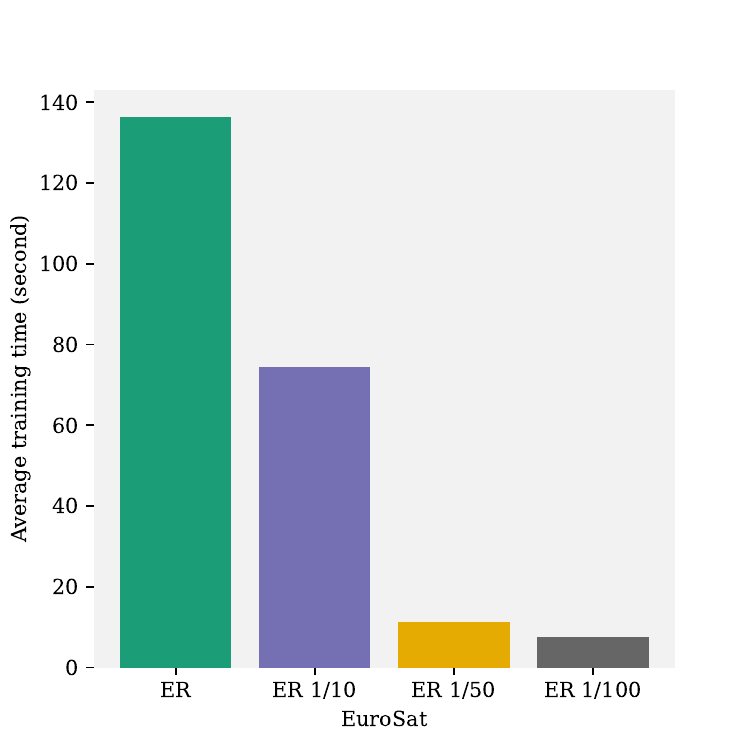}
    \includegraphics[width=0.24\columnwidth]{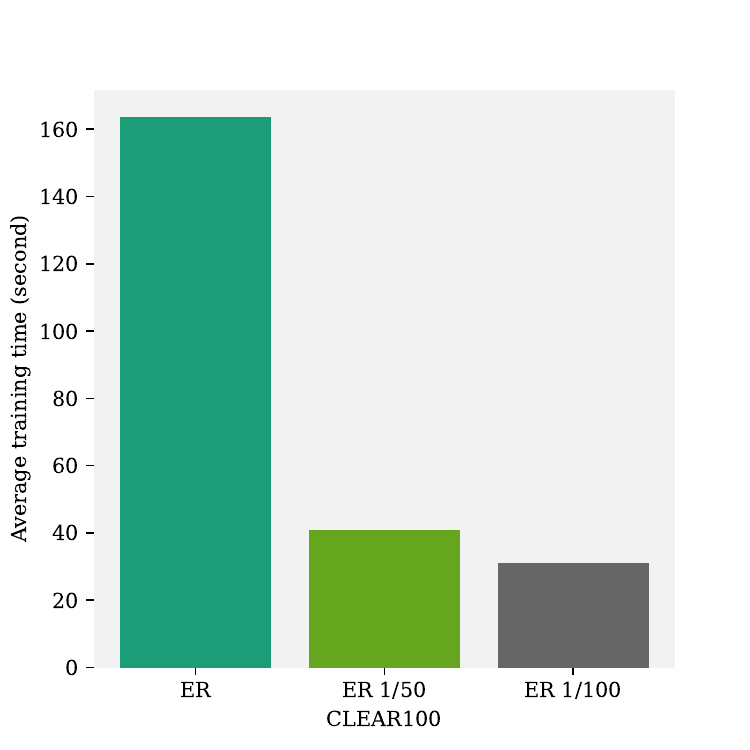}
    \caption{Training time of Experience Replay (ER)\cite{chaudhry2019tiny} under different replay frequencies across datasets.}
    \label{fig: efficiency2}
\end{figure}

\begin{figure*}[ht]
    \centering
    \includegraphics[width=0.85\textwidth]{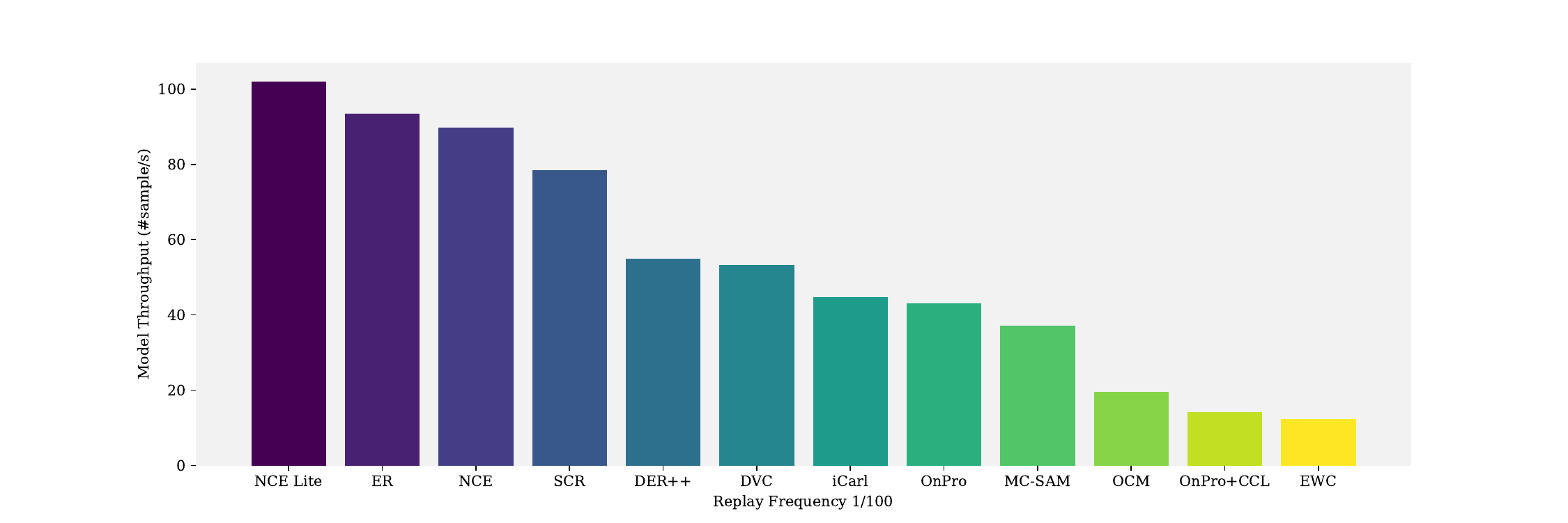}

    \caption{Averaged model throughput of 11 OCL methods on 6 datasets (Replay frequency as 1/100).}
    \label{fig: appendix throughput}
\end{figure*}

In recent years, various OCL methods have been proposed \cite{mai2022online,guo2022online,wei2023online,wang2023comprehensive,zhou2023ods}, among them, replay-based techniques that interleave past experiences with new data have emerged as a predominant component. It aims to consolidate feature learning from earlier tasks while mitigating catastrophic forgetting through the constant re-exposure of few old data. However, the evaluation of OCL methods often overlooks assessment of model throughput, a critical metric especially for data streams with different coming speed. To assess the performance and efficiency of popular OCL methods more effectively, we first record the running time as shown in Figure\ref{fig: efficiency1} and Figure\ref{fig: efficiency2}. It is evident that as the replay frequency increases, the training time of the model (every $200$ training iterations) also significantly increases. Consequently, this leads to a substantial reduction in the model's throughput. In comparison, our experimental settings, including frequencies of $1/50$ and $1/100$, considerably shorten the training time and enhance the model's throughput compared to fetching data from the memory buffer for each training iteration. Moreover, we evaluate the averaged model throughput of 11 OCL methods on 6 datasets. As shown in Figure\ref{fig: appendix throughput}, our proposed NsCE and NsCE Lite improve the model throughput while ensuring good performance.

\begin{figure}[ht]
    \centering
    \includegraphics[width=0.25\columnwidth]{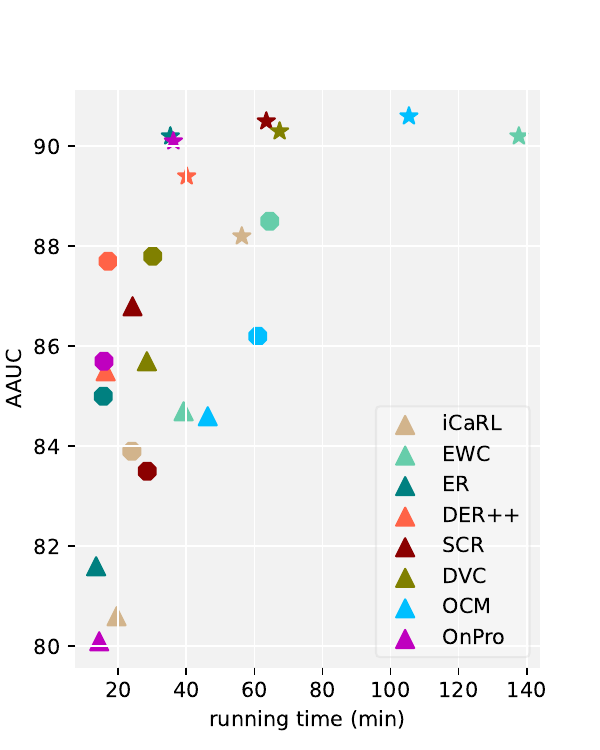}
    \includegraphics[width=0.25\columnwidth]{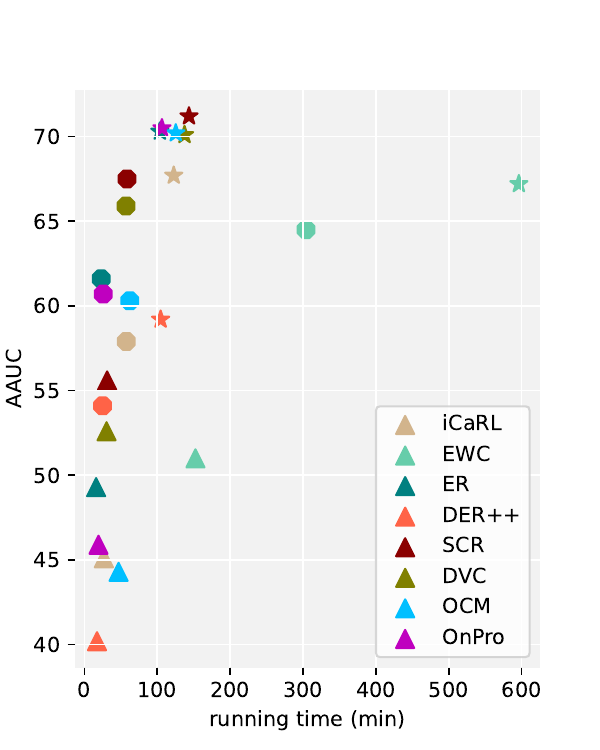}
    \caption{$A_{AUC}$ (Area Under the Curve of Accuracy) and running time on CIFAR10 and CIFAR100. $\blacktriangle$,$\bullet$,$\bigstar$ represents for different replay frequency of $1/100, 1/50, 1/10$.  }
    \label{fig: appendix efficiency}
\end{figure}

We also visualize the running time and $A_{AUC}$ of various OCL models with a pre-trained initialization. As shown in Figure\ref{fig: appendix efficiency}, existing methods indeed achieved improvements in model performance, but they often come at the cost of slower training speed. Interestingly, as shown in prior work\cite{ghunaim2023real}, if we utilize the available extra time to increase the frequency of replay and train the model multiple times on the memory, the performance gains are often greater compared to using state-of-the-art techniques such as various regularization or knowledge distillation techniques. 

In addition to training time, we also compare the inference time between our NsCE method and existing approaches as it also serves as an important metric when doing the real-time inference. As indicated in Table\ref{tab: inference}, our method achieves an inference time comparable to ER. However, due to the requirement to compute feature similarity or the need for extra projector, methods like iCaRL and OnPro exhibit slower inference speeds.

\begin{table}[ht]
\caption{Comparison on the inference time between our NsCE and some popular methods.}
    \centering
    \small
    \begin{tabular}{cccc}
    \hline
        Methods & CIFAR10 & EuroSat & ImageNet \\ \hline
        ER & 22s & 16s & 6min43s \\ 
        iCaRL & 31s & 26s & 9min45s \\
        OnPro & 24s & 20s & 7min09s \\ \hline
        NCE & 22s & 18s & 6min50s \\ \hline
    \end{tabular}
    \label{tab: inference}
\end{table}

\subsection{Conclusion}
In addition to efficiency, achieving a balance between model performance, throughput, and practicality is of great concern. The aforementioned practical limitations highlight the necessity for innovative approaches that can adapt to resource-constrained environments and effectively address the challenges in OCL. Considering the difficulties encountered in real-world scenarios, a simple alternative is to limit the frequency of memory access throughout the entire training process. This approach not only improves training throughput but also eliminates the need for real-time storage, thereby alleviating requirements related to hardware specifications, network connectivity, and privacy concerns.

However, this alternative approach introduces new challenges in avoiding model myopia and potential forgetting. While pre-training models can expedite the learning of valuable features, sampling less from memory makes the model to excessively concentrate on the current task, increasing the risk of model myopia and catastrophic forgetting.

\section{Forgetting Phenomenon} \label{forgetting}
We meticulously record the changes in model classification results when using a linear classifier without pre-training initialization, using a linear classifier with pre-training initialization, and using a prototype classifier with pre-training initialization. As demonstrated by Figure\ref{fig: cfm comparison} and Figure\ref{fig: appendix proto confusion}, pre-trained initialization allows the model to retain previously learned discriminant information without indiscriminately dividing the data into the current class. In our evaluation, we specifically assess the classifier results of CIFAR10 using linear softmax and the NCM prototype classifier. To gain insights into the model's classification performance, we visualize the classification results on the test set at the beginning and end of each task, as shown in Figure\ref{fig: cfm comparison}. We calculate the model's classification confusion matrix to analyze the results.Our observations reveal the following:
\begin{itemize}
    \item Pre-trained initialization helps the model rapidly achieve performance on the current task while providing a broader perspective to avoid mindlessly classifying past classes as part of the current task (as the comparison in \textcolor{red}{red} and \textcolor{blue}{blue} in Figure\ref{fig: appendix proto confusion}).
    \item The linear softmax classifier demonstrates a quicker acquisition of improved discriminative abilities for data within the current task compared to the NCM classifier (as the comparison in \textcolor{green}{green} and \textcolor{blue}{blue} in Figure\ref{fig: appendix proto confusion}). However, it is more prone to misclassifying categories from previous tasks as belonging to the current task, resulting in a decline in overall performance.
    \item When using a pre-trained model and having continuous access to the memory buffer, as illustrated in Figure\ref{fig: cfm comparison}, we can quickly achieve good overall performance.
\end{itemize}

\begin{table*}[ht]
\caption{Comparison of anti-forgetting techniques and our method.}
    \centering
    \tiny
    \begin{tabular}{ccccccc}
    \hline
        Methods & CIFAR10 w/ ER & CIFAR10 w/o ER & CIFAR100 w/ ER & CIFAR100 w/o ER & EuroSat w/ ER & EuroSat w/o ER \\ \hline
        Baseline & 82.6 & 71.6 & 61.3 & 39.3 & 58.6 & 38.8 \\ 
        EWC & 81.7 & 70.9 & 60.7 & 40.4 & 61.0 & 33.4 \\ 
        AGEM & 78.6 & 72.3 & 50.2 & 37.9 & 56.4 & 39.7 \\ 
        SCR & 83.8 & 70.2 & 61.5 & 40.4 & 52.1 & 40.4 \\ 
        OnPro & 81.1 & 71.4 & 62.9 & 42.0 & 52.8 & 41.8 \\ \hline
        NsCE & \textbf{86.2} & \textbf{79.8} & \textbf{66.1} & \textbf{46.8} & \textbf{72.4} & \textbf{45.6} \\ \hline
    \end{tabular}
    \label{tab: wo er}
\end{table*}

To further clarify the difference of our recognized \textbf{model's myopia} and the catastrophic forgetting, we implement existing anti-forgetting techniques against myopia and benchmark our NsCE framework against prevalent methods both with and without experience replay. Our results in Table\ref{tab: wo er} show that popular gradient-based regularization methods such as EWC and AGEM do not effectively prevent performance degradation. Their performance are only on par with a simple supervised learning baseline and it's the same case with or without pre-trained initialization. Additionally, techniques like SCR and OnPro, which use contrastive learning to improve feature discrimination, do not consistently enhance performance. Notably, the gains seen with SCR and OnPro largely stem from use of augmented samples, a tactic not used in earlier gradient-based methods or our NsCE approach. These findings underscore the idea that \textbf{model myopia} is a more pressing concern than the often-addressed catastrophic forgetting in the context of OCL. 

In addition to the empirical evidence, we give a simplified example of the \textbf{model's myopia}. As shown in Figure\ref{fig: intuition}, the discriminative features or attributes for the current task may exactly be the cause of confusion when dealing with future categories, which means the model's cognition for each class must dynamically evolve with the arrival of new data.

\begin{figure}[t]
    \centering
    \includegraphics[width=1\columnwidth]{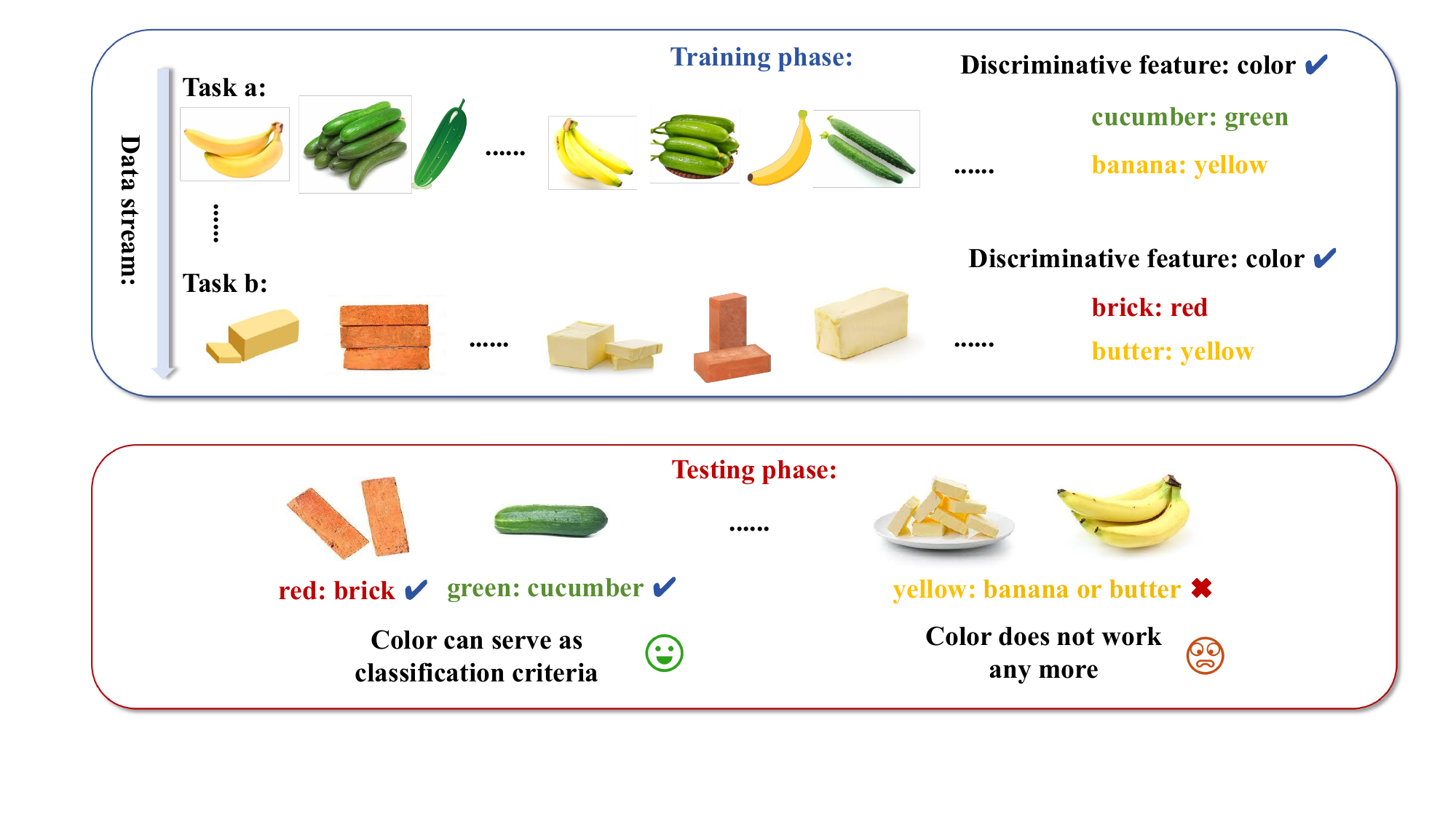}
    \caption{Color, which is the most discriminative feature for task a (banana vs. cucumber) and task b (butter vs. brick), is precisely the reason why the model confuses butter and banana.}
    \label{fig: intuition}
\end{figure}

\begin{figure}[ht]
    \centering
    \includegraphics[width=0.32\columnwidth]{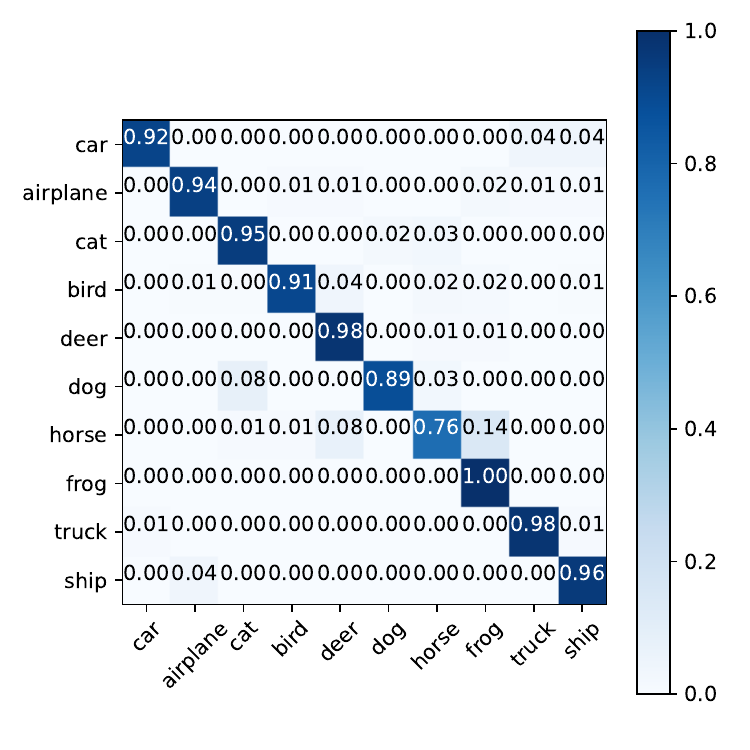}
    \includegraphics[width=0.32\columnwidth]{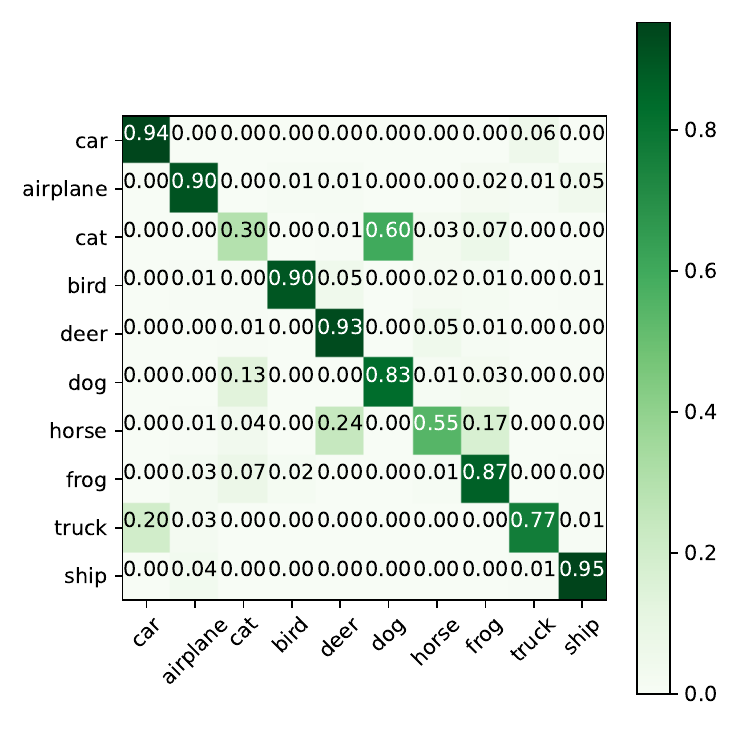}
    \includegraphics[width=0.32\columnwidth]{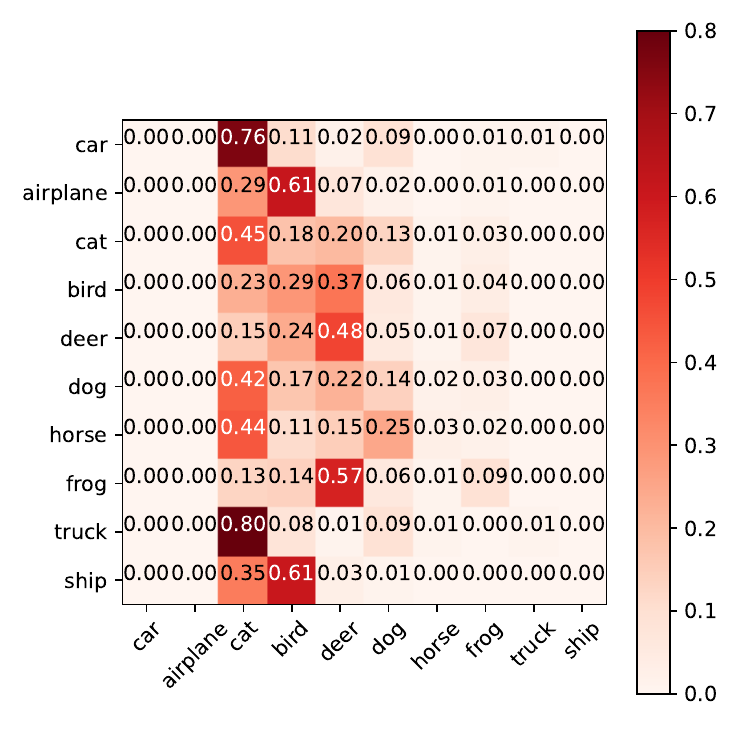}
    \caption{Normalized confusion matrix after fine-tuning on a memory buffer with a size of 500.}
    \label{fig: cfm comparison}
\end{figure}

\begin{figure*}[htbp]
\centering
\begin{minipage}[t]{1\textwidth}
\centering
    \includegraphics[width=0.195\textwidth]{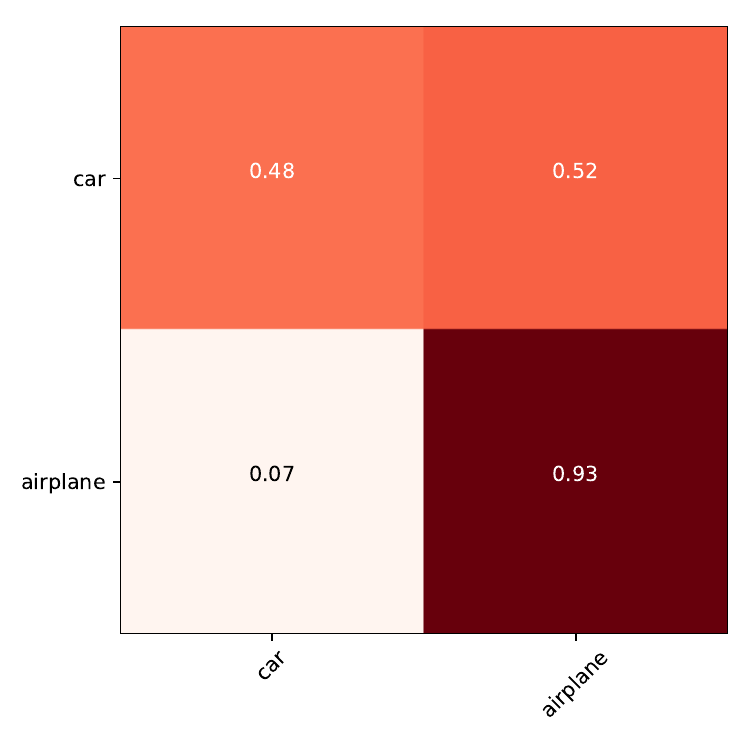}
    \includegraphics[width=0.195\textwidth]{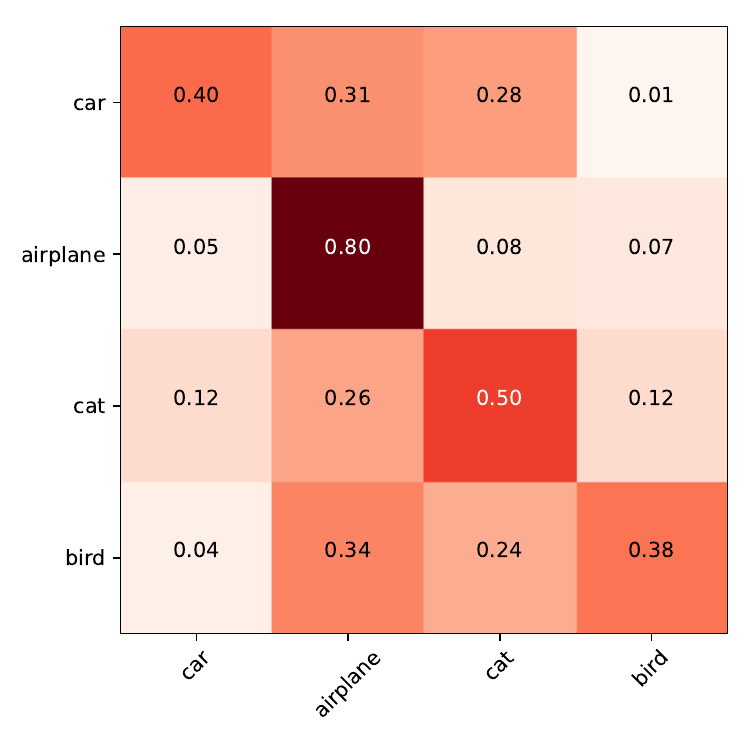} 
    \includegraphics[width=0.195\textwidth]{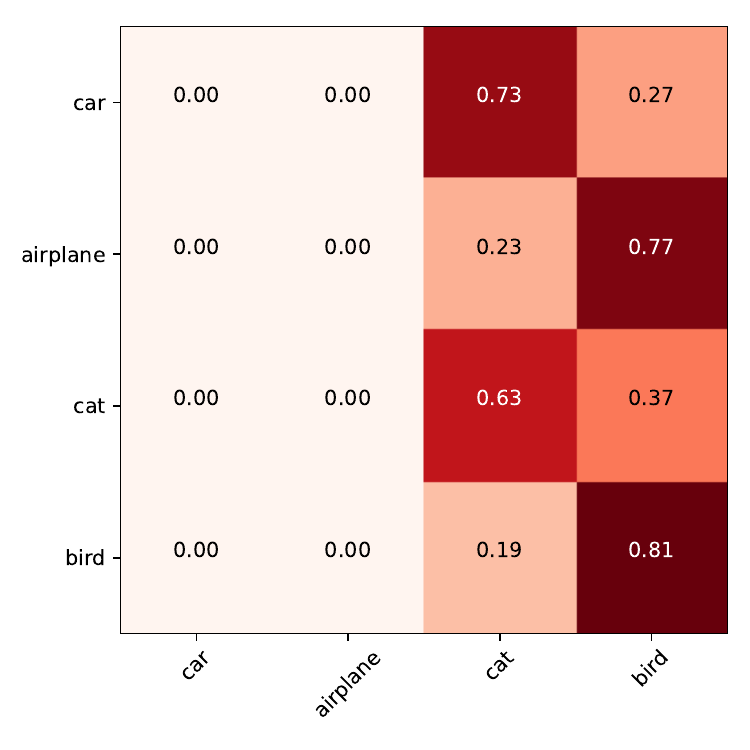}
    \includegraphics[width=0.195\linewidth]{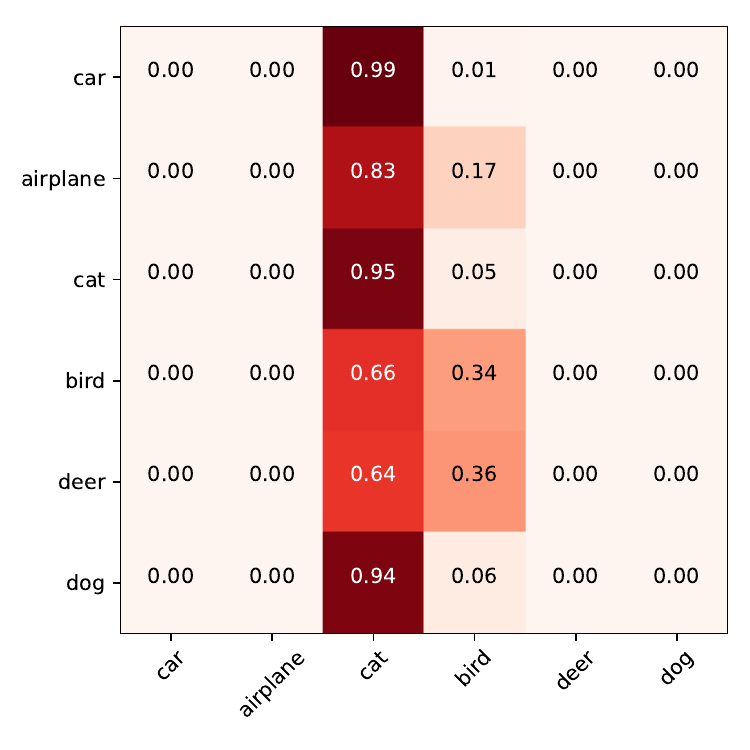} 
    \includegraphics[width=0.195\textwidth]{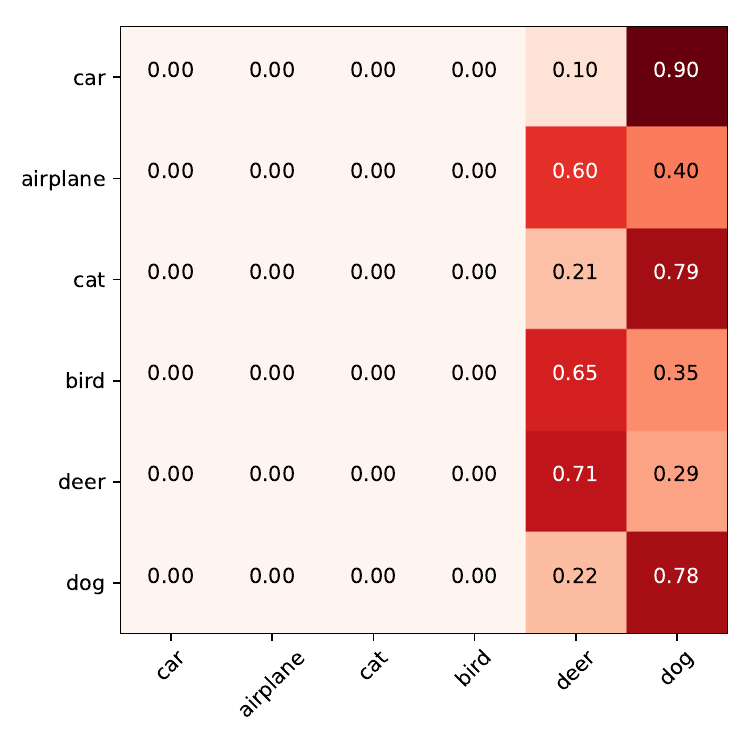} \\
\end{minipage}
\begin{minipage}[t]{1\linewidth}
\centering
    \includegraphics[width=0.195\textwidth]{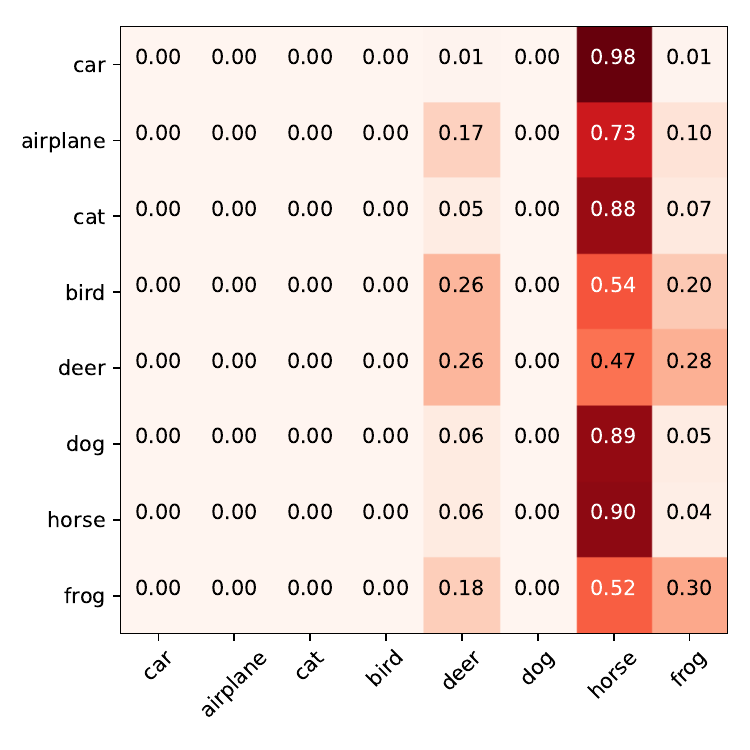} 
    \includegraphics[width=0.195\textwidth]{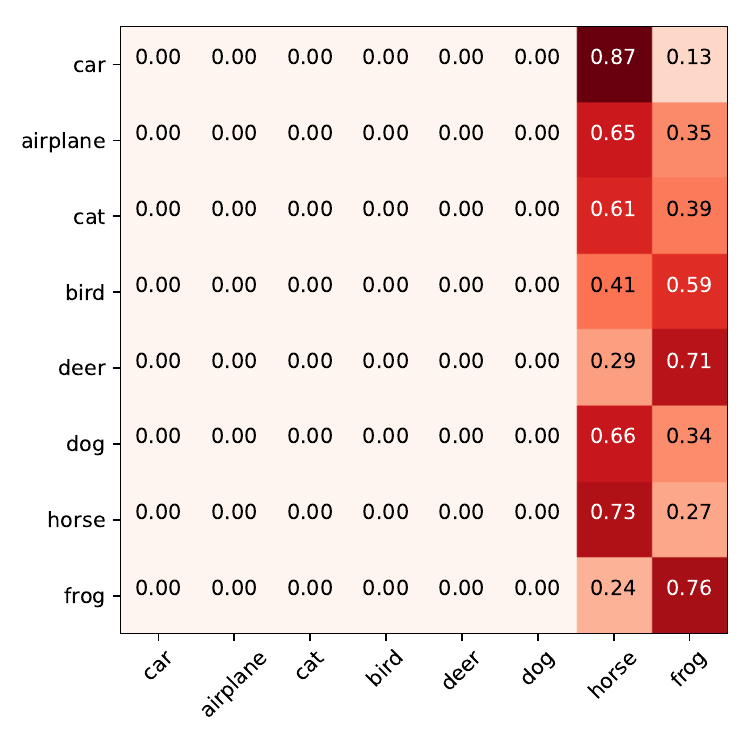}
    \includegraphics[width=0.195\linewidth]{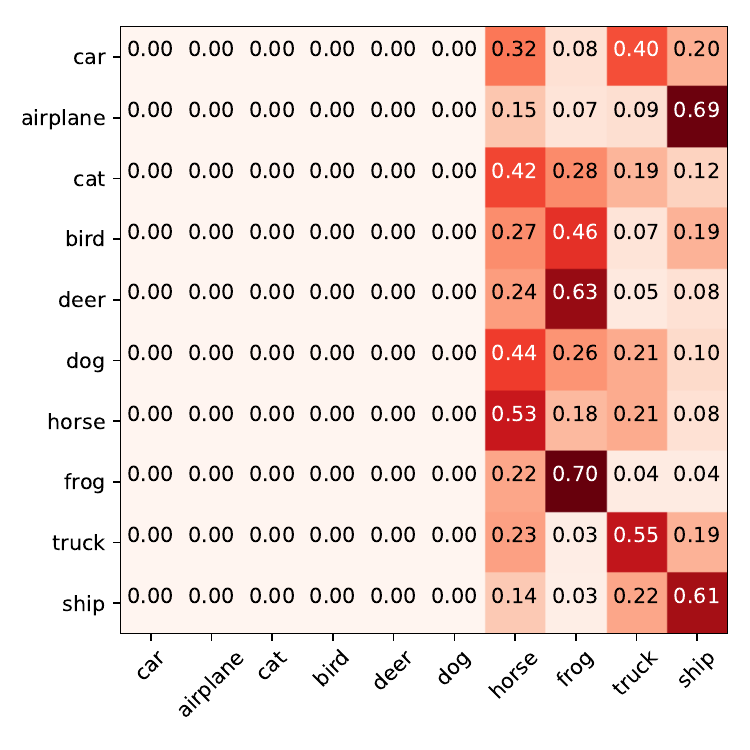} 
    \includegraphics[width=0.195\textwidth]{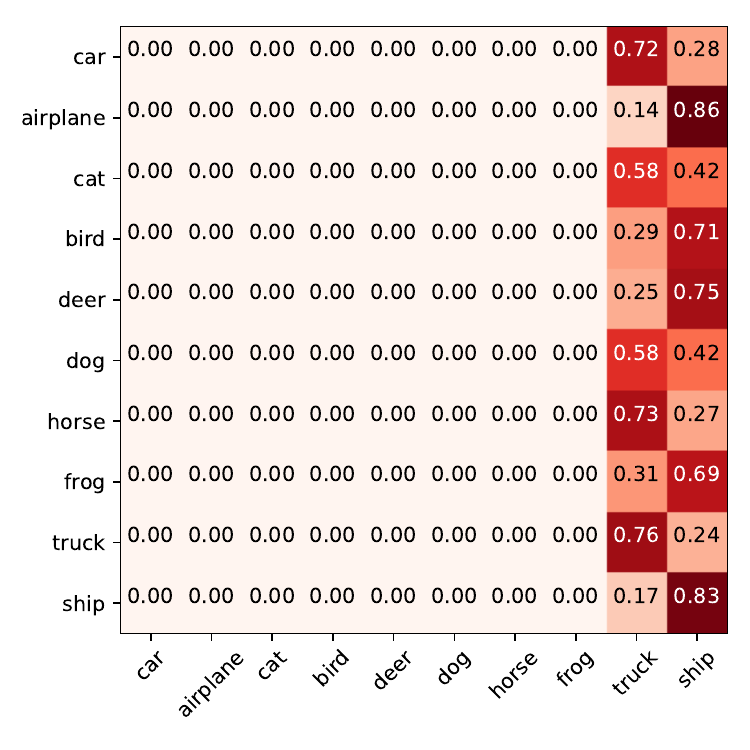}
    \includegraphics[width=0.195\textwidth]{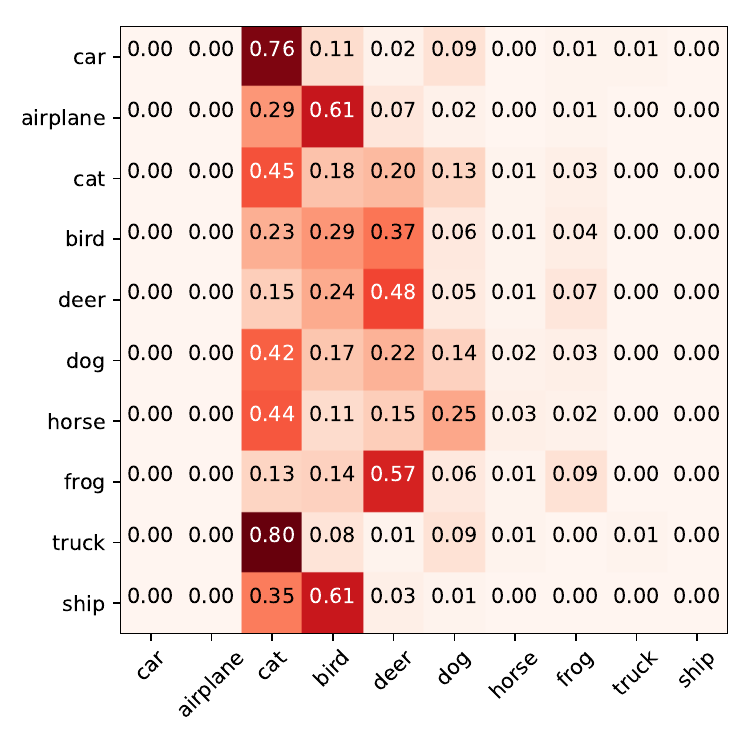} 
    \end{minipage}

\centering
\begin{minipage}[t]{1\textwidth}
\centering
    \includegraphics[width=0.195\textwidth]{img/l1.pdf}
    \includegraphics[width=0.195\textwidth]{img/l2.pdf} 
    \includegraphics[width=0.195\textwidth]{img/l3.pdf}
    \includegraphics[width=0.195\linewidth]{img/l4.pdf} 
    \includegraphics[width=0.195\textwidth]{img/l5.pdf} \\
\end{minipage}
\begin{minipage}[t]{1\linewidth}
\centering
    \includegraphics[width=0.195\textwidth]{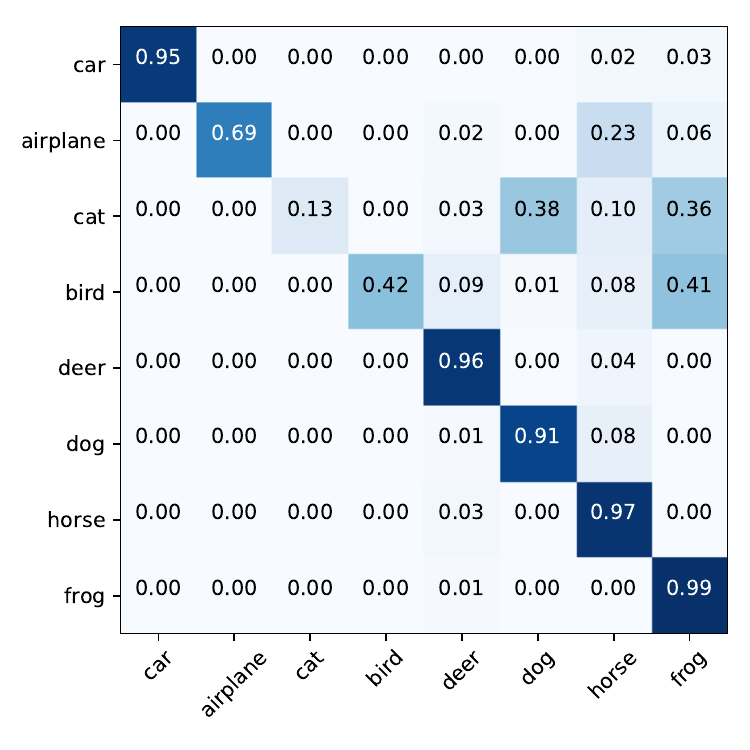} 
    \includegraphics[width=0.195\textwidth]{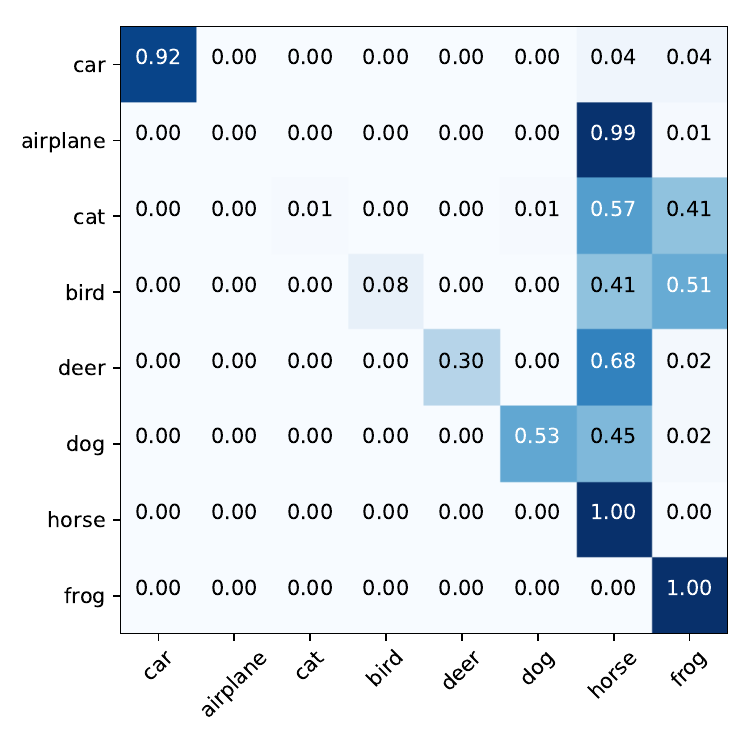}
    \includegraphics[width=0.195\linewidth]{img/l8.pdf} 
    \includegraphics[width=0.195\textwidth]{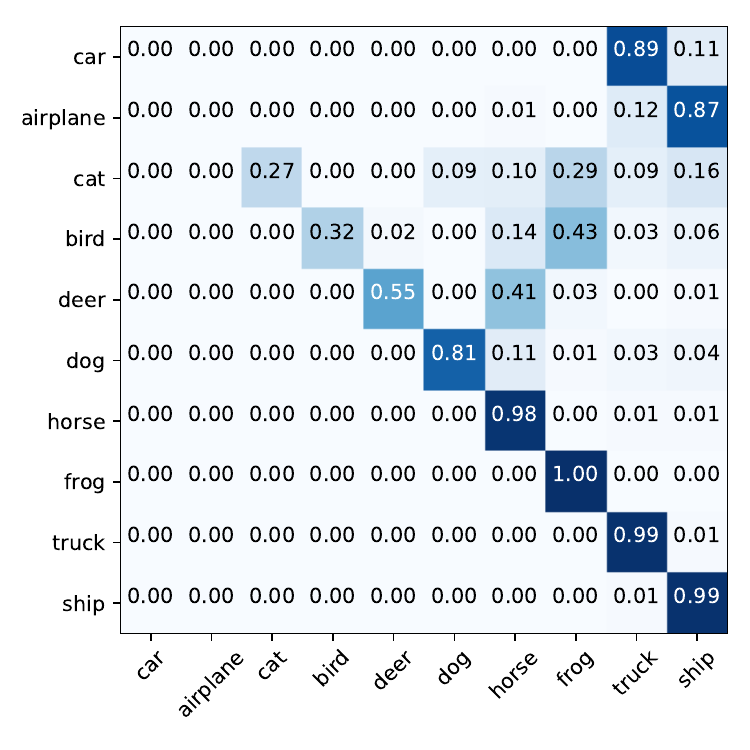}
    \includegraphics[width=0.195\textwidth]{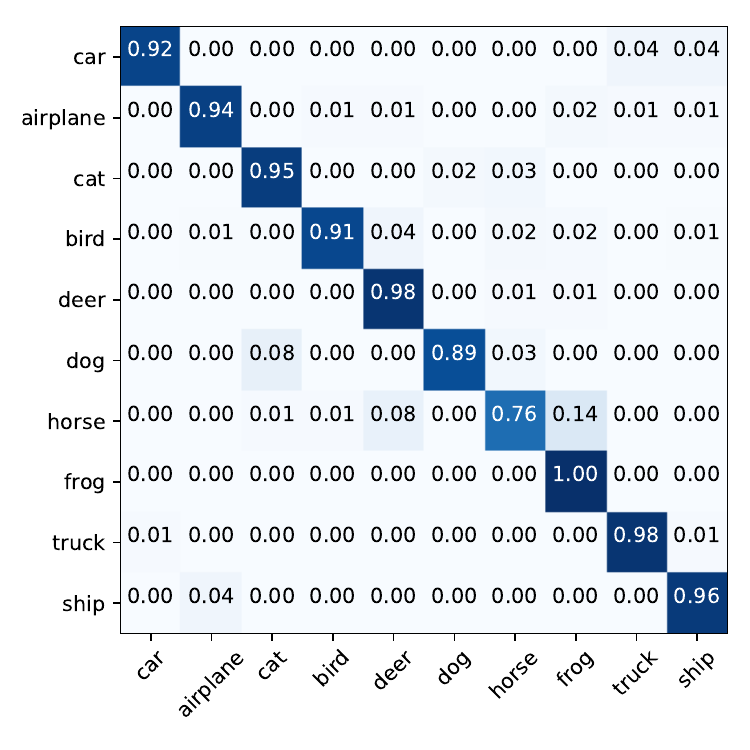} 
    \end{minipage}

\centering
\begin{minipage}[t]{1\textwidth}
\centering
    \includegraphics[width=0.195\textwidth]{img/p1.pdf}
    \includegraphics[width=0.195\textwidth]{img/p2.pdf} 
    \includegraphics[width=0.195\textwidth]{img/p3.pdf}
    \includegraphics[width=0.195\linewidth]{img/p4.pdf} 
    \includegraphics[width=0.195\textwidth]{img/p5.pdf} \\
\end{minipage}
\begin{minipage}[t]{1\linewidth}
\centering
    \includegraphics[width=0.195\textwidth]{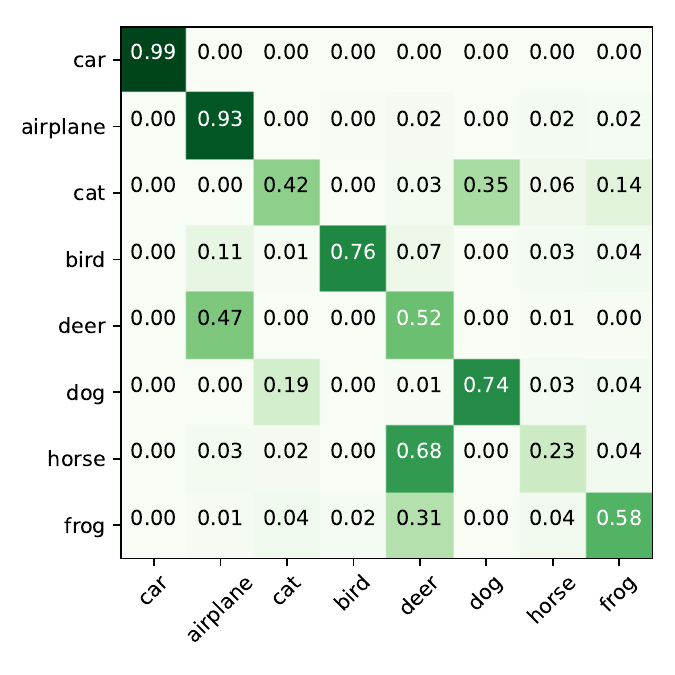} 
    \includegraphics[width=0.195\textwidth]{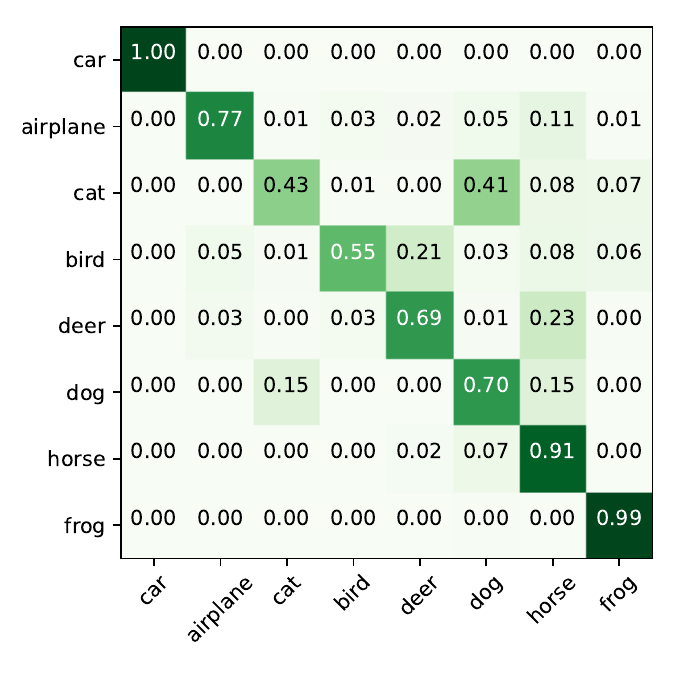}
    \includegraphics[width=0.195\linewidth]{img/p8.pdf} 
    \includegraphics[width=0.195\textwidth]{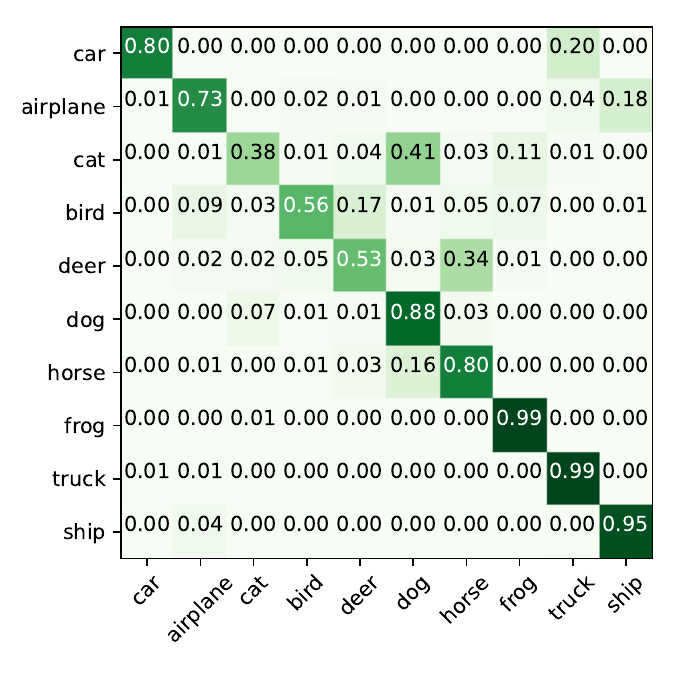} 
    \includegraphics[width=0.195\textwidth]{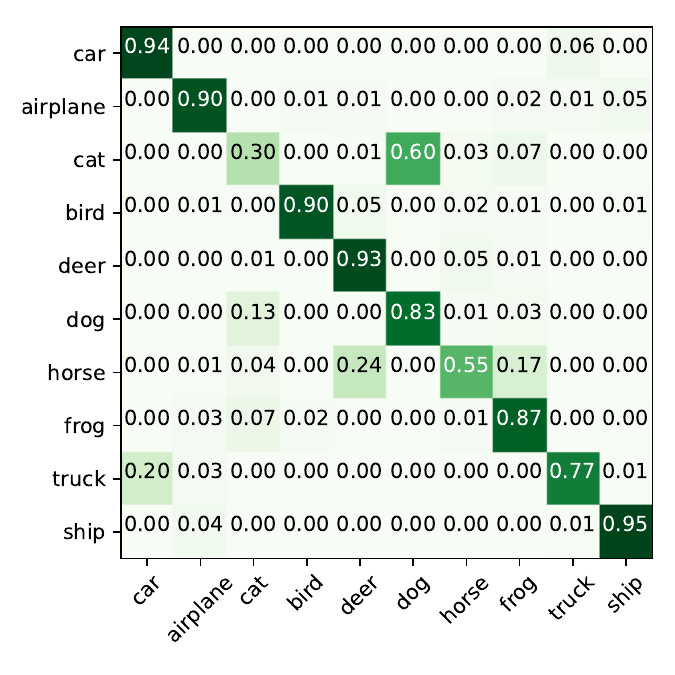} 
    \end{minipage}

    \caption{The normalized confusion matrix (CIFAR10) evolution of linear softmax classifier without pre-trained initialization (\textcolor{red}{red}) and NCM classifier (\textcolor{green}{green}), linear softmax classifier (\textcolor{blue}{blue}) with supervised pre-trained models on ImageNet.}
    \label{fig: appendix proto confusion}
\end{figure*}

\newpage
\section*{NeurIPS Paper Checklist}

\begin{enumerate}

\item {\bf Claims}
    \item[] Question: Do the main claims made in the abstract and introduction accurately reflect the paper's contributions and scope?
    \item[] Answer: \answerYes{} 
    \item[] Justification: Section 2\&3
    \item[] Guidelines:
    \begin{itemize}
        \item The answer NA means that the abstract and introduction do not include the claims made in the paper.
        \item The abstract and/or introduction should clearly state the claims made, including the contributions made in the paper and important assumptions and limitations. A No or NA answer to this question will not be perceived well by the reviewers. 
        \item The claims made should match theoretical and experimental results, and reflect how much the results can be expected to generalize to other settings. 
        \item It is fine to include aspirational goals as motivation as long as it is clear that these goals are not attained by the paper. 
    \end{itemize}

\item {\bf Limitations}
    \item[] Question: Does the paper discuss the limitations of the work performed by the authors?
    \item[] Answer: \answerYes{} 
    \item[] Justification: Appendix C
    \item[] Guidelines:
    \begin{itemize}
        \item The answer NA means that the paper has no limitation while the answer No means that the paper has limitations, but those are not discussed in the paper. 
        \item The authors are encouraged to create a separate "Limitations" section in their paper.
        \item The paper should point out any strong assumptions and how robust the results are to violations of these assumptions (e.g., independence assumptions, noiseless settings, model well-specification, asymptotic approximations only holding locally). The authors should reflect on how these assumptions might be violated in practice and what the implications would be.
        \item The authors should reflect on the scope of the claims made, e.g., if the approach was only tested on a few datasets or with a few runs. In general, empirical results often depend on implicit assumptions, which should be articulated.
        \item The authors should reflect on the factors that influence the performance of the approach. For example, a facial recognition algorithm may perform poorly when image resolution is low or images are taken in low lighting. Or a speech-to-text system might not be used reliably to provide closed captions for online lectures because it fails to handle technical jargon.
        \item The authors should discuss the computational efficiency of the proposed algorithms and how they scale with dataset size.
        \item If applicable, the authors should discuss possible limitations of their approach to address problems of privacy and fairness.
        \item While the authors might fear that complete honesty about limitations might be used by reviewers as grounds for rejection, a worse outcome might be that reviewers discover limitations that aren't acknowledged in the paper. The authors should use their best judgment and recognize that individual actions in favor of transparency play an important role in developing norms that preserve the integrity of the community. Reviewers will be specifically instructed to not penalize honesty concerning limitations.
    \end{itemize}

\item {\bf Theory Assumptions and Proofs}
    \item[] Question: For each theoretical result, does the paper provide the full set of assumptions and a complete (and correct) proof?
    \item[] Answer: \answerYes{} 
    \item[] Justification: Section 2 \& Appendix A
    \item[] Guidelines:
    \begin{itemize}
        \item The answer NA means that the paper does not include theoretical results. 
        \item All the theorems, formulas, and proofs in the paper should be numbered and cross-referenced.
        \item All assumptions should be clearly stated or referenced in the statement of any theorems.
        \item The proofs can either appear in the main paper or the supplemental material, but if they appear in the supplemental material, the authors are encouraged to provide a short proof sketch to provide intuition. 
        \item Inversely, any informal proof provided in the core of the paper should be complemented by formal proofs provided in appendix or supplemental material.
        \item Theorems and Lemmas that the proof relies upon should be properly referenced. 
    \end{itemize}

    \item {\bf Experimental Result Reproducibility}
    \item[] Question: Does the paper fully disclose all the information needed to reproduce the main experimental results of the paper to the extent that it affects the main claims and/or conclusions of the paper (regardless of whether the code and data are provided or not)?
    \item[] Answer: \answerYes{} 
    \item[] Justification: We provide the code and details in Appendix B.1 and our supplementary materials.
    \item[] Guidelines:
    \begin{itemize}
        \item The answer NA means that the paper does not include experiments.
        \item If the paper includes experiments, a No answer to this question will not be perceived well by the reviewers: Making the paper reproducible is important, regardless of whether the code and data are provided or not.
        \item If the contribution is a dataset and/or model, the authors should describe the steps taken to make their results reproducible or verifiable. 
        \item Depending on the contribution, reproducibility can be accomplished in various ways. For example, if the contribution is a novel architecture, describing the architecture fully might suffice, or if the contribution is a specific model and empirical evaluation, it may be necessary to either make it possible for others to replicate the model with the same dataset, or provide access to the model. In general. releasing code and data is often one good way to accomplish this, but reproducibility can also be provided via detailed instructions for how to replicate the results, access to a hosted model (e.g., in the case of a large language model), releasing of a model checkpoint, or other means that are appropriate to the research performed.
        \item While NeurIPS does not require releasing code, the conference does require all submissions to provide some reasonable avenue for reproducibility, which may depend on the nature of the contribution. For example
        \begin{enumerate}
            \item If the contribution is primarily a new algorithm, the paper should make it clear how to reproduce that algorithm.
            \item If the contribution is primarily a new model architecture, the paper should describe the architecture clearly and fully.
            \item If the contribution is a new model (e.g., a large language model), then there should either be a way to access this model for reproducing the results or a way to reproduce the model (e.g., with an open-source dataset or instructions for how to construct the dataset).
            \item We recognize that reproducibility may be tricky in some cases, in which case authors are welcome to describe the particular way they provide for reproducibility. In the case of closed-source models, it may be that access to the model is limited in some way (e.g., to registered users), but it should be possible for other researchers to have some path to reproducing or verifying the results.
        \end{enumerate}
    \end{itemize}

\item {\bf Open access to data and code}
    \item[] Question: Does the paper provide open access to the data and code, with sufficient instructions to faithfully reproduce the main experimental results, as described in supplemental material?
    \item[] Answer: \answerYes{} 
    \item[] Justification: We provide the code and details in Appendix B.1 and the code link in the first page.
    \item[] Guidelines:
    \begin{itemize}
        \item The answer NA means that paper does not include experiments requiring code.
        \item Please see the NeurIPS code and data submission guidelines (\url{https://nips.cc/public/guides/CodeSubmissionPolicy}) for more details.
        \item While we encourage the release of code and data, we understand that this might not be possible, so “No” is an acceptable answer. Papers cannot be rejected simply for not including code, unless this is central to the contribution (e.g., for a new open-source benchmark).
        \item The instructions should contain the exact command and environment needed to run to reproduce the results. See the NeurIPS code and data submission guidelines (\url{https://nips.cc/public/guides/CodeSubmissionPolicy}) for more details.
        \item The authors should provide instructions on data access and preparation, including how to access the raw data, preprocessed data, intermediate data, and generated data, etc.
        \item The authors should provide scripts to reproduce all experimental results for the new proposed method and baselines. If only a subset of experiments are reproducible, they should state which ones are omitted from the script and why.
        \item At submission time, to preserve anonymity, the authors should release anonymized versions (if applicable).
        \item Providing as much information as possible in supplemental material (appended to the paper) is recommended, but including URLs to data and code is permitted.
    \end{itemize}

\item {\bf Experimental Setting/Details}
    \item[] Question: Does the paper specify all the training and test details (e.g., data splits, hyperparameters, how they were chosen, type of optimizer, etc.) necessary to understand the results?
    \item[] Answer: \answerYes{} 
    \item[] Justification: We provide the code and details in Appendix B.1 and our supplementary materials.
    \item[] Guidelines:
    \begin{itemize}
        \item The answer NA means that the paper does not include experiments.
        \item The experimental setting should be presented in the core of the paper to a level of detail that is necessary to appreciate the results and make sense of them.
        \item The full details can be provided either with the code, in appendix, or as supplemental material.
    \end{itemize}

\item {\bf Experiment Statistical Significance}
    \item[] Question: Does the paper report error bars suitably and correctly defined or other appropriate information about the statistical significance of the experiments?
    \item[] Answer: \answerYes{} 
    \item[] Justification: In Section5
    \item[] Guidelines:
    \begin{itemize}
        \item The answer NA means that the paper does not include experiments.
        \item The authors should answer "Yes" if the results are accompanied by error bars, confidence intervals, or statistical significance tests, at least for the experiments that support the main claims of the paper.
        \item The factors of variability that the error bars are capturing should be clearly stated (for example, train/test split, initialization, random drawing of some parameter, or overall run with given experimental conditions).
        \item The method for calculating the error bars should be explained (closed form formula, call to a library function, bootstrap, etc.)
        \item The assumptions made should be given (e.g., Normally distributed errors).
        \item It should be clear whether the error bar is the standard deviation or the standard error of the mean.
        \item It is OK to report 1-sigma error bars, but one should state it. The authors should preferably report a 2-sigma error bar than state that they have a 96\% CI, if the hypothesis of Normality of errors is not verified.
        \item For asymmetric distributions, the authors should be careful not to show in tables or figures symmetric error bars that would yield results that are out of range (e.g. negative error rates).
        \item If error bars are reported in tables or plots, The authors should explain in the text how they were calculated and reference the corresponding figures or tables in the text.
    \end{itemize}

\item {\bf Experiments Compute Resources}
    \item[] Question: For each experiment, does the paper provide sufficient information on the computer resources (type of compute workers, memory, time of execution) needed to reproduce the experiments?
    \item[] Answer: \answerYes{} 
    \item[] Justification: In Appendix B.1
    \item[] Guidelines:
    \begin{itemize}
        \item The answer NA means that the paper does not include experiments.
        \item The paper should indicate the type of compute workers CPU or GPU, internal cluster, or cloud provider, including relevant memory and storage.
        \item The paper should provide the amount of compute required for each of the individual experimental runs as well as estimate the total compute. 
        \item The paper should disclose whether the full research project required more compute than the experiments reported in the paper (e.g., preliminary or failed experiments that didn't make it into the paper). 
    \end{itemize}
    
\item {\bf Code Of Ethics}
    \item[] Question: Does the research conducted in the paper conform, in every respect, with the NeurIPS Code of Ethics \url{https://neurips.cc/public/EthicsGuidelines}?
    \item[] Answer: \answerYes{} 
    \item[] Justification: I have checked it.
    \item[] Guidelines:
    \begin{itemize}
        \item The answer NA means that the authors have not reviewed the NeurIPS Code of Ethics.
        \item If the authors answer No, they should explain the special circumstances that require a deviation from the Code of Ethics.
        \item The authors should make sure to preserve anonymity (e.g., if there is a special consideration due to laws or regulations in their jurisdiction).
    \end{itemize}

\item {\bf Broader Impacts}
    \item[] Question: Does the paper discuss both potential positive societal impacts and negative societal impacts of the work performed?
    \item[] Answer: \answerNA{} 
    \item[] Justification: Not included.
    \item[] Guidelines:
    \begin{itemize}
        \item The answer NA means that there is no societal impact of the work performed.
        \item If the authors answer NA or No, they should explain why their work has no societal impact or why the paper does not address societal impact.
        \item Examples of negative societal impacts include potential malicious or unintended uses (e.g., disinformation, generating fake profiles, surveillance), fairness considerations (e.g., deployment of technologies that could make decisions that unfairly impact specific groups), privacy considerations, and security considerations.
        \item The conference expects that many papers will be foundational research and not tied to particular applications, let alone deployments. However, if there is a direct path to any negative applications, the authors should point it out. For example, it is legitimate to point out that an improvement in the quality of generative models could be used to generate deepfakes for disinformation. On the other hand, it is not needed to point out that a generic algorithm for optimizing neural networks could enable people to train models that generate Deepfakes faster.
        \item The authors should consider possible harms that could arise when the technology is being used as intended and functioning correctly, harms that could arise when the technology is being used as intended but gives incorrect results, and harms following from (intentional or unintentional) misuse of the technology.
        \item If there are negative societal impacts, the authors could also discuss possible mitigation strategies (e.g., gated release of models, providing defenses in addition to attacks, mechanisms for monitoring misuse, mechanisms to monitor how a system learns from feedback over time, improving the efficiency and accessibility of ML).
    \end{itemize}
    
\item {\bf Safeguards}
    \item[] Question: Does the paper describe safeguards that have been put in place for responsible release of data or models that have a high risk for misuse (e.g., pretrained language models, image generators, or scraped datasets)?
    \item[] Answer: \answerNA{} 
    \item[] Justification: Not included.
    \item[] Guidelines:
    \begin{itemize}
        \item The answer NA means that the paper poses no such risks.
        \item Released models that have a high risk for misuse or dual-use should be released with necessary safeguards to allow for controlled use of the model, for example by requiring that users adhere to usage guidelines or restrictions to access the model or implementing safety filters. 
        \item Datasets that have been scraped from the Internet could pose safety risks. The authors should describe how they avoided releasing unsafe images.
        \item We recognize that providing effective safeguards is challenging, and many papers do not require this, but we encourage authors to take this into account and make a best faith effort.
    \end{itemize}

\item {\bf Licenses for existing assets}
    \item[] Question: Are the creators or original owners of assets (e.g., code, data, models), used in the paper, properly credited and are the license and terms of use explicitly mentioned and properly respected?
    \item[] Answer: \answerYes{} 
    \item[] Justification: We have cited or mentioned them in our paper and code.
    \item[] Guidelines:
    \begin{itemize}
        \item The answer NA means that the paper does not use existing assets.
        \item The authors should cite the original paper that produced the code package or dataset.
        \item The authors should state which version of the asset is used and, if possible, include a URL.
        \item The name of the license (e.g., CC-BY 4.0) should be included for each asset.
        \item For scraped data from a particular source (e.g., website), the copyright and terms of service of that source should be provided.
        \item If assets are released, the license, copyright information, and terms of use in the package should be provided. For popular datasets, \url{paperswithcode.com/datasets} has curated licenses for some datasets. Their licensing guide can help determine the license of a dataset.
        \item For existing datasets that are re-packaged, both the original license and the license of the derived asset (if it has changed) should be provided.
        \item If this information is not available online, the authors are encouraged to reach out to the asset's creators.
    \end{itemize}

\item {\bf New Assets}
    \item[] Question: Are new assets introduced in the paper well documented and is the documentation provided alongside the assets?
    \item[] Answer: \answerYes{} 
    \item[] Justification: We have included them in the supplementary materials.
    \item[] Guidelines:
    \begin{itemize}
        \item The answer NA means that the paper does not release new assets.
        \item Researchers should communicate the details of the dataset/code/model as part of their submissions via structured templates. This includes details about training, license, limitations, etc. 
        \item The paper should discuss whether and how consent was obtained from people whose asset is used.
        \item At submission time, remember to anonymize your assets (if applicable). You can either create an anonymized URL or include an anonymized zip file.
    \end{itemize}

\item {\bf Crowdsourcing and Research with Human Subjects}
    \item[] Question: For crowdsourcing experiments and research with human subjects, does the paper include the full text of instructions given to participants and screenshots, if applicable, as well as details about compensation (if any)? 
    \item[] Answer: \answerNo{} 
    \item[] Justification: No crowdsource.
    \item[] Guidelines:
    \begin{itemize}
        \item The answer NA means that the paper does not involve crowdsourcing nor research with human subjects.
        \item Including this information in the supplemental material is fine, but if the main contribution of the paper involves human subjects, then as much detail as possible should be included in the main paper. 
        \item According to the NeurIPS Code of Ethics, workers involved in data collection, curation, or other labor should be paid at least the minimum wage in the country of the data collector. 
    \end{itemize}

\item {\bf Institutional Review Board (IRB) Approvals or Equivalent for Research with Human Subjects}
    \item[] Question: Does the paper describe potential risks incurred by study participants, whether such risks were disclosed to the subjects, and whether Institutional Review Board (IRB) approvals (or an equivalent approval/review based on the requirements of your country or institution) were obtained?
    \item[] Answer: \answerNA{} 
    \item[] Justification: Not included.
    \item[] Guidelines:
    \begin{itemize}
        \item The answer NA means that the paper does not involve crowdsourcing nor research with human subjects.
        \item Depending on the country in which research is conducted, IRB approval (or equivalent) may be required for any human subjects research. If you obtained IRB approval, you should clearly state this in the paper. 
        \item We recognize that the procedures for this may vary significantly between institutions and locations, and we expect authors to adhere to the NeurIPS Code of Ethics and the guidelines for their institution. 
        \item For initial submissions, do not include any information that would break anonymity (if applicable), such as the institution conducting the review.
    \end{itemize}

\end{enumerate}

\end{document}